\documentclass[10pt,reqno]{amsart}

\usepackage{amsthm,amsmath,amsfonts,amssymb}
\usepackage{bm}
\usepackage[dvipdfmx]{graphicx}
\usepackage{xcolor}
\usepackage{fullpage}
\usepackage{verbatim}

\usepackage{caption}
\usepackage{float}

\usepackage{tikz}
\usetikzlibrary{calc, positioning}
\usetikzlibrary{decorations.markings}
\usetikzlibrary{quotes,angles}
\usetikzlibrary{arrows.meta}
\usetikzlibrary{intersections}
\usetikzlibrary{graphs}

\usepackage{appendix}
\usepackage[sort&compress, numbers]{natbib}

\usepackage{hyperref}					
\hypersetup{colorlinks,					
	linkcolor=blue,
	citecolor=red}

\usepackage{amsaddr}
\usepackage{mathrsfs}

\newtheorem{theorem}{Theorem}
\newtheorem{corollary}{Corollary}
\newtheorem{lemma}{Lemma}
\newtheorem{proposition}{Proposition}

\theoremstyle{definition}

\newtheorem{remark}{Remark}

\def\R{\mathbb{R}}
\def\E{\mathbb{E}}

\def\N{\mathbb{N}}

\def\AA{\mathcal{A}}
\def\BB{\mathcal{B}}
\def\MM{\mathcal{M}}

\def\EE{\mathcal{E}}
\def\FF{\mathcal{F}}
\def\LL{\mathcal{L}}
\def\RR{\mathcal{R}}
\def\OO{\mathcal{O}}
\def\HH{\mathcal{H}}

\def\NN{\mathcal{N}}

\def\GP{\mathcal{GP}}
\def\LD{\mathscr{D}}
\def\PLDR{\mathcal{M}} 

\newcommand{\ol}{\overline}
\newcommand{\wt}{\widetilde}

\newcommand{\Cov}{\operatorname{Cov}}
\renewcommand{\d}{\mathrm{d}}

\def\llan{\left\langle}
\def\rran{\right\rangle}

\allowdisplaybreaks

\title{The Symmetry of Differential Neural Tangent Kernel \\ under the Orthogonal Group}
\author{ Bangti Jin\and Longjun Wu} 
\address{ Department of Mathematics, The Chinese University of Hong Kong, Shatin, N.T., Hong Kong, P.R. China }

\begin{document}

\begin{abstract}
The differential neural tangent kernel (DNTK) is a tool for analyzing the training dynamics of over-parameterized physics-informed neural networks. In this work, we investigate the symmetry of the infinite DNTK under an orthogonal group, for two‑layer and deep neural networks separately, based on the explicit expression of the DNTK and recursive formula, respectively. The results are for all activations with or without a bias term, and shed insight into the structure of the infinite-width DNTK.
Then we prove that the integral operators over the unit sphere induced by operators from the same group orbit share the spectrum, and their reproducing kernel Hilbert spaces differ only by an orthogonal transformation, and that the operator‑fixed DNTK exhibits zonal symmetry on the unit sphere for symmetric operators. Thus the eigenvalue decay rate of the associated integral operator can be derived for shallow networks with RePU activation and zero‑bias initialization. 
\\
{\bf Key words}: differential neural tangent kernel, symmetry, spectral analysis, orthogonal group
\end{abstract}
\maketitle

\section{Introduction} 
The neural tangent kernel (NTK) \cite{Jacot:2018NTK} is a powerful tool for analyzing the training dynamics of wide neural networks. The spectral properties of the infinite-width limit kernel and the associated Gram matrix play a fundamental role in the study of approximation \cite{Bach:2017breaking}, generalization \cite{AroraDu:2019}, and optimization \cite{Du:2019deep} of the over-parameterized networks. The invariance of the NTK under orthogonal transformations is instrumental in the theoretical analysis of training of neural networks \cite{Cho:2009Kernel,Geifman:2020Similarity}. Cho et al \cite{Cho:2009Kernel} derive an explicit expression for the infinite NTK of two-layer networks with the RePU activation (without a bias term) built upon the symmetry of the NTK and homogeneity of RePU. Geifman et al. \cite{Geifman:2020Similarity} prove that the NTK of shallow ReLU networks without bias shares the eigenfunctions and eigenvalue decay rate as the Laplace kernel on the unit sphere based on the zonal property of the NTK. 
This line of analysis was later extended to deep neural networks (DNNs) by \cite{ChenXu:2020} and \cite{BiettiBach:2021}, which shows that the reproducing kernel Hilbert space (RKHS) of the deep NTK coincide with that of the shallow network, using the recursive structure and the arc‑cosine formulation of the shallow NTK. Karhadkar et al. \cite{Karhadkar:2024Bounds} derived bounds on the smallest eigenvalue of the NTK in terms of the directional distance between data points, using the symmetry of the NTK. 

Recently there is much interest in analyzing the optimization behavior of physics-informed neural networks (PINNs) \cite{Raissi:2019PINN}, motivated by the empirical successes across scientific problems \cite{HuJinZhou:2024,JinCai:2021,JinLiQuan:2024,CenJinLi:2025}. The differential neural tangent kernel (DNTK) \cite{JinWu:2026DNTK} was recently proposed to analyze the optimization of wide PINNs. It generalizes the NTK \cite{Jacot:2018NTK,Chizat:2019Lazy} to PINNs and provides a unified view of the NTK for PINNs. Several global convergence results of wide shallow PINNs for linear PDEs have been obtained within the DNTK framework, including deterministic algorithms \cite{Gao:2023GD,Xu:2024IGD,Xu:2024NGD} and stochastic algorithms \cite{JinWu:2025}. 
Suitable kernel matrices play a crucial role in analyzing the training in the lazy regime  \cite{Gao:2023GD,Xu:2024IGD,Xu:2024NGD,JinWu:2025}: the positivity of the infinite DNTK matrix provides the convergence guarantee of wide PINNs training for linear PDEs; the smallest eigenvalue governs the convergence rate, while the extreme eigenvalues influence the network width and the magnitude of the training rate. 
Further, the eigenvalue distribution dictates training efficiency and effectiveness: the high-frequency components of the solution are  difficult to learn for PINNs \cite{Wang:2022When,WangWang:2021eigen}. 
Thus the study of the DNTK is not only essential for revealing the training dynamics of PINNs but also for guiding the development of enhancement strategies. Unfortunately, the analysis of the DNTK is very involved due to the presence of multiple operators in PDEs. Recently, by exploiting the symmetry of the DNTK induced by the Laplace and identity operators, we derive explicit estimates for the extreme eigenvalues of the kernel matrix for shallow PINNs for the Poisson equation \cite{JinWu:2026spec}. 

In this work, we investigate the symmetry of the DNTK  under the orthogonal group and its implication for spectral analysis. We prove that the infinite DNTK for DNNs and the covariance of 
the infinite-width network output are invariant under the action of the orthogonal group. 
This holds for any activation function (provided that the infinite-width limit exists), irrespective of the presence of a bias term. Compared with the standard NTK, the main challenge of the DNTK lies in characterizing the symmetry, since the domain of the DNTK involves differential operators. Moreover, the DNTK with one fixed operator generally lacks the zonal property (except for some highly symmetric operators), which prevents directly transferring the symmetry results for the NTK. Nonetheless, the DNTK is unchanged when the orthogonal matrix acts simultaneously on the Euclidean points and differential operators. This group action on operators, 
which we interpret by identifying operators with their symbols, is rooted in the classical 
theory of group actions on polynomial rings in invariant theory \cite{Neusel:2007invariant}.

More specifically, we prove the symmetry of the DNTK for shallow neural networks (SNNs) and DNNs, separately. 
In the shallow case, we prove the symmetry for all components of the DNTK (including the covariance function) 
using its explicit expression, which arises from the weight and bias of the inner and outer layers. In the deep case, we prove the symmetry by mathematical induction on the network depth, using the recursive formulas for the covariance function and DNTK, in two different ways using different perspectives on the group action. The result for SNNs is more refined than that for DNNs and is not subsumed by the DNN analysis, and the employed analysis techniques in the proofs are completely distinct. In essence, the symmetry results stem from the invariance of the standard Gaussian vector under the orthogonal group. 
The symmetry of the DNTK implies that the integral operators on the unit sphere induced by operators from the same orbit under the group action are pairwise conjugate via an orthogonal matrix and hence share the spectrum, and their eigenfunctions and the associated RKHSs coincide up to the action of an orthogonal matrix.
For operators that reside in the invariant ring, e.g., the identity, the radial derivative, the Laplace operator and the ring they generate, the operator-fixed DNTKs are zonal on the unit sphere and thus computable for RePU SNNs with zero-bias initialization. Moreover, by exploiting the symmetry, we prove that the sum of operator-fixed DNTKs induced by the first order derivatives is also zonal and thus computable, which enables embedding the RKHS of the first order derivatives into a known one.

The rest of the article is organized as follows. In Section \ref{S:Preliminaries}, we review the infinite limit of the networks output and the DNTK, including covariance function and infinite DNTK for SNNs and the recursive formulas for DNNs, and the basics about the group action. In Section \ref{S:Symmetry}, we describe the group action of matrix on DNTK. Then, we prove the main results, namely, the symmetry of covariance function and infinite DNTK. In Section \ref{S:Conseq}, we discuss the property of the induced integral operators from the symmetry of DNTK. Throughout, the notation $[n]$ denotes the set $\{1,\ldots,n\}$ for an positive integer $n$ and \(\N=\{0,1,2,\cdots\}\) the set of all nonnegative integers.

\section{Preliminaries} \label{S:Preliminaries}

\subsection{Preliminaries of DNTK} The DNTK generalizes  the NTK to PINNs, and is useful for analyzing the optimization of wide PINNs. Now we describe the infinite-width limit of the NN output and the DNTK. See \cite{JinWu:2026DNTK} for more details.  Let \( f^{(L)} : \mathbb{R}^{d_0} \to \mathbb{R}^{d_L} \) with the input dimension $d_0=d$ and parameters \(\theta\) 
be a fully-connected NN with \(L-1\) hidden layers for some integer \(L\geq 1\), given recursively by 
\begin{equation}\label{E:L-layerNN}
\begin{aligned}
	f^{(1)}(x;\theta) & = W^{(1)} x + \gamma b^{(1)},\\
	f^{(\ell+1)}(x;\theta) & = \frac{1}{\sqrt{d_\ell}} W^{(\ell+1)} 
	\sigma \left( f^{(\ell)}(x;\theta) \right) + \gamma b^{(\ell+1)}, \quad \ell=1,\ldots, L-1,
\end{aligned}
\end{equation}
with the weight matrices $ W^{(\ell)} \in \mathbb{R}^{d_\ell \times d_{\ell-1}}$ and the bias vectors 
$b^{(\ell)} \in \mathbb{R}^{d_\ell}$. The nonlinear activation function \( \sigma : \mathbb{R} \to \mathbb{R} \) 
applies entrywise to vectors, and the weight \( \gamma \geq 0 \) of the bias is non-learnable, 
with \(\gamma=0\) corresponding to the case without a bias term. 

Let \(\FF_d:=\FF(\R^d)\) be the space of all scalar functions on \(\R^d\) and
\(\LD_d := \LD_d(\FF_d)\) the space of all linear differential operators
acting on functions from \(\R^d\) to \(\R\) with the coefficients in \(\FF_d\). 
Then, the DNN output \(f^{(L)}\) is a function on the product space \(\PLDR_d := \LD_d\times \R^d\)
\begin{equation}\label{E:DiffOut}
	f^{(L)}:  \PLDR_d \to \R^{d_L}, \quad
	  (\AA, x) \mapsto \AA f^{(L)}(x;\theta) 
	  := \AA_z f^{(L)}(z;\theta) \Big|_{z=x},
\end{equation}
with the operator $\mathcal{A}$ acting componentwise on the network output \(f^{(L)}\)\footnote{If the activation \(\sigma\) 
is not smooth but only differentiable up to some order, then we should accordingly restrict the maximum order of 
the differential operators in the domain of network \(\PLDR_d\), while all other arguments keep unchanged.}. 
The point \((\AA, x)\in \PLDR_d\), denoted by \(x^\AA\), is called the \emph{\(\AA\)-marked point \(x\)}. 
Given \(\mu,\nu\in[d_L]\) and $\AA,\BB\in \PLDR_d$, the DNTK is defined on the product space \(\PLDR_d\) and given by 
\[ \Theta^{(L)}_{\mu\nu}: \PLDR_d\times \PLDR_d\to \R, \quad 
	(x^\AA, y^\BB) \mapsto \llan \nabla_\theta \mathcal{A}f^{(L)}_\mu(x;\theta), \nabla_\theta\mathcal{B}f^{(L)}_\nu(y;\theta)  \rran. \]
Suppose that the activation function \(\sigma\) satisfies some mild growth assumption 
and all the parameters are initialized to the independent and identically distributed (i.i.d.) standard Gaussian distribution. 
Then, as the widths of all hidden layers tend to infinity, the components of the NN output \(f^{(L)}\) converge 
in distribution to a centered i.i.d. Gaussian process on the product space \(\PLDR_d\) with  the covariance function \(\Sigma^{(L)}_\infty\), 
and the DNTK \(\Theta^{(L)}_{\mu\nu}\) for all \(\mu,\nu\in[d_L]\) converge in probability to a deterministic kernel
\(\Theta^{(L)}_{\infty}\delta_{\mu\nu}\) on \(\PLDR_d\):
\begin{gather*}
	\lim_{d_1,\ldots d_{L-1}\to\infty} f_{\mu}^{(L)}\overset{d}{\longrightarrow} f_\infty^{(L)}
		= \mathcal{GP}\left(0,\Sigma^{(L)}_{\infty}\right), \quad \mu \in [d_L], \\
	\lim_{d_1,\ldots,d_{L-1}\to\infty} \Theta^{(L)}_{\mu\nu} \overset{p}{\longrightarrow} 
		\Theta_{\infty}^{(L)}\delta_{\mu\nu}, \quad \mu,\nu \in [d_L],
\end{gather*}
where $\delta_{\mu\nu}$ denotes the Kronecker delta symbol, i.e., $\delta_{\mu\nu} =1$ for  $\mu = \nu$ and  
$\delta_{\mu\nu} =0$ otherwise. 
Note that the assertion still holds if there is no hidden layer (\(L=1\)), namely, the components 
\(f_{\mu}^{(1)} =  f_{\infty}^{(1)}\) are already centered i.i.d. Gaussian process and 
\(\Theta^{(1)}_{\mu\nu} = \Theta^{(1)}_{\infty}\delta_{\mu\nu}\) are deterministic. 
Let \(\mathcal{N}(0,1)\) and \(\mathcal{N}(0,I_d)\) represent the standard Gaussian variable and 
standard Gaussian vector in \(\mathbb{R}^d\), respectively.  In the case of two-layer NNs (\(L=2\)), the covariance function 
\(\Sigma^{(2)}_{\infty}:\PLDR_d\times\PLDR_d\to \R\) is given by
	\begin{equation}\label{E:2-Cov}
		\Sigma^{(2)}_{\infty}\left(x^\AA,y^\BB\right) = \E_{u\sim \mathcal{N}(0,I_d), v\sim \mathcal{N}(0,1)}
		\left[ \mathcal{A}\sigma\left(u^\top x +  \gamma v\right)\mathcal{B}\sigma\left(u^\top y +  \gamma v\right) \right] 
	 	+ \gamma^2 a(x)b(y),
	\end{equation} 
where the operators $\AA$ and $\BB$ with zeroth order terms $a(z)$ and $b(z)$ respectively 
act on the input variable as in \eqref{E:DiffOut}; 
while the infinite DNTK \(\Theta^{(2)}_{\infty}:\PLDR_d\times\PLDR_d\to \R\) is given by
	\begin{equation}\label{E:2-DNTK}
	\begin{aligned}
		\Theta^{(2)}_{\infty}\left( x^\AA,y^\BB \right) 
		 =& \E_{u\sim \mathcal{N}(0,I_d), v\sim \mathcal{N}(0,1)}
		\left[ \mathcal{A}\sigma\left(u^\top x + \gamma v\right)
		\mathcal{B}\sigma\left(u^\top y + \gamma v\right) \right] + \gamma^2 a(x)b(y) \\
		&  + \E_{u\sim \mathcal{N}(0,I_d), v\sim \mathcal{N}(0,1)}
		\left[ \sum_{k=1}^{d}\mathcal{A}\left(\sigma'\left(u^\top x + \gamma v\right)x_k\right)
		\mathcal{B}\left(\sigma'\left(u^\top y + \gamma v\right)y_k\right)  \right] \\
		& + \gamma^2 \E_{u\sim \mathcal{N}(0,I_d), v\sim \mathcal{N}(0,1)}
		\left[ \mathcal{A}\sigma'\left(u^\top x + \gamma v\right)
		\mathcal{B}\sigma'\left(u^\top y + \gamma v\right)  \right],
	\end{aligned}
	\end{equation}
with \(x_k\) and \(y_k\) being the \(k\)th component of the inputs \(x\in\Omega\) and \(y\in\partial\Omega\), respectively. 
The four summands of \(\Theta^{(2)}_{\infty}\) arise from the gradient with respect to 
the outer weight, the outer bias, inner weight and inner bias parameters, respectively.
To describe the recursive formulas of the covariance function and the DNTK for DNNs, we need the 
\emph{Fa\`a di Bruno's formula}. Given a function \(h = h(z_1,\ldots,z_d):\R^d \to \R\) 
and a univariate function \(f:\R\to \R\), then for any multi-index 
\(\alpha = (\alpha_1, \ldots, \alpha_d)\), we have the following chain rule:
\begin{equation}\label{E:Faa-alpha}
	\partial^\alpha f(h)
	= \sum_{\pi \in \Pi_\alpha} f^{(|\pi|)}(h) \prod_{A \in \pi} \partial^A h
	= \sum_{\pi \in \Pi_\alpha} f^{(|\pi|)}(h) 
	  \prod_{A \in \pi} \frac{\partial^{|A|} h}{\prod_{j \in A} \partial z_j}, 
\end{equation}
where \(\Pi_\alpha\) is the set of all partitions of the set \(S_\alpha := 
\left\{ 1_{1},\ldots,1_{\alpha_1},2_{1},\ldots,2_{\alpha_2},\cdots, d_{1},\ldots,d_{\alpha_d} \right\}\)
with \(z_{i_j} = z_i\) in formula \eqref{E:Faa-alpha} for \(i\in[d]\) and \(j\in[\alpha_i]\) 
(if \(\alpha_i = 0\), then \([\alpha_i] = \varnothing\)); see \cite{JinWu:2026DNTK} for the detailed derivations.
For a linear operator \(\AA\in\LD_d\), let \(I_\AA\) denote the finite set of all multi-indices in \(\AA\)
and \(D\AA\) be the vector of \(d\) dimension consisting of coefficients of \(\AA\) corresponding to the first 
order indices \(\partial_{z_1},\ldots,\partial_{z_d}\), namely, 
\begin{equation}\label{E:LinDif}
	\AA = \sum_{\alpha \in I_\AA} a_\alpha \partial^{\alpha} \text{ with } a_\alpha\in \MM_d, \quad
	D\AA = [a_1, \cdots, a_d]^\top \text{ with } a_k := a_{\partial_{z_k}} \text{ for } k\in[d].
\end{equation}
Note that \(a_\alpha\) are not identically zero for \(\alpha\in I_\AA\) by convention, but \(a_k\) may be zero for \(k\in[d]\). Then  
the recursive formula for the covariance function \(\Sigma^{(L)}_\infty:\PLDR_d\times\PLDR_d\to \R\) 
is given inductively for \(\ell=1,\cdots,L-1\) by 
	\begin{equation}\label{E:deep-Cov}
		\begin{aligned}
			\Sigma^{(1)}_{\infty}\left( x^\AA,y^\BB \right) 
			& = \left(a(x)x+D\AA(x)\right)^\top \left(b(y)y+D\BB(y)\right) + \gamma^2 a(x)b(y), \\
			\Sigma^{(\ell+1)}_{\infty}\left( x^\AA,y^\BB \right) 
			 =& \E_{f^{(\ell)}_\infty } \left[ \left(\sum_{\alpha \in  I_\AA } a_\alpha (x) \sum_{\pi \in \Pi_\alpha} 
				\sigma^{(|\pi|)}\left( f^{(\ell)}_\infty  \left( x \right) \right) 
				\prod_{A \in \pi} f^{(\ell)}_\infty \left(x^{\partial^A}\right) \right) \cdot \right. \\
			& \left. \left( \sum_{\beta \in  I_\BB } b_\beta (y) \sum_{\lambda \in \Pi_\beta} 
				\sigma^{(|\lambda|)}\left( f^{(\ell)}_\infty  \left( y \right) \right) 
				\prod_{B \in \lambda} f^{(\ell)}_\infty \left(y^{\partial^B}\right) \right) \right]
			 	+ \gamma^2 a(x)b(y),
		\end{aligned}
	\end{equation}
where \(\AA = \sum_{\alpha \in  I_\AA } a_\alpha (z) \partial^{\alpha}\) and 
\(\BB = \sum_{\beta \in  I_\BB } b_\beta (z) \partial^{\beta}\)  
with the zeroth order terms \(a(z)\) and \(b(z)\), respectively. 
For the operator \(\AA = \sum_{\alpha \in I_\AA} a_\alpha \partial^{\alpha}\), define 
\begin{equation}\label{E:AA_sig(f_inf)}
\begin{aligned}
	\AA \sigma \left( f^{(\ell)}_{\infty}(x) \right) 
	  & = \sum_{\alpha \in  I_\AA } a_\alpha (x) \sum_{\pi \in \Pi_\alpha} 
		\sigma^{(|\pi|)}\left( f^{(\ell)}_\infty  \left( x \right) \right) 
		\prod_{A \in \pi} f^{(\ell)}_\infty \left(x^{\partial^A}\right).
\end{aligned}	
\end{equation}
The recursive formula for the infinite DNTK \(\Theta^{(L)}_\infty:\PLDR_d\times\PLDR_d\to \R\) 
is given inductively for \(\ell=1,\cdots,L-1\) by 
	\begin{equation}\label{E:deep-DNTK}
		\begin{aligned}
			\Theta^{(1)}_{\infty}\left( x^\AA,y^\BB \right) 
				& = \Sigma^{(1)}_{\infty}\left( x^\AA,y^\BB \right), \\
			\Theta^{(\ell+1)}_{\infty}\left( x^\AA,y^\BB \right)
				& = \Sigma^{(\ell+1)}_{\infty}\left( x^\AA,y^\BB \right) 
				    + \sum_{\alpha\in I_\AA }\sum_{\beta\in I_\BB } a_\alpha(x)b_\beta(y)\cdot \\
				& \qquad \sum_{\xi\leq\alpha}\sum_{\zeta\leq\beta}\binom{\alpha}{\xi}\binom{\beta}{\zeta}
					\dot{\Sigma}^{(\ell+1)}_{\infty}\left( x^{\partial^\xi},y^{\partial^\zeta} \right)
					\Theta^{(\ell)}_{\infty}\left( x^{\partial^{(\alpha-\xi)}},y^{\partial^{(\beta-\zeta)}} \right),
		\end{aligned}	
	\end{equation}
where the kernel \(\dot{\Sigma}^{(\ell+1)}_{\infty}\) takes the form as the expectation term in 
\(\Sigma^{(\ell+1)}_{\infty}\) by replacing the nonlinearity \(\sigma\) by \(\sigma'\):
	\begin{equation}\label{E:dotCov}
		\dot{\Sigma}^{(\ell+1)}_{\infty} \left( x^\AA,y^\BB \right) 
		= \E_{f^{(\ell)}_\infty} \left[ \AA \sigma' \left( f^{(\ell)}_{\infty}(x) \right)
		\BB \sigma' \left( f^{(\ell)}_{\infty}(y) \right) \right].
	\end{equation}
Similar to the shallow case, the two summands of \(\Sigma^{(\ell+1)}_{\infty}\) arise from the gradient 
with respect to the parameters at \((\ell+1)\)-th layer and layers below \((\ell+1)\)-th layer respectively.
By fixing the operators \(\AA=\BB\) in the domain of \(\Theta^{(L)}_\infty\), we obtain
a kernel on the Euclidean space \(\Theta^{(L)}_\infty[\AA,\AA]:\R^d\times\R^d \to \R\), 
referred to as the \emph{operator-fixed DNTK} or \(\AA\)-\emph{NTK} to explicitly indicate the operator $\AA$.
This kernel is the infinite-width limit of the DNTK induced by the differential of the NN \(\AA f^{(L)}:\R^d\to\R^{d_L}\).
Meanwhile, the limit \(\AA f^{(L)}_\infty:\R^d\to\R^{d_L}\) of \(\AA f^{(L)}\) is a Gaussian process on the 
Euclidean space \(\R^d\) with the covariance function given by \(\Sigma^{(L)}_\infty[\AA,\AA]:\R^d\times\R^d \to \R\).

\subsection{Group action}

Let $G$ be a group and $X$ a set. A \textbf{(left) action} of $G$ on $X$ is a map
\begin{equation}\label{E:G-action}
	G \times X \longrightarrow X, \quad (g, x) \longmapsto  g \cdot x,
\end{equation}
such that for all \(x\) there hold 
(i) \(e \cdot  x = x\) for the identity $e\in G$;
(ii) $ (g h)\cdot  x =  g \cdot (h \cdot x)$ for all $g, h \in G$.
\(X\) is called a \(G\)-\textbf{set} and sometimes the group action \(g\cdot x\) is denoted by \(g(x)\) or simply \(gx\).
Note that each element \(g\in G\) induces a bijective map from \(X\) to itself with the inverse map given by \(g^{-1}\).
Let \(X^*\) be the \textbf{dual set} of \(X\), i.e., the space consisting of all maps from \(X\) to \(\R\).
The action of \(G\) on \(X\) induces an action of \(G\) on \(X^*\) as follows.
Given any element \(\phi:X\to \R\) in \(X^*\) and \(g\in G\), the action of \(g\) at \(\phi\) is given by
\( (g,\phi)\mapsto g\cdot \phi \in X^*\) such that 
\begin{equation}\label{E:dual_set}
	g\cdot \phi:X \to \R, \quad g\cdot \phi(x) := \phi \circ g^{-1} (x) = \phi(g^{-1}\cdot x),
\end{equation}
where \(\circ\) represents the composition of functions. 
This map indeed defines an action on \(X^*\): (i) for the identity element, \(e\cdot \phi = \phi\) since 
\(e\cdot \phi(x) = \phi(e^{-1}\cdot x) = \phi(x)\) for all \(x\in X\); 
(ii) for any \(g,h\in G\), we have
\( g\cdot(h\cdot\phi) = g \cdot \left(\phi\circ h^{-1}\right) = 
\phi\circ h^{-1} \circ g^{-1} = \phi\circ (gh)^{-1} = (gh)\cdot \phi. \)

Let \(X\) be a \(G\)-set and \(Y\subset X\) a subset of \(X\).
If \(Y\) is closed under the group action, namely, for all \(g\in G\) and \(y\in Y\) there holds \(g\cdot y \in Y\), 
then the set \(Y\) is itself a \(G\)-set with the group action inherited from the action of \(G\) on \(X\), and \(Y\) is said to be a \(G\)-\textbf{subset} of \(X\). The \textbf{orbit} of $x\in X$ is a subset of \(X\) given by 
\begin{equation}\label{E:G-orbit}
	G\cdot x:= \{g\cdot x: g\in G\} \subset X.
\end{equation}
Any orbit is a \(G\)-subset and \(X\) is a disjoint union of all its orbits.
The \textbf{invariant set} of \(X\) under the action of \(G\), denoted by \(X^G\), is defined by 
\begin{equation}\label{E:G-invariant}
	X^G:= \{ x\in X: g\cdot x = x, \forall g\in G \}.
\end{equation}
Finally, the Cartesian product of \(G\)-sets is again a \(G\)-set. Specifically, let \(X\) and \(Y\) be two \(G\)-sets.
Then the group \(G\) acts on the product set \(X\times Y\)  componentwise:
\begin{equation}\label{E:G-product}
	G \times (X\times Y) \longrightarrow X\times Y, \quad g\cdot (x,y) \longmapsto  (g\cdot x, g\cdot x).
\end{equation}

Next we specify the action of the orthogonal group on the domain of DNTK. See \cite{Hungerford:2012Algebra} for the algebraic aspect. Recall that the domain of DNTK is the product space \(\PLDR_d := \LD_d\times \R^d\). Next we illustrate the group action on these factors. The first one is the Euclidean space acted by the matrix through multiplication.

\begin{proposition}\label{Exm:G-Euclidean}
Let \(X=\R^d\) be the Euclidean space and 
\(G\subset \mathrm{GL}(d;\R)\) a subgroup of the general linear group (the set of all invertible matrices). 
Let \(\FF_d=\FF(\R^d)\) be the set of all maps from \(\R^d\) to \(\R\). 
\begin{itemize}
\item[(i)] The group \(G\) acts on \(\R^d\) by the multiplication between matrices and vectors: \(g\cdot x = gx\)
for \(g\in G\) and \(x\in \R^d\). This action satisfies condition (i) 
since the identity matrix acts trivially on vectors and condition (ii) by the associative
law of matrix multiplication. 

\item[(ii)] Note that \(\FF_d\) can be viewed as the dual set of the Euclidean space \(\R^d\) 
and thus has an action by \(G\) through formula \eqref{E:dual_set}. 
Moreover, \(\FF_d\) is a unital commutative ring with the identity element being the constant function \(1\) 
and the addition/multiplication given by the addition/multiplication between functions.
Note that for any \(g\in G\subset \mathrm{GL}(d;\R)\) and all maps \(\phi,\psi\in \FF_d\), the 
action of \(g\) at the summation and product functions \(\phi+\psi, \phi\psi\) satisfies 
\begin{align*}
	g\cdot (\phi + \psi) & = (\phi + \psi)\circ g^{-1} = \phi\circ g^{-1} + \psi\circ g^{-1}
	=  g\cdot \phi + g\cdot \psi, \\
	g\cdot (\phi\psi) & = (\phi\psi)\circ g^{-1} = (\phi\circ g^{-1})(\psi\circ g^{-1}) 
	=  (g\cdot \phi)(g\cdot \psi).
\end{align*}
Therefore, the group action is consistent with the ring structure of \(\FF_d\). Thus,
the group \(G\) is said to acts on the ring \(\FF_d\) and \(\FF_d\) is said to be a \(G\)-\textbf{ring}.
\end{itemize}
\end{proposition}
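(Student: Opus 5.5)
The plan is to verify the two claims of the proposition directly from the definitions of a left action \eqref{E:G-action} and of the induced dual action \eqref{E:dual_set}. No earlier result beyond these definitions is needed.

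For part (i), I will first check that the map \(G\times\R^d\to\R^d\), \((g,x)\mapsto gx\), is well defined. This holds because \(g\) is a \(d\times d\) matrix and \(x\) is a column vector, so \(gx\in\R^d\). For axiom (i), the identity of \(G\subset \mathrm{GL}(d;\R)\) is \(I_d\), and \(I_dx=x\). For axiom (ii), associativity of matrix multiplication gives \((gh)x=g(hx)\) for all \(g,h\in G\) and \(x\in\R^d\). Nothing here uses invertibility beyond the fact that \(G\) is a group.

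For part (ii), I will identify \(\FF_d\) with the dual set \((\R^d)^*\) of all maps \(\R^d\to\R\). The general computation following \eqref{E:dual_set} then already shows that \(g\cdot\phi:=\phi\circ g^{-1}\) is a \(G\)-action on \(\FF_d\). Here \(g^{-1}\) exists because \(G\subset\mathrm{GL}(d;\R)\), and this is the one place invertibility is essential. The ring structure of \(\FF_d\) is pointwise: \((\phi+\psi)(x)=\phi(x)+\psi(x)\) and \((\phi\psi)(x)=\phi(x)\psi(x)\), with unit the constant function \(1\). Commutativity and the ring axioms are inherited pointwise from \(\R\).

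Compatibility with the ring structure is then checked by evaluating at an arbitrary \(x\):
\[
\bigl(g\cdot(\phi+\psi)\bigr)(x)=\phi(g^{-1}x)+\psi(g^{-1}x)=\bigl(g\cdot\phi+g\cdot\psi\bigr)(x),
\]
\[
\bigl(g\cdot(\phi\psi)\bigr)(x)=\phi(g^{-1}x)\,\psi(g^{-1}x)=\bigl((g\cdot\phi)(g\cdot\psi)\bigr)(x).
\]
I will also record that \(g\cdot 1=1\circ g^{-1}=1\), so each \(g\) acts by a unital ring automorphism. Its inverse is the action of \(g^{-1}\), by axiom (ii) applied to \(gg^{-1}=e\).

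There is no real obstacle. The only point requiring care is that the dual action must use \(g^{-1}\) rather than \(g\) so that the composition law \(g\cdot(h\cdot\phi)=(gh)\cdot\phi\) holds. Using \(\phi\circ g\) instead would give a right action, since \(\phi\circ h\circ g=\phi\circ(hg)\). I will note this explicitly.
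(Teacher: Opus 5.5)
Your proposal is correct and follows the paper's argument essentially verbatim: part (i) via $I_d x = x$ and associativity of matrix multiplication, and part (ii) via the general dual-set action $\phi\mapsto\phi\circ g^{-1}$ together with a pointwise check that this action preserves sums and products. Your extra remarks, that $g\cdot 1 = 1$ (so each $g$ acts by a unital ring automorphism) and that using $\phi\circ g$ would give a right action, are correct small additions that the paper leaves implicit.
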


The next one is the polynomial ring acted by the general linear group, 
which comes from invariant theory and we follow the approach in \cite[Chapter 3]{Neusel:2007invariant}.
\begin{proposition}\label{Exm:G-Poly_Ring}
Let \(V\) be a real vector space of dimension \(d\) and \(G\subset \mathrm{GL}(V)\) 
a subgroup of the automorphism group of \(V\).
Let \(\mathrm{PR}(d;\R) = \R[\xi_1,\ldots,\xi_d]\) be the ring of polynomials in \(d\) indeterminates \(\{\xi_1,\ldots,\xi_d\}\) 
over the real field \(\R\). Let \(\mathrm{PR}_{(n)} = \R[\xi_1,\ldots,\xi_d]_{(n)}\subset \R[\xi_1,\ldots,\xi_d]\) 
be the set of all homogeneous polynomials 
of degree \(n\) for \(n\in \mathbb{N} = \{0,1,2,\ldots\}\). Then each \(\mathrm{PR}_{(n)}\) is an additive
commutative group. Any element \(p\in \mathrm{PR}(d;\R)\) can be uniquely written as a finite sum 
\[ p = \sum_{i\in I} p_i,\quad \text{with } p_i\in \mathrm{PR}_{(i)}. \]
Moreover, the product between polynomials satisfies \(p_ip_j\in \mathrm{PR}_{(i+j)}\)
for all \(i,j\in \mathbb{N}\) and \(p_i\in \mathrm{PR}_{(i)},p_j\in \mathrm{PR}_{(j)}\). In other words, 
the polynomial ring \(\mathrm{PR}(d;\R) = \R[\xi_1,\ldots,\xi_d]\) is an \textbf{\(\mathbb{N}\)-graded ring}: 
\[ \mathrm{PR}(d;\R) = \R[\xi_1,\ldots,\xi_d] = \bigoplus_{n\in \mathbb{N}} \R[\xi_1,\ldots,\xi_d]_{(n)}
 = \bigoplus_{n\in \mathbb{N}} \mathrm{PR}_{(n)}. \]
Fix a basis of the vector space \(V = \mathrm{span}\{e_1,\ldots,e_d\}\) and let \(\{\xi_1,\ldots,\xi_d\}\) 
be the dual basis of \(\{e_1,\ldots,e_d\}\) in the linear functional space \(V^*\), i.e., 
$\xi_i(e_j) = \delta_{ij}.$ For any element \(g\in G\), the action of \(g\) at \(\xi_k\) for \(k\in[d]\) is defined by
\begin{equation}\label{E:G_PR_basis}
	g\cdot \xi_k  = \xi_k\circ g^{-1}:V\to \R, \quad v\mapsto  \xi_k(g^{-1}v).
\end{equation}
Moreover, there holds
\[ g\cdot \xi_k(\mu_1v_1 + \mu_2v_2) = \mu_1g\cdot \xi_k(v_1) + \mu_2g\cdot \xi_k(v_2),\quad 
\forall \mu_1,\mu_2\in\R, v_1,v_2\in V.
\]
Thus \(g\cdot \xi_k\) lies in the dual space \(V^* = \mathrm{span}\{\xi_1,\ldots,\xi_d\}\)
and is a linear sum of the basis \(\{\xi_1,\ldots,\xi_d\}\). 
This action can be extended linearly to the whole space \(V^*\) and
multiplicatively to the polynomial ring \(\mathrm{PR}(d;\R)\). We show the latter assertion directly.
Given a monomial \(\xi^\alpha = \xi_1^{\alpha_1}\cdots \xi_d^{\alpha_d}\), with \(\xi=(\xi_1,\ldots,\xi_d)\)
and \(\alpha = (\alpha_1,\ldots,\alpha_d)\) is a multi-index, the action of \(g\) on \(\xi^\alpha\) is given by 
\[ g\cdot \xi^\alpha = g\cdot (\xi_1^{\alpha_1}\cdots \xi_d^{\alpha_d})
:= (g\cdot \xi_1)^{\alpha_1}\cdots (g\cdot \xi_d)^{\alpha_d}. \]
Since each \(g\cdot \xi_i\) is a linear sum of \(\{\xi_1,\ldots,\xi_d\}\), we have 
\(g\cdot \xi^\alpha \in \mathrm{PR}(d;\R)\) and \(g\cdot \xi^\alpha\) has the same degree 
\(|\alpha|\) as the monomial \(\xi^\alpha\). For a generic element \(p(\xi)=\sum_{\alpha\in I}a_\alpha \xi^\alpha \) 
in \(\mathrm{PR}(d;\R)\), where \(I\) is a finite set of indices and \(a_\alpha\in \R\) is 
the coefficients of the polynomial, the action of an element \(g\) on \(p(\xi)\) is given by 
\begin{equation}\label{E:G-Poly_Ring}
	g\cdot p(\xi) = g\cdot \sum_{\alpha\in I}a_\alpha \xi^\alpha 
	= \sum_{\alpha\in I}a_\alpha (g\cdot \xi^\alpha).
\end{equation}
This action of \(G\) on the polynomial ring \(\mathrm{PR}(d;\R)\) is well-defined and 
actually a \textbf{graded ring homomorphism}: for all \(g\in G\) and \(p,q \in \mathrm{PR}(d;\R)\),
\begin{equation}\label{E:gra-ring-hom}
	g\cdot(p+q) = g\cdot p + g\cdot q, \quad g\cdot(pq) = (g\cdot p)(g\cdot q), \quad 
	G\cdot \mathrm{PR}_{(n)}\subset \mathrm{PR}_{(n)}, \quad  \forall n\in \mathbb{N}.
\end{equation}  	
\end{proposition}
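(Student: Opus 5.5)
The claim to establish is that the action \eqref{E:G-Poly_Ring} is well defined and is a graded ring homomorphism, as recorded in \eqref{E:gra-ring-hom}. The cleanest route is to identify the action with substitution. For \(g\in G\) let \(M_g=(m_{ki})\) be the matrix defined by \(g\cdot\xi_k=\sum_{i}m_{ki}\xi_i\); this exists because \(g\cdot \xi_k=\xi_k\circ g^{-1}\) is a linear functional on \(V\). Concretely, \(M_g\) is the matrix of \(g^{-1}\) in the basis \(\{e_j\}\), read off from \(\xi_k(g^{-1}e_j)\).

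\emph{Well-definedness.} Every \(p\in\mathrm{PR}(d;\R)\) has a unique expansion \(p=\sum_\alpha a_\alpha\xi^\alpha\) in the monomial basis. Hence formula \eqref{E:G-Poly_Ring}, combined with the definition of \(g\cdot\xi^\alpha\), assigns a unique element to each \(p\); this is the linear extension from a basis. By construction,
\[
g\cdot p(\xi)=p\Big(\sum_i m_{1i}\xi_i,\ldots,\sum_i m_{di}\xi_i\Big),
\]
that is, \(g\cdot p\) is the polynomial obtained by substituting the linear forms \(g\cdot\xi_k\) for \(\xi_k\). This identity holds first on monomials, by the definition of \(g\cdot\xi^\alpha\), and then on all of \(\mathrm{PR}(d;\R)\) by linearity. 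It is the key reformulation.

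\emph{Homomorphism properties.} Substitution of fixed elements of a commutative ring into polynomials is the evaluation homomorphism \(\R[\xi_1,\ldots,\xi_d]\to\R[\xi_1,\ldots,\xi_d]\). Additivity is immediate from linearity. For multiplicativity I would avoid invoking the universal property and check it directly on monomials:
\[
g\cdot(\xi^\alpha\xi^\beta)=g\cdot\xi^{\alpha+\beta}=\prod_k(g\cdot\xi_k)^{\alpha_k+\beta_k}=(g\cdot\xi^\alpha)(g\cdot\xi^\beta),
\]
using commutativity of the ring. Bilinear expansion of \(pq\) then gives \(g\cdot(pq)=(g\cdot p)(g\cdot q)\).

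\emph{Grading.} A product of \(|\alpha|\) linear forms is homogeneous of degree \(|\alpha|\); it is either zero or nonzero of that degree, and it is nonzero because \(M_g\) is invertible, though homogeneity is all that is needed. So \(g\cdot\mathrm{PR}_{(n)}\subset\mathrm{PR}_{(n)}\), since a linear combination of degree-\(n\) homogeneous polynomials is again homogeneous of degree \(n\).

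\emph{Group action and bijectivity.} Although not strictly demanded, this justifies calling the map an action on the ring and places \(g\cdot\) in \(\mathrm{Aut}\). We have \(e\cdot\xi_k=\xi_k\), so the identity acts trivially. Next, \(g\cdot(h\cdot\xi_k)=(gh)\cdot\xi_k\) follows from \(\xi_k\circ h^{-1}\circ g^{-1}=\xi_k\circ(gh)^{-1}\), exactly as for \eqref{E:dual_set}. This requires checking that the linear extension of the action on \(V^*\) agrees with the dual-set action on the linear forms \(h\cdot\xi_k\), which holds because \(\phi\mapsto\phi\circ g^{-1}\) is linear in \(\phi\). Since both sides of the compatibility are ring homomorphisms agreeing on the generators \(\xi_k\), they agree everywhere.

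The only genuinely delicate point is this generator-to-everything step: knowing that two ring homomorphisms agreeing on generators coincide, together with the consistency between the abstract linear-functional definition of \(g\cdot\xi_k\) and its polynomial expansion. Everything else is routine bookkeeping.
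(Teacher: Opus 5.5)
Your argument is correct. For well-definedness and the grading it coincides with the paper's inline construction: define \(g\cdot\xi_k=\xi_k\circ g^{-1}\), extend multiplicatively to monomials and linearly to polynomials, and observe that \(g\cdot\xi^\alpha\) is a product of \(|\alpha|\) linear forms.

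You differ in how the homomorphism and action properties are established. The paper only asserts them inside the proposition, deferring to invariant theory. It effectively justifies them afterwards in Lemma~\ref{L:G-PL_formula}, by regarding polynomials as functions on \(\R^d\), on which \(g\) acts by precomposition, \(g\cdot p=p\circ g^{-1}=p(g^{-1}\xi)\). Additivity and multiplicativity are then inherited from pointwise operations as in Proposition~\ref{Exm:G-Euclidean}(ii), and \(e\cdot p=p\), \(g\cdot(h\cdot p)=(gh)\cdot p\) are inherited from the dual-set action \eqref{E:dual_set}.

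You stay with formal polynomials instead. You check \(g\cdot(\xi^\alpha\xi^\beta)=(g\cdot\xi^\alpha)(g\cdot\xi^\beta)\) on monomials and extend bilinearly. You obtain the composition law by verifying it on the generators \(\xi_k\), using that an \(\R\)-algebra homomorphism out of \(\R[\xi_1,\ldots,\xi_d]\) is determined by its values there.

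Your route is self-contained and field-independent. It does not rely on identifying formal polynomials with polynomial functions, which the paper does tacitly and which is legitimate only because \(\R\) is infinite. The paper's route is shorter given Proposition~\ref{Exm:G-Euclidean}, and it directly produces the substitution formula \(p(g^{-1}\xi)\) used later to let \(G\) act on differential operators through their symbols. Your matrix \(M_g\) is the matrix of \(g^{-1}\), so it recovers exactly that formula.
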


The algebraic approach of the action of general linear group on the polynomial ring is intrinsic but inconvenient to use. Thus, we give an explicit formula of the group action through the view of group action on the dual set \eqref{E:dual_set}. 
First, the real vector space \(V\cong \R^d\) after fixing some basis of \(V\) and thus \(G\subset \mathrm{GL}(d;\R)\).
Then we view each \(\xi_i\) as the coordinate function on \(\R^d\) such that 
\( \xi_i:\R^d \to \R\) with \( \xi_i(v) = v_i \) for \(v=(v_1\ldots,v_d)\).
Hence the elements in \(\mathrm{PR}(d;\R) = \R[\xi_1,\ldots,\xi_d]\) are polynomial functions in \(\mathbb{R}^d\) 
and is a subset of the \(G\)-set \(\FF(\R^d)\).

\begin{lemma}\label{L:G-PL_formula} 
	The polynomial ring \(\mathrm{PR}(d;\R)\) is a \(G\)-subring of \(\FF(\R^d)\) with the inherited action coincides with that in Proposition \ref{Exm:G-Poly_Ring}. 
	Let \(\xi=[\xi_1,\ldots,\xi_d]^\top\) and \(g\xi\) be the multiplication between a matrix and a vector for \(g\in G\).
	Then the action of \(G\) on \(\mathrm{PR}(d;\R)\) for \(g\in G\) and 
	\(p(\xi)=\sum_{\alpha \in I}a_\alpha \xi^\alpha\in \mathrm{PR}(d;\R)\) is given by 
	\begin{equation*}
	\begin{aligned}
		g\cdot p(\xi)  = g \cdot \left(\sum_{\alpha \in I}a_\alpha \xi^\alpha\right) 
		 = \sum_{\alpha \in I} a_\alpha (g^{-1}\xi)^\alpha = p(g^{-1}\xi).
	\end{aligned}
	\end{equation*}
\end{lemma}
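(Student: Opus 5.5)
The plan is to verify the statement directly from the dual-set action \eqref{E:dual_set} in Proposition \ref{Exm:G-Euclidean}(ii), identifying each indeterminate $\xi_k$ with the coordinate function $v\mapsto v_k$ on $\R^d$. Three things must be checked: (a) $\mathrm{PR}(d;\R)$ is closed under the action of $G$ on $\FF(\R^d)$, so it is a $G$-subring; (b) the inherited action agrees with the one in Proposition \ref{Exm:G-Poly_Ring}; (c) the formula $g\cdot p(\xi)=p(g^{-1}\xi)$ holds. Ring compatibility comes for free, since Proposition \ref{Exm:G-Euclidean}(ii) already shows the action respects sums and products in $\FF_d$.

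First I compute the action on generators. By \eqref{E:dual_set}, $(g\cdot\xi_k)(v)=\xi_k(g^{-1}v)=(g^{-1}v)_k=\sum_j (g^{-1})_{kj}v_j$. Hence, as functions,
\[
g\cdot\xi_k=\sum_{j}(g^{-1})_{kj}\xi_j=(g^{-1}\xi)_k .
\]
This is exactly the element $\xi_k\circ g^{-1}$ of \eqref{E:G_PR_basis} in $V^*\cong\R^d$, so the two actions agree on the generators.

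Next I extend to a general polynomial. For a monomial, the multiplicativity from Proposition \ref{Exm:G-Euclidean}(ii) gives
\[
g\cdot\xi^\alpha=\prod_k (g\cdot\xi_k)^{\alpha_k}=(g^{-1}\xi)^\alpha .
\]
For $p=\sum_{\alpha\in I}a_\alpha\xi^\alpha$, additivity, together with the fact that constants are fixed ($c\circ g^{-1}=c$), gives
\[
g\cdot p=\sum_{\alpha\in I}a_\alpha(g^{-1}\xi)^\alpha=p(g^{-1}\xi).
\]
Alternatively, this follows in one line from $(g\cdot p)(v)=p(g^{-1}v)$. Each $(g^{-1}\xi)_k$ is linear in $\xi$, so the result is again a polynomial of the same degree. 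This proves closure, so $\mathrm{PR}(d;\R)$ is a $G$-subring. The expression coincides with \eqref{E:G-Poly_Ring} built from \eqref{E:G_PR_basis}, which proves agreement of the two actions.

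The only delicate point is notational rather than mathematical. One must keep the two readings of $g^{-1}\xi$ straight: as a vector of linear forms in the ring, and as the function $v\mapsto g^{-1}v$ composed into $p$. One must also check that the row/column convention yields $g^{-1}$, not its transpose. I will settle this by writing $(g\cdot\xi_k)(v)$ explicitly in coordinates, as above.
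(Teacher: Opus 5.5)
Your proposal is correct and takes essentially the same route as the paper. Like the paper, you compute $g\cdot\xi_k=(g^{-1}\xi)_k$ by evaluating at $v$, derive $g\cdot p=p(g^{-1}\xi)$, and read off closure together with agreement with \eqref{E:G-Poly_Ring}. The paper uses the pointwise identity $(g\cdot p)(v)=p(g^{-1}v)$ as its main step, which you note as your alternative; your lead route through multiplicativity from Proposition~\ref{Exm:G-Euclidean}(ii) amounts to the same computation.
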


\begin{proof} 
First, the polynomial functions \(\mathrm{PR}(d;\R)\) is a subring of \(\FF(\R^d)\).
Next we prove that \(\mathrm{PR}(d;\R)\) is a \(G\)-subring of \(\FF(\R^d)\) endowed with 
the action given in Proposition \ref{Exm:G-Euclidean}. 
For \(g\in G\) and \(\xi_k\) with \(k\in[d]\), we have 
\(g\cdot \xi_k = \xi_k\circ g^{-1}:\R^d\to \R\) such that 
\[ g\cdot \xi_k (v) =  \xi_k(g^{-1}v) = g^{-1}_k v = (g^{-1}_k \xi)(v), \quad \forall v\in\R^d, \] 
with \(g^{-1}_k\) denoting the \(k\)th row of the matrix \(g^{-1}\). Thus, we have 
\(g\cdot \xi_k = g^{-1}_k \xi \in \mathrm{PR}(d;\R)\), which is linear sum of the functions 
\(\{\xi_1,\ldots,\xi_d\}\). 
Then, for a general element \(p(\xi) = \sum_{\alpha \in I}a_\alpha \xi^\alpha\) and any \(v\in \R^d\), we have
\begin{align*}
	g\cdot p(\xi) (v) & = p(\xi)(g^{-1}v) = \sum_{\alpha \in I}a_\alpha \xi^\alpha (g^{-1}v)
	= \sum_{\alpha \in I}a_\alpha  (g^{-1}v)^\alpha
	= \sum_{\alpha \in I}a_\alpha (g^{-1}\xi)^\alpha(v)
	= p(g^{-1}\xi)(v).
\end{align*}
Thus we have 
$$ g\cdot p(\xi) = p(g^{-1}\xi)\in\mathrm{PR}(d;\R), \quad \forall p(\xi)\in\mathrm{PR}(d;\R),$$ 
which also implies the consistency of the group action and the ring structure of \(\mathrm{PR}(d;\R)\) in Proposition \ref{Exm:G-Euclidean} (ii).

Finally, we prove that the group action on \(\mathrm{PR}(d;\R)\) inherited from \(\FF(\R^d)\) coincides with that in Proposition \ref{Exm:G-Poly_Ring}. For the vector \(\xi=[\xi_1,\ldots,\xi_d]^\top\) consisting of the coordinate functions,
we have \(g\cdot\xi=g^{-1}\xi\) since 
$$ g\cdot\xi(v) = \xi(g^{-1}v) = g^{-1}v, \quad  \forall v\in \R^d.$$
Thus, for \(p(\xi) = \sum_{\alpha \in I}a_\alpha \xi^\alpha\), we have 
\[ g\cdot p(\xi) = p(g^{-1}\xi) = \sum_{\alpha \in I}a_\alpha (g^{-1}\xi)^\alpha 
	= \sum_{\alpha \in I}a_\alpha (g\cdot\xi)^\alpha, \]
which reproduces the formula \eqref{E:G-Poly_Ring} in Proposition \ref{Exm:G-Poly_Ring}.
\end{proof}

\section{Symmetry of DNTK} \label{S:Symmetry}
In this section, we describe the main symmetry results. First we describe the action of a general linear group \(\mathrm{GL}(d;\R)\) on the DNTK 
\(\Theta^{(L)}_{\infty}:\PLDR_d\times\PLDR_d\to \R\). 
Let \(G\subset \mathrm{GL}(d;\R)\) be a subgroup of invertible linear transformations. 
Recall that the space of linear differential operators \(\LD_d = \LD_d(\FF_d)\) is a \(G\)-ring, where \(\FF_d = \FF(\R^d)\), the set of all functions on \(\R^d\). 
For generality, let \(\LD_d(\mathfrak{R})\) be the space consisting of operators whose coefficients are from a unital \(G\)-subring \(\mathfrak{R}\) of \(\FF(\R^d)\) that contains the coordinate functions \(\{z_1,\ldots,z_d\}\). 
The unital \(G\)-subring means that \(\mathfrak{R} \subset \FF(\R^d)\)
is itself a ring with the identity element and also a \(G\)-subset of \(\FF(\R^d)\). 
Examples include the set of constant functions (which is just \(\R\)),
continuous functions, bounded functions, smooth functions, functions differentiable up to a given order and \(\FF(\R^d)\). 
Then \(G\) can act on the space \(\LD_d(\mathfrak{R})\) through the total symbol of differential operators as follows. 
First, we construct the polynomial ring \(\mathrm{PR}(d;\mathfrak{R}) := \mathfrak{R}[\xi_1,\ldots,\xi_d]\) 
in \(d\) indeterminates \(\{\xi_1,\ldots,\xi_d\}\) over the ring \(\mathfrak{R}\), 
which is a unital commutative \(\mathbb{N}\)-graded ring \cite[Theorem 5.1]{Hungerford:2012Algebra}.
Note that \(G\) has actions both on the ring \(\mathfrak{R}\), cf. Proposition \ref{Exm:G-Euclidean} and the 
polynomial ring \(\mathrm{PR}(d;\R)\), cf. Proposition \ref{Exm:G-Poly_Ring}. 
We can define the action of \(G\) on \(\mathrm{PR}(d;\mathfrak{R})\) for \(g\in G\) and 
\(p(\xi) = \sum_{\alpha\in I} a_\alpha \xi^\alpha \in \mathrm{PR}(d;\mathfrak{R})\) as
\begin{equation}\label{E:G-DO} 
	g\cdot p(\xi) = g\cdot \sum_{\alpha\in I} a_\alpha \xi^\alpha 
	:= \sum_{\alpha} (g\cdot a_\alpha) (g\cdot \xi^\alpha). 
\end{equation}
From the identity \eqref{E:G-DO},  the action is well-defined and that each element in \(G\) induces 
a graded ring homomorphism on \(\mathrm{PR}(d;\mathfrak{R})\) as in Proposition \ref{Exm:G-Poly_Ring}. 
Second, the \textbf{(total) symbol} for an operator \(\AA\in \LD_d(\mathfrak{R})\), denoted by \(\mathrm{sbl}(\AA)\) and lying in the ring \(\mathrm{PR}(d;\mathfrak{R})\), is defined by replacing the partial operator \(\partial\) therein
with the indeterminates \(\xi\): 
\begin{equation}\label{E:symbol}
	\AA = \sum_{\alpha \in I_\AA} a_\alpha \partial^\alpha\in \LD_d(\mathfrak{R}) \quad \Longleftrightarrow \quad 
	\mathrm{sbl}(\AA) = \sum_{\alpha \in I_\AA} a_\alpha \xi^\alpha \in \mathrm{PR}(d;\mathfrak{R}), \quad a_\alpha\in \mathfrak{R}.
\end{equation}
It identifies the linear operator space \(\LD_d(\mathfrak{R})\) 
as the polynomial ring \(\mathrm{PR}(d;\mathfrak{R})\), and the action of \(G\) on \(\LD_d(\mathfrak{R})\) inherits from the action of \(G\) on \(\mathrm{PR}(d;\mathfrak{R})\). 
Finally, since the group \(G\subset \mathrm{GL}(d;\R)\) acts on both \(\LD_d(\mathfrak{R})\) and the Euclidean space \(\R^d\), it acts on the Cartesian space \(\PLDR_d(\mathfrak{R}):= \LD_d(\mathfrak{R}) \times \R^d\) and also the domain of DNTK \(\PLDR_d(\mathfrak{R})\times \PLDR_d(\mathfrak{R})\) componentwise by the formula \eqref{E:G-product}.  
Thus, \(G\) acts on DNTK \(\Theta^{(L)}_\infty\) as a function on \(\PLDR_d(\mathfrak{R})\times \PLDR_d(\mathfrak{R})\) by the formula \eqref{E:dual_set}.
Let \(\partial=[\partial_1,\ldots,\partial_d]^\top\) be the gradient operator 
and \(g\partial\) the multiplication between matrix and vector for \(g\in G\).
From the identification \eqref{E:symbol} and Lemma \ref{L:G-PL_formula}, the action of \(g^{-1}\in G\subset \mathrm{GL}(d;\R)\) 
on the element \(\AA\in \LD_d(\mathfrak{R})\) is given by
\begin{equation}\label{E:G-DO_formula}
	\AA = \sum_{\alpha \in I_\AA} a_\alpha \partial^\alpha \quad \Longrightarrow \quad 
	g^{-1}\cdot \AA = \sum_{\alpha \in I_\AA} (g^{-1}\cdot a_\alpha) (g^{-1}\cdot \partial)^\alpha
	= \sum_{\alpha \in I_\AA} (a_\alpha \circ g) (g \partial)^\alpha \in \LD_d(\mathfrak{R}).
\end{equation}
Thus, the action of \(g\in G\) on the DNTK is given by 
\(g\cdot \Theta^{(L)}_\infty = \Theta^{(L)}_\infty\circ g^{-1}: \PLDR_d(\mathfrak{R})\times \PLDR_d(\mathfrak{R}) \to \R\), 
such that for all \(x^\AA, y^\BB \in \PLDR_d(\mathfrak{R})\), we have
\begin{equation} \label{E:G-DNTK}
\begin{aligned}
	g\cdot \Theta^{(L)}_\infty(x^\AA, y^\BB) = \Theta^{(L)}_\infty\left(g^{-1}\cdot x^\AA, g^{-1}\cdot y^\BB\right)
	= \Theta^{(L)}_\infty\left((g^{-1}x)^{g^{-1}\cdot\AA}, (g^{-1}y)^{g^{-1}\cdot\BB}\right).
\end{aligned}	
\end{equation}

\begin{remark}
    By identifying operators with their symbols, the product \(\AA\cdot\BB\) for \(\AA,\BB\in \LD_d(\mathfrak{R}) = \mathrm{PR}(d;\mathfrak{R})\) represents the operator with symbol given by \(\mathrm{sbl}(\AA)\cdot \mathrm{sbl}(\BB)\) and is commutative in \(\LD_d(\mathfrak{R})\).
	This product between symbols should be well-distinguished with 
	the composition of operators, denoted by \(\AA\circ\BB\), as in Lemma \ref{L:AA-z_k}.
	Although the composition can be regarded as a product operation for linear operators, it may not be well-defined in \(\LD_d(\mathfrak{R})\) if the functions in \(\mathfrak{R}\) is not smooth enough.
	Even if the composition makes sense, it is generally not commutative. 
	For example, the composition between radial derivative operator \(\RR\) and 
	the Laplace operator \(\LL\), defined in equation \eqref{E:RR_LL} below, are not commutative:
	\begin{align*}
		\RR \circ \LL & = \left(\sum_{i=1}^{d} x_i \partial_{i}\right)
		\left(\sum_{j=1}^{d} \partial_{j}^2\right) = \sum_{i,j=1}^{d} x_i \partial_{i}\partial_{j}^2,\\
		\LL \circ \RR & =\left(\sum_{j=1}^{d} \partial_{j}^2\right) \left(\sum_{i=1}^{d} x_i \partial_{i}\right)
		= \sum_{j=1}^{d} \sum_{i=1}^{d} \partial_{j}^2(x_i \partial_{i}) 
		= \sum_{j=1}^{d} \left(\partial_{j}^2(x_j \partial_{j}) + \sum_{i\neq j}x_i\partial_{j}^2\partial_{i}\right) \\
		& = \sum_{j=1}^{d} \left( 2\partial_{j}^2 + x_j\partial_{j}^3 + \sum_{i\neq j}x_i\partial_{i}\partial_{j}^2 \right)
		= 2\sum_{j=1}^{d} \partial_{j}^2 + \sum_{i,j=1}^{d} x_i \partial_{i}\partial_{j}^2,
	\end{align*}
	which shows that \(\RR \cdot \LL = \LL\cdot\RR = \mathrm{sbl}(\RR \circ \LL)\neq \mathrm{sbl}(\LL\circ \RR)\).
\end{remark}

The orthogonal group \(\OO(d)\) is the set of all 
orthogonal matrices \(g\in \R^{d\times d}\) such that \(g^\top g = I_d\).

\begin{theorem}\label{T:Sym-SNN}
Let \(\OO(d)\) be the group of orthogonal matrices in $\mathbb{R}^d$.  Let \(\mathfrak{R}\) be a unital \(\OO(d)\)-subring of \(\FF(\R^d)\) and \(\PLDR_d(\mathfrak{R}) := \LD_d(\mathfrak{R}) \times \R^d\). Then the infinite DNTK \(\Theta^{(2)}_{\infty}\) \eqref{E:2-DNTK} for two-layer NNs is invariant under the action of the orthogonal group \(\OO(d)\): for any \(g\in\OO(d)\), we have 
	\(g\cdot \Theta^{(2)}_{\infty} = \Theta^{(2)}_{\infty}:\PLDR_d(\mathfrak{R})\times \PLDR_d(\mathfrak{R}) \to \R\): 
	\begin{equation*}
		\Theta^{(2)}_{\infty}(x^\AA, y^\BB) = \Theta^{(2)}_{\infty}\left(g^{-1}\cdot x^\AA, g^{-1}\cdot y^\BB\right)
		= \Theta^{(2)}_{\infty}\left((g^{-1}x)^{g^{-1}\cdot\AA}, (g^{-1}y)^{g^{-1}\cdot\BB}\right),\quad 
        \forall x^\AA, y^\BB\in \PLDR_d(\mathfrak{R}).
	\end{equation*}
	In addition, all the four summands in the expression \eqref{E:2-DNTK} of \(\Theta^{(2)}_{\infty}\), 
	in particular the covariance function \(\Sigma^{(2)}_{\infty}\), are invariant under the action of \(\OO(d)\).
\end{theorem}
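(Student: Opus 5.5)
The plan is to verify invariance separately for each of the four summands in \eqref{E:2-DNTK}, since a sum of invariant functions is invariant. The central identity is a chain-rule computation. Fix $g\in\OO(d)$ and a smooth $h:\R^d\to\R$, and write $\AA=\sum_{\alpha\in I_\AA}a_\alpha\partial^\alpha$. By \eqref{E:G-DO_formula}, $g^{-1}\cdot\AA=\sum_\alpha (a_\alpha\circ g)(g\partial)^\alpha$. I would first prove
\begin{equation*}
\bigl((g^{-1}\cdot\AA)\, h\bigr)(g^{-1}x) = \bigl(\AA\,(h\circ g^{-1})\bigr)(x),
\end{equation*}
that is, acting with the transformed operator at the transformed point equals acting with $\AA$ on the transformed function. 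For $\AA=\partial_k$ this reads $\partial_{z_k}[h(g^{-1}z)]=\sum_j (g^{-1})_{jk}(\partial_jh)(g^{-1}z)=\sum_j g_{kj}(\partial_j h)(g^{-1}z)=((g\partial)_k h)(g^{-1}z)$, using $g^{-1}=g^\top$. Because constant-coefficient operators commute, this extends multiplicatively to $(g\partial)^\alpha$. It then extends linearly with coefficients, since $(a_\alpha\circ g)(g^{-1}x)=a_\alpha(x)$. Note that the identification via symbols is compatible here: the operators $(g\partial)_k$ have constant coefficients, so symbol product and composition agree on them. In particular the zeroth-order coefficient transforms as $a\mapsto a\circ g$, so $(a\circ g)(g^{-1}x)=a(x)$ and the second summand $\gamma^2a(x)b(y)$ is invariant immediately.

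For the first summand, I apply the identity with $h(z)=\sigma(u^\top z+\gamma v)$. Then $h(g^{-1}z)=\sigma((gu)^\top z+\gamma v)$, using $u^\top g^{-1}=(gu)^\top$ for orthogonal $g$. Hence the integrand evaluated at the transformed marked points equals $\AA\sigma(w^\top x+\gamma v)\,\BB\sigma(w^\top y+\gamma v)$ with $w=gu$. Since $w\sim\NN(0,I_d)$ whenever $u\sim\NN(0,I_d)$, and $w$ is independent of $v$, the expectation is unchanged. This is the same argument that gives the invariance of $\Sigma^{(2)}_\infty$. The fourth summand is handled identically with $\sigma'$ in place of $\sigma$.

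The third summand is the main obstacle, because of the explicit factor $x_k$ that multiplies $\sigma'$. Here I take $h_k(z)=\sigma'(u^\top z+\gamma v)z_k$ and apply the identity at the transformed points. This gives $\AA_x\bigl[\sigma'(w^\top x+\gamma v)(g^{-1}x)_k\bigr]$, where the operator now acts on the function $x\mapsto \sigma'(w^\top x+\gamma v)\sum_j g_{jk}x_j$ by linearity, and similarly in $y$ with $\BB$. Expanding the sum over $k$ produces
\begin{equation*}
\sum_{k}\sum_{j,l} g_{jk}g_{lk}\,\AA\bigl(\sigma'(\cdot)x_j\bigr)\,\BB\bigl(\sigma'(\cdot)y_l\bigr).
\end{equation*}
Since $\sum_k g_{jk}g_{lk}=(gg^\top)_{jl}=\delta_{jl}$, this collapses to the original sum over $j$. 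Combined with the change of variables $w=gu$ as before, this yields invariance of the third summand. Summing the four invariant pieces proves the theorem.
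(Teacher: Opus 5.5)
Your proof is correct. For the first, second and fourth summands it is essentially the paper's argument. The paper's explicit formula $(g^{-1}\cdot\AA)\sigma(u^\top g^{-1}x+\gamma v)=\sum_{\alpha}a_\alpha(x)\sigma^{(|\alpha|)}((gu)^\top x+\gamma v)(gu)^\alpha$ is your transport identity specialized to ridge functions, and it is followed by the same change of variables $w=gu$.

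The real difference is the third (inner-weight) summand. The paper does not transport $h_k(z)=\sigma'(u^\top z+\gamma v)z_k$ as a whole. Instead it proceeds in four steps:
\begin{itemize}
\item it uses Lemma~\ref{L:AA-z_k} to write $\AA\circ z_k=z_k\AA+\AA_{(k)}$, where $\mathrm{sbl}(\AA_{(k)})=\partial_{\xi_k}\mathrm{sbl}(\AA)$;
\item it splits the summand into $x^\top y\,\Theta^{(2)}_{b^{(1)}}+K_{\mathrm I}+K_{\mathrm{II}}$;
\item it expands $K_{\mathrm I}$ and $K_{\mathrm{II}}$ in the monomials $(gu)^\alpha$, invoking Lemma~\ref{L:Inner_Gra-g}, i.e.\ $\langle\nabla_u(gu)^\alpha,\nabla_u(gu)^\beta\rangle=\langle\nabla_w w^\alpha,\nabla_w w^\beta\rangle$;
\item it handles the cross terms of $K_{\mathrm{II}}$ by a separate computation.
\end{itemize}

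Your identity is precisely Lemma~\ref{L:system-trans}, which the paper only uses in its second proof of Theorem~\ref{T:Sym-DNN}. Applying it to $h_k$ reduces everything to one fact: the coordinate functions transform as $z\mapsto g^{-1}z=g^\top z$, so the contraction over $k$ collapses by $gg^\top=I_d$.

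Your route is shorter and bypasses both auxiliary lemmas. It also isolates exactly where orthogonality enters:
\begin{itemize}
\item in the transport identity itself, which for a general $g\in\mathrm{GL}(d;\R)$ would produce $g^{-\top}\partial$ rather than $g\partial$;
\item in the rotation invariance of the law of $(u,v)$;
\item in the contraction over $k$.
\end{itemize}

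The paper's route gives a finer structural statement: $K_{\mathrm I}$ and $K_{\mathrm{II}}$ are separately invariant. The operators $\AA_{(k)}$ it introduces are also reused in the explicit kernel computations of Corollary~\ref{C:formula-sym}, for example $\RR_{(k)}=z_k$ and $\LL_{(k)}=2\partial_k$.
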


\begin{theorem}\label{T:Sym-DNN}
	Let \(\OO(d)\) be the group of orthogonal matrices in $\mathbb{R}^d$. 
	Let \(\mathfrak{R}\) be a unital \(\OO(d)\)-subring of \(\FF(\R^d)\) and \(\PLDR_d(\mathfrak{R}) := \LD_d(\mathfrak{R}) \times \R^d\). 
	Then the infinite DNTK \(\Theta^{(L)}_{\infty}\) \eqref{E:deep-DNTK} for DNNs with \(L\geq 1\)
	is invariant under the action of the orthogonal group \(\OO(d)\): for any \(g\in\OO(d)\), we have 
	\(g\cdot \Theta^{(L)}_{\infty} = \Theta^{(L)}_{\infty}:\PLDR_d(\mathfrak{R})\times \PLDR_d(\mathfrak{R}) \to \R\): 
	\begin{equation*}
		\Theta^{(L)}_{\infty}(x^\AA, y^\BB) = \Theta^{(L)}_{\infty}\left(g^{-1}\cdot x^\AA, g^{-1}\cdot y^\BB\right)
		= \Theta^{(L)}_{\infty}\left((g^{-1}x)^{g^{-1}\cdot\AA}, (g^{-1}y)^{g^{-1}\cdot\BB}\right),\quad 	\forall x^\AA, y^\BB\in \PLDR_d(\mathfrak{R}).
	\end{equation*}
	In addition, the covariance function \(\Sigma^{(L)}_{\infty}\) is invariant under the action of \(\OO(d)\).
\end{theorem}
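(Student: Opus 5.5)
We will prove Theorem \ref{T:Sym-DNN} by induction on the depth \(L\), treating \(\Sigma^{(L)}_\infty\), \(\dot\Sigma^{(L)}_\infty\) and \(\Theta^{(L)}_\infty\) together. The basic tool is a change-of-variables identity. For \(g\in\OO(d)\) and any \(\AA=\sum_\alpha a_\alpha\partial^\alpha\), formula \eqref{E:G-DO_formula} gives \(g^{-1}\cdot\AA=\sum_\alpha (a_\alpha\circ g)(g\partial)^\alpha\). Applying this operator to a function \(h\) and evaluating at \(g^{-1}x\) is the same as applying \(\AA\) to \(h\circ g^{-1}\) at \(x\). Indeed, the chain rule gives \(\partial_z[h(g^{-1}z)]=g^{-\top}(\partial h)(g^{-1}z)=g(\partial h)(g^{-1}z)\), using \(g^{-\top}=g\) for orthogonal \(g\). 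Hence
\[
\bigl((g^{-1}\cdot\AA)h\bigr)(g^{-1}x)=\bigl(\AA(h\circ g^{-1})\bigr)(x).
\]
The coefficients match because \((a_\alpha\circ g)(g^{-1}x)=a_\alpha(x)\), and the symbol product \((g\partial)^\alpha\) is an operator with constant coefficients, so the factors commute. We first establish this as a lemma. Its consequence is that, in every formula below, replacing \((x^\AA,y^\BB)\) by \((g^{-1}x)^{g^{-1}\cdot\AA},(g^{-1}y)^{g^{-1}\cdot\BB}\) amounts to replacing the underlying random field \(F(z)\) by \(F(g^{-1}z)\).

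\textbf{Base case \(L=1\).} The first-layer output is \(f^{(1)}(z)=Wz+\gamma b\), so \(f^{(1)}_\infty\) is the Gaussian field \(u^\top z+\gamma v\) with \(u\sim\NN(0,I_d)\) and \(v\sim\NN(0,1)\). Then \(\Sigma^{(1)}_\infty(x^\AA,y^\BB)=\E[\AA(u^\top x+\gamma v)\,\BB(u^\top y+\gamma v)]\), which expands to the formula in \eqref{E:deep-Cov}. By the lemma, the transformed marked points produce \(\AA\) applied to \((g^{-1}\cdot)\) of the field, that is, to the field \(u^\top g^{-1}z+\gamma v=(gu)^\top z+\gamma v\). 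Since \(gu\sim\NN(0,I_d)\), the expectation is unchanged. The constant term \(\gamma^2a(x)b(y)\) is also unchanged, because the zeroth-order coefficient of \(g^{-1}\cdot\AA\) is \(a\circ g\). This proves invariance of \(\Sigma^{(1)}_\infty=\Theta^{(1)}_\infty\). One can also check it directly: \(D(g^{-1}\cdot\AA)(g^{-1}x)=g^\top D\AA(x)\), so the inner product \((a x+D\AA)^\top(b y+D\BB)\) is preserved by orthogonality.

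\textbf{Inductive step.} The key reformulation is as follows. Invariance of \(\Sigma^{(\ell)}_\infty\) on all marked points, together with the lemma, means that the centered Gaussian process \(f^{(\ell)}_\infty\) on \(\PLDR_d\) has the same law as the transformed process \((z^\AA)\mapsto f^{(\ell)}_\infty\bigl((g^{-1}z)^{g^{-1}\cdot\AA}\bigr)\). A centered Gaussian process is determined by its covariance, so equality of covariances gives equality of laws.

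In \eqref{E:deep-Cov} and \eqref{E:dotCov}, the integrand \(\AA\sigma(f^{(\ell)}_\infty(x))\) of \eqref{E:AA_sig(f_inf)} is a Faà di Bruno expression in the values \(f^{(\ell)}_\infty(x^{\partial^A})\). The cleanest route is to interpret it as \(\AA\) applied to \(\sigma\circ F\), where \(F\) is a realization of the field, which is smooth in a mean-square sense. Then
\[
(g^{-1}\cdot\AA)\,\sigma(F)(g^{-1}x)=\AA\,\sigma(F\circ g^{-1})(x)
\]
by the lemma, and \(F\circ g^{-1}\) has the same law as \(F\) jointly with all its derivatives, by the previous paragraph. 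This yields invariance of \(\Sigma^{(\ell+1)}_\infty\) and \(\dot\Sigma^{(\ell+1)}_\infty\).

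For \(\Theta^{(\ell+1)}_\infty\), the double sum in \eqref{E:deep-DNTK} should likewise be read as a Leibniz expansion. It is \(\AA_x\BB_y\) applied to the product \(\dot\Sigma^{(\ell+1)}_\infty(x,y)\,\Theta^{(\ell)}_\infty(x,y)\) of scalar kernels, where the marks \(\partial^\xi\) and \(\partial^{\alpha-\xi}\) distribute the derivatives. Because the product rule is coordinate-free, applying \(g^{-1}\cdot\AA\) at \(g^{-1}x\) to this product equals \(\AA\) at \(x\) applied to the composed kernels \(K(g^{-1}\cdot,g^{-1}\cdot)\). By the induction hypothesis each kernel is invariant, so the composed kernels equal the originals, and the term is invariant.

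\textbf{Main obstacle.} The hard part is rigor in this reinterpretation: showing that the combinatorial formulas \eqref{E:AA_sig(f_inf)} and \eqref{E:deep-DNTK} coincide with genuine derivatives of composed functions. Only then can the lemma be applied, rather than tracking how \(g^{-1}\cdot\partial^\alpha=(g\partial)^\alpha\) mixes multi-indices inside the partition sums. I would handle this by linearity: write \((g\partial)^\alpha\) as a combination of monomials \(\partial^\beta\) with \(|\beta|=|\alpha|\), and use the multilinearity of the Faà di Bruno and Leibniz sums in the marked values, which is the same multilinearity enjoyed by the derivatives of a smooth kernel. This reduces everything to the identity for smooth functions.
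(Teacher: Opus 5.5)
Your argument is correct, and its skeleton is the paper's first proof. It proceeds by induction on $L$ and handles the base case via $gu\sim\NN(0,I_d)$, or equivalently via $D(g^{-1}\cdot\AA)(g^{-1}x)=g^\top D\AA(x)$. It then upgrades invariance of $\Sigma^{(\ell)}_\infty$ to equality in law of the Gaussian jets, and uses Fa\`a di Bruno for $\Sigma^{(\ell+1)}_\infty,\dot\Sigma^{(\ell+1)}_\infty$ and Leibniz for $\Theta^{(\ell+1)}_\infty$. Your change-of-variables lemma is exactly the paper's Lemma \ref{L:system-trans}.

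Where you differ is in how the rotated chain and product rules are transferred to the limiting objects. The paper proves Fa\`a di Bruno and Leibniz for the system $\wt\partial=g\partial$ (Lemmas \ref{L:g-Leibniz} and \ref{L:g-Faa di}). It applies them to the finite-width network and passes to the width limit again to obtain the recursions with $\wt\partial$-marks. You stay at the level of the limiting recursions. You observe that the needed rewritings are polynomial identities in jet variables, valid for all smooth functions; since jets at a point are unconstrained, they hold formally. This is self-contained and avoids re-entering the limit of \cite{JinWu:2026DNTK}. It does need one explicit remark: every kernel generated by \eqref{E:deep-Cov}--\eqref{E:deep-DNTK} is bilinear in the marks, so $f^{(\ell)}_\infty$ is a.s.\ linear in the operator and its jet values may legitimately be substituted into the formal identity.

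Your sentence about applying $\AA$ pathwise to $\sigma\circ F$ for a mean-square smooth $F$ should be dropped in favour of this formal argument. Mean-square smoothness alone does not justify pathwise differentiation of $\sigma\circ F$.

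The paper's second proof avoids the issue altogether. It applies Lemma \ref{L:system-trans} to the finite-width network and notes that $f^{(L)}\circ g^{-1}$ is the same architecture with first-layer weights $W^{(1)}g^{-1}\stackrel{d}{=}W^{(1)}$. Hence the transformed kernels obey the same recursion with the same base case, and no invariance hypothesis needs to be carried through the induction.
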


\begin{remark}
The invariance of DNTK originates from the symmetry of the standard Gaussian vector used in the initialization of network parameters.
	The results in Theorem \ref{T:Sym-SNN}, which shows the symmetry of all summands of \(\Theta^{(2)}_{\infty}\), 
	are not all included by Theorem \ref{T:Sym-DNN}.
	The invariance of the covariance function \(\Sigma^{(L)}_{\infty}\) under the orthogonal group \(\OO(d)\) 
	means that the Gaussian process \(f^{(L)}_\infty\) on \(\PLDR_d(\mathfrak{R})\) is invariant under the group \(\OO(d)\) 
	(or stationary Gaussian process \cite{Azangulov:2024I,Azangulov:2024II}):
	\[ g\cdot f^{(L)}_\infty = f^{(L)}_\infty \quad \Longleftrightarrow \quad 
		\GP\left(0,g\cdot\Sigma^{(L)}_{\infty}\right) = \GP\left(0,\Sigma^{(L)}_{\infty}\right),\quad 
\forall g\in\OO(d). \]
\end{remark}

\begin{remark}
    If we restrict the maximum order, say \(m\), of the operators in \(\LD_d(\mathfrak{R})\), the polynomials space is \(\mathrm{PR}(d;\mathfrak{R})_{\leq m}:= \bigoplus_{n=0}^{m} \mathfrak{R}[\xi_1,\ldots,\xi_d]_{(n)}\), which is not a ring since the multiplication is not closed 
	but instead a module over the ring \(\mathfrak{R}\). Nonetheless, the whole theory are not  
	affected except the ring structure of \(\mathrm{PR}(d;\mathfrak{R})_{\leq m}\) replaced by the module structure. Similarly, the ring structure of the coefficient set \(\mathfrak{R}\), which is not owned by the space of integrable functions, is not necessary to Theorems \ref{T:Sym-SNN} and \ref{T:Sym-DNN}.
\end{remark}

\section{Implications of symmetry} \label{S:Conseq}

Now using the symmetry for the operator-fixed DNTK, we derive several spectral results for the DNTK. 
Given \(\AA\in\LD_d(\mathfrak{R})\), we use \(\Theta^{(L)}_\AA\) to denote the kernel 
\(\Theta^{(L)}_\infty[\AA,\AA]:\R^d\times\R^d \to \R\). 
We restrict the domain of such \(\AA\)-NTKs to be the unite sphere \((\mathbb{S}^{d-1},{\rm d}S^{d-1})\) 
with \({\rm d}S^{d-1}\) the induced measure by the Lebesgue measure, and the coefficient ring \(\mathfrak{R}\) to be 
the space of continuous functions, i.e., \(\mathfrak{R}=C(\mathbb{S}^{d-1})\).
Then, for each \(\AA\in\LD_d(C(\mathbb{S}^{d-1}))\), we obtain a continuous kernel on the unit sphere $\mathbb{S}^{d-1}$:
\begin{equation*}
	\Theta^{(L)}_\AA:\mathbb{S}^{d-1}\times \mathbb{S}^{d-1} \to \R, \quad (x,y)\mapsto \Theta^{(L)}_\infty(x^\AA,y^\AA).
\end{equation*}
Let \(L^2(\mathbb{S}^{d-1}):=L^2(\mathbb{S}^{d-1};{\rm d}S^{d-1})\) be the
real-valued square integrable functions on \(\mathbb{S}^{d-1}\). 
Then the induced integral operator associated with \(\AA\)-NTK \(\Theta^{(L)}_\AA\) is defined by 
\begin{equation*}
	T^{(L)}_\AA: L^2(\mathbb{S}^{d-1})\to L^2(\mathbb{S}^{d-1}), \quad
	f \mapsto \int_{\mathbb{S}^{d-1}} \Theta^{(L)}_\AA(\cdot,y)f(y) ~ {\rm d}S^{d-1}(y).
\end{equation*}
By the compactness of \(\mathbb{S}^{d-1}\), the continuity of the coefficients of \(\AA\), and 
the positivity of the DNTK \cite{JinWu:2026DNTK}, the integral operator \(T^{(L)}_\AA\) has 
Mercer's decomposition \cite{Minh:2006Mercer}: \(T_\AA\) is a Hilbert-Schmidt operator with nonnegative real
spectrum \(\sigma(T_\AA) = \{\lambda_j\}_{j\in \N}\) such that \(\sum_{j\in\N}\lambda_j^2<\infty\) 
and the corresponding eigenfunctions \(\{\phi_j\}_{j\in \N}\) form an orthonormal basis of
\(L^2(\mathbb{S}^{d-1})\). Moreover, we have 
\begin{equation*}
	\Theta^{(L)}_\AA(x,y) = \sum_{j\in \N} \lambda_j \phi_j(x)\phi_j(y), \quad x,y\in \mathbb{S}^{d-1},
\end{equation*}
where the series converges absolutely for each pair on \(\mathbb{S}^{d-1}\) and uniformly on each 
compact subset of \(\mathbb{S}^{d-1}\). The RKHS of the kernel \(\Theta^{(L)}_\AA\) is given by 
\begin{equation}\label{E:RKHS}
	\HH^{(L)}_\AA := \HH_{\Theta^{(L)}_\AA} = \left\{ f = \sum_{j\in \N, \lambda_j>0} f_j\phi_j ~\Big|~ f_j\in\R, 
	\sum_{j\in \N, \lambda_j>0} f_j^2\lambda_j^{-1} < \infty  \right\} \subset L^2(\mathbb{S}^{d-1}).
\end{equation}

\begin{proposition}\label{P:AA=g-BB}
If \(\AA, \BB\in \LD_d(C(\mathbb{S}^{d-1}))\) are in the same orbit under the action of the group \(\OO(d)\), then the integral operators \(T^{(L)}_\AA\) and \(T^{(L)}_\BB\) are conjugate
	as the operators on \(L^2(\mathbb{S}^{d-1})\). Specifically, 
	let \(M_g:L^2(\mathbb{S}^{d-1})\to L^2(\mathbb{S}^{d-1})\) be the operator induced by 
	\(g\) such that \(f \mapsto g\cdot f = f\circ g^{-1}\). Then for \(\BB=g\cdot\AA\) with \(g\in\OO(d)\), we have
	\[ T^{(L)}_\BB = T^{(L)}_{g\cdot\AA} = M_{g} T^{(L)}_\AA M_{g^{-1}} = M_g T^{(L)}_\AA M_g^{-1}. \] 
	In particular, \(T^{(L)}_\AA\) and \(T^{(L)}_\BB\) share spectrum: 
	if \(\lambda\in \sigma(T_\AA)\) is an eigenvalue 
	of \(T_\AA\) with the  eigenfunction \(\phi \in L^2(\mathbb{S}^{d-1})\), then
	\(\lambda\in \sigma(T_\BB)\) with the eigenfunction \(g\cdot \phi = \phi\circ g^{-1}\) 
	and thus \(\HH^{(L)}_\BB = g\cdot \HH^{(L)}_\AA\).
\end{proposition}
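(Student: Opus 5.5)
The plan is to derive a kernel identity from Theorem \ref{T:Sym-DNN} and then transfer it to the integral operators by a change of variables on the sphere. Fix \(g\in\OO(d)\) and set \(\BB=g\cdot\AA\). The first step is to check that \(\LD_d(C(\mathbb{S}^{d-1}))\) is closed under the action. Indeed, \(C(\mathbb{S}^{d-1})\) is an \(\OO(d)\)-subring, since \(a\circ g^{-1}\) is continuous on the sphere and \(g^{-1}\) maps \(\mathbb{S}^{d-1}\) to itself. Hence \(\BB\in \LD_d(C(\mathbb{S}^{d-1}))\), and the kernel \(\Theta^{(L)}_\BB\) is defined on \(\mathbb{S}^{d-1}\times\mathbb{S}^{d-1}\).

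The second step applies the invariance in Theorem \ref{T:Sym-DNN} with the points \((gx)^{g\cdot\AA}\) and \((gy)^{g\cdot\AA}\). Because the action is a group action, \(g^{-1}\cdot (gx)^{g\cdot \AA}=x^{\AA}\), and therefore
\[ \Theta^{(L)}_\BB(gx,gy)=\Theta^{(L)}_\infty\bigl((gx)^{g\cdot\AA},(gy)^{g\cdot\AA}\bigr)=\Theta^{(L)}_\infty(x^\AA,y^\AA)=\Theta^{(L)}_\AA(x,y). \]
Equivalently, \(\Theta^{(L)}_\BB(x,y)=\Theta^{(L)}_\AA(g^{-1}x,g^{-1}y)\) for all \(x,y\in\mathbb{S}^{d-1}\). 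One point needs care here. The operator \(\AA\) with coefficients in \(C(\mathbb{S}^{d-1})\) acts on functions defined on \(\R^d\), so the coefficients must be read as functions near the sphere, for instance extended radially, in a way compatible with the action. Since orthogonal maps commute with radial extension, this compatibility holds.

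The third step is the operator identity. First, \(M_g\) is unitary on \(L^2(\mathbb{S}^{d-1})\), because the surface measure is \(\OO(d)\)-invariant. Moreover \(M_{g^{-1}}=M_g^{-1}\), since \(g\mapsto M_g\) is the dual action \eqref{E:dual_set}. For \(f\in L^2\) we compute
\[ T^{(L)}_\BB f(x)=\int \Theta^{(L)}_\AA(g^{-1}x,g^{-1}y)f(y)\,\d S^{d-1}(y)=\int \Theta^{(L)}_\AA(g^{-1}x,w)f(gw)\,\d S^{d-1}(w), \]
using the substitution \(y=gw\) and the invariance of the measure. The right-hand side equals \(\bigl(T^{(L)}_\AA M_{g^{-1}}f\bigr)(g^{-1}x)=\bigl(M_gT^{(L)}_\AA M_{g^{-1}}f\bigr)(x)\).

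The spectral consequences then follow. If \(T_\AA\phi=\lambda\phi\), then
\[ T_\BB(M_g\phi)=M_gT_\AA\phi=\lambda M_g\phi, \]
so \(\lambda\) is an eigenvalue of \(T_\BB\) with eigenfunction \(g\cdot\phi\). The argument is symmetric under \(g\leftrightarrow g^{-1}\), so the two spectra coincide, including multiplicities. Because \(M_g\) is unitary, \(\{g\cdot\phi_j\}\) is again an orthonormal eigenbasis for \(T_\BB\) with the same eigenvalues \(\lambda_j\). Plugging this basis into \eqref{E:RKHS} gives \(\HH^{(L)}_\BB=\{\sum f_j\, g\cdot\phi_j : \sum f_j^2\lambda_j^{-1}<\infty\}=g\cdot\HH^{(L)}_\AA\).

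The main obstacle is the second step, namely making sure the group action on operators and on points matches the convention in \eqref{E:G-DNTK}. The sign convention (\(g\) versus \(g^{-1}\)) and the interpretation of sphere-continuous coefficients must be handled carefully. Everything after that is routine.
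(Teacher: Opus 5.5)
Your proposal is correct and follows essentially the paper's argument. You derive \(\Theta^{(L)}_\BB(x,y)=\Theta^{(L)}_\AA(g^{-1}x,g^{-1}y)\) from Theorem \ref{T:Sym-DNN} (the paper's \eqref{E:symmetry}), then use the substitution \(y=gw\) with rotation invariance of the surface measure to get the conjugation and transfer eigenfunctions, and your remarks on unitarity of \(M_g\), multiplicities, and reading the RKHS off the rotated Mercer basis \(\{g\cdot\phi_j\}\) fill in details the paper leaves implicit.
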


All the first order derivatives \(\partial_k\) for \(k\in[d]\) are in the same orbit. 
In fact, let \(P_{ij}\) be the permutation matrix from the identity matrix by interchanging the \(i\)-th row 
and \(j\)-th row for \(i,j\in[d]\). Then, for the operator \(\partial_i\), we have 
\( P_{ij}\cdot \partial_i = (P_{ij}\partial)_i = \partial_j\). Thus all the integral operators 
\(T^{(L)}_{\partial_k}\) have the same spectrum and the RKHSs 
\(\HH^{(L)}_{\partial_k}\) only differ by a rotation.

Recall that a kernel on the unit sphere \(\mathbb{S}^{d-1}\) is \textbf{zonal} 
if the kernel value on the pair \((x,y)\) only depends on the inner product \(x^\top y\). 
In fact, the zonal kernel is equivalent to the kernels invariant under the orthogonal group. 
It is obvious the zonal property implies the invariance, and we prove the reverse direction next.

\begin{proposition}\label{P:zonal}
	If \(\AA\in\LD_d(C(\mathbb{S}^{d-1}))^{\OO(d)}\) lies in the ring of invariants under the action of \(\OO(d)\), i.e., \(g\cdot \AA = \AA\) for all \(g\in\OO(d)\), 
	then the \(\AA\)-NTK \(\Theta^{(L)}_\AA\) is invariant under the action of \(\OO(d)\) and thus zonal.
\end{proposition}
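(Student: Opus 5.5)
The plan is to combine Theorem \ref{T:Sym-DNN} with the hypothesis \(g\cdot\AA=\AA\), and then derive zonality from invariance using the transitivity of \(\OO(d)\) on pairs of points of \(\mathbb{S}^{d-1}\) with a prescribed inner product.

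\emph{Invariance.} Fix \(g\in\OO(d)\) and \(x,y\in\mathbb{S}^{d-1}\). Since \(\AA\) is invariant, so is \(g^{-1}\cdot\AA=\AA\). Theorem \ref{T:Sym-DNN} applied with \(\BB=\AA\) then gives
\[
\Theta^{(L)}_\AA(x,y)=\Theta^{(L)}_\infty(x^\AA,y^\AA)=\Theta^{(L)}_\infty\big((g^{-1}x)^{g^{-1}\cdot\AA},(g^{-1}y)^{g^{-1}\cdot\AA}\big)=\Theta^{(L)}_\AA(g^{-1}x,g^{-1}y).
\]
As \(g\) ranges over all of \(\OO(d)\), this says \(\Theta^{(L)}_\AA(gx,gy)=\Theta^{(L)}_\AA(x,y)\) for every \(g\in\OO(d)\), so \(g\cdot\Theta^{(L)}_\AA=\Theta^{(L)}_\AA\).

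One technical point needs checking. Theorem \ref{T:Sym-DNN} is stated for a unital \(\OO(d)\)-subring of \(\FF(\R^d)\), whereas the coefficients here lie in \(C(\mathbb{S}^{d-1})\). This is harmless for two reasons. First, orthogonal matrices preserve the sphere, so \(C(\mathbb{S}^{d-1})\) is closed under \(a\mapsto a\circ g\). Second, the kernel is evaluated only at points of the sphere, so the coefficients can be extended (for instance radially) to a unital \(\OO(d)\)-subring of \(\FF(\R^d)\) containing the coordinates without changing any of the values involved. Alternatively, one can observe, as the paper's remark does, that the ring structure is inessential to the theorem.

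\emph{Zonality.} Let \((x,y)\) and \((x',y')\) be pairs in \(\mathbb{S}^{d-1}\) with \(x^\top y=x'^\top y'=t\). I will construct \(g\in\OO(d)\) with \(gx=x'\) and \(gy=y'\).

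\begin{itemize}
\item If \(|t|<1\), Gram--Schmidt on \(\{x,y\}\) gives an orthonormal pair \(e_1=x\), \(e_2=(y-tx)/\sqrt{1-t^2}\), and likewise \(e_1',e_2'\) from \(\{x',y'\}\). Extend both pairs to orthonormal bases of \(\R^d\) and let \(g\) map the first basis to the second. Then \(gx=x'\), and \(gy=te_1'+\sqrt{1-t^2}\,e_2'=y'\).
\item If \(t=\pm1\), then \(y=\pm x\) and \(y'=\pm x'\), so any orthogonal \(g\) with \(gx=x'\) works.
\end{itemize}

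The invariance step then gives \(\Theta^{(L)}_\AA(x,y)=\Theta^{(L)}_\AA(x',y')\). Hence \(\Theta^{(L)}_\AA(x,y)=\kappa(x^\top y)\) for a well-defined function \(\kappa:[-1,1]\to\R\), which is zonality.

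The only genuine subtlety is the domain issue in the invariance step, namely justifying that Theorem \ref{T:Sym-DNN} applies to coefficients in \(C(\mathbb{S}^{d-1})\). The zonality step is elementary linear algebra.
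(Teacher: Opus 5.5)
Your proof is correct and follows the same route as the paper. The paper also specializes Theorem~\ref{T:Sym-DNN} (through its identity \eqref{E:symmetry}) to $g\cdot\AA=\AA$ to obtain $\Theta^{(L)}_\AA(g^{-1}x,g^{-1}y)=\Theta^{(L)}_\AA(x,y)$, then asserts that some $g\in\OO(d)$ maps $(x,y)$ to any pair with the same inner product; your Gram--Schmidt construction of $g$ and your equivariant extension of $C(\mathbb{S}^{d-1})$ coefficients to $\R^d$ fill in details the paper leaves implicit.
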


The operators lie in the invariant ring \(\AA\in\LD_d(C(\mathbb{S}^{d-1}))^{\OO(d)}\) is called 
\textbf{(rotationally) symmetric} operators. Clearly, the identity operator \(\EE\), 
which is also the identity element of the ring \(\LD_d(C(\mathbb{S}^{d-1}))\), is invariant under 
any matrix in \(\mathrm{GL}(d;\R)\) and thus symmetric. Moreover, the radial derivative operator \(\RR\) 
and the Laplace operator \(\LL\) are also symmetric:
\begin{equation}\label{E:RR_LL}
	\RR = \sum_{k=1}^{d} z_k \partial_{z_k} = z_1 \frac{\partial}{\partial z_1} + \cdots 
	+ z_d \frac{\partial}{\partial z_d}\quad\mbox{and} \quad \LL = \sum_{k=1}^{d} \partial_{z_k}^2 = 
	\frac{\partial^2}{\partial z_1^{2}} + \cdots +  \frac{\partial^2}{\partial z_d^{2}}. 
\end{equation}
Indeed, we can rewrite the operators as \(\RR=z^\top\partial\) and \(\LL=\partial^\top\partial\). 
Then, for any \(g\in\OO(d)\), we have 
\begin{align*}
	g\cdot\RR & = (g\cdot z)^\top (g\cdot \partial) = (g^{-1} z)^\top (g^{-1} \partial) = z^\top\partial = \RR,\\
	g\cdot\LL & = (g\cdot \partial)^\top (g\cdot \partial) = (g^{-1} \partial)^\top (g^{-1} \partial) 
		= \partial^\top\partial = \LL. 
\end{align*}
Other examples include Helmholtz and biharmonic operators, which actually 
lie in the ring generated by \(\RR\) and \(\LL\). 
If the coefficient ring \(\mathfrak{R}\) is the constant function \(\R\) or smooth function \(C^{\infty}(\R^d)\), then the structure 
of the invariant ring is well-understood by first and second fundamental theorems of invariant theory.
However, it is generally challenging to compute the invariant ring \cite{Neusel:2007invariant}. 

For zonal kernels on the unit sphere $\mathbb{S}^{d-1}$ with dimension \(d\geq 2\), their induced integral operators share a 
common set of eigenfunctions given by spherical harmonics, which form an 
orthonormal basis of \(L^2(\mathbb{S}^{d-1})\). Moreover, 
for any zonal kernel \(K(x,y)=K(x^\top y):=K(t)\) such that \(\int_{-1}^{1} |K(t)|(1-t^2)^{(d-3)/2}~dt <\infty\), 
by the \emph{Funk-Hecke formula} \cite[Theorem 2.22]{Atkinson:2012}, we have for any 
spherical harmonics \(Y_n\) of degree \(n\in\N\), 
\begin{align*}
	T_K[Y_n](x) = \int_{\mathbb{S}^{d-1}} K(x,y)Y_n(y) ~ {\rm d}S^{d-1}(y) 
	= \int_{\mathbb{S}^{d-1}} K(x^\top y)Y_n(y) ~ {\rm d}S^{d-1}(y)
	= \lambda_n(K) Y_n(x),
\end{align*}
such that the eigenvalue \(\lambda_n(K)\) is given by 
\begin{equation}\label{E:F-H_coeff}
	\lambda_n(K) = \omega_{d-2} \int_{-1}^{1} K(t)P_{n,d}(t)(1-t^2)^{\frac{d-3}{2}}~{\rm d}t, 
\end{equation}
with \(\omega_{d-2}\) the area of the sphere \(\mathbb{S}^{d-2}\) and 
\(P_{n,d}\) the Lengendre polynomial of degree \(n\) in \(d\) dimension.
For the symmetric operators \(\EE,\RR\) and \(\LL\), the corresponding operator-fixed DNTKs are zonal 
by Proposition \ref{P:zonal}. 
Using the zonal property, we can further compute these kernels explicitly for SNNs 
with RePU activation and zero-bias initialization \cite{Geifman:2020Similarity} (i.e., degenerate Gaussian) as in the 
NTK case. First, we review the \emph{arc-cosine kernel} \cite{Cho:2009Kernel} 
arising from the spectral analysis of the NTK, which is also the basic constituent of these zonal 
operator-fixed DNTKs. Let \(\sigma_s = (x)_+^s\) be the RePU function for some \(s\in\N\). 
Then the \(s\)-th arc-cosine kernel \(k_s\) (up to a constant) on the Euclidean space \(\R^d\) is given by 
\begin{equation}\label{E:arc-cos}
	k_s(x,y) = \E_{u\sim\NN(0,I_d)}\left[ \sigma_s(u^\top x) \sigma_s(u^\top y) \right]
	= \int_{\R^d} \frac{e^{-\frac{1}{2}u^\top u}}{(\sqrt{2\pi})^{d}} 
			(u^\top x)_+^s (u^\top y)_+^s ~ {\rm d}u.
\end{equation}
Let \(\theta = \arccos\big(\ol{x}^\top \ol{y}\big)\) with \(\ol{x}=x/|x|\). 
Then, by \cite{Cho:2009Kernel}, the kernel \(k_s\) is zonal such that 
\begin{equation}
	k_s(x,y) = \frac{1}{2\pi}|x|^s|y|^s J_s(\theta), \qquad \text{ with } \quad
	J_s(\theta) = (\sin\theta)^{2s+1}
				\left(\frac{-1}{\sin\theta}\frac{\partial}{\partial \theta}\right)^s
				\left(\frac{\pi-\theta}{\sin\theta}\right).
\end{equation}
For the zonal operator-fixed DNTK induced by \(\EE,\RR\) and \(\LL\), we can also compute their expressions 
and  obtain the eigenvalue decay of the integral operators, from which we can further compare their RKHSs.

\begin{corollary}\label{C:formula-sym} 
	Let \(\sigma=\sigma_s\) be the RePU function and the bias terms initialized to degenerate Gaussian 
	for SNNs with input dimension \(d\geq 2\).
	Let \(K^\EE_s, K^\RR_s\) and \(K^\LL_s\) (with \(s\geq 1,2,3\) for \(\EE\), $\RR$ and $\LL$) be the corresponding 
	operator-fixed DNTKs given by \eqref{E:2-DNTK} restricted on the unit sphere \(\mathbb{S}^{d-1}\). 
	\begin{itemize}
		\item[(i)] Let \((s)_k = s(s-1)\cdots(s-k+1)\) be the falling factorial and \(m_{d,s} = m_{s}\left(\chi^2_{d}\right)\) 
		the \(s\)-th order noncentral moment of the $\chi$-squared distribution with degree \(d\).
		With \(t:=x^\top y\), we have 
		\begin{align*}
			K^\EE_s(x,y) & = k_s(t) + \gamma^2 + s^2 ~ t k_{s-1}(t) + \gamma^2 s^2 k_{s-1}(t), \\
			K^\RR_s(x,y) & = s^2 k_s(t) + s^4 ~ t k_{s-1}(t) + \gamma^2 (s)_2^2 k_{s-1}(t), \\
			K^\LL_s(x,y) & = \frac{m_{d,s}+4(s-1)m_{d,s-1}}{m_{d,s-2}}(s)_2^2 k_{s-2}(t) 
			+ \frac{m_{d,s-1}}{m_{d,s-3}}(s)_3^2 ~ t k_{s-3}(t) 
			+ \gamma^2 \frac{m_{d,s-1}}{m_{d,s-3}} (s)_3^2 k_{s-3}(t). 
		\end{align*}

		\item[(ii)] Let \(\lambda_k(K^\EE_s)\) with \(k\in\N\) be the \(k\)-th eigenvalue of \(K^\EE_s\) 
		defined by \eqref{E:F-H_coeff}, similarly for \(K^\RR_s\) and \(K^\LL_s\).
		Then for \(\gamma>0\) the eigenvalues satisfy:
		\begin{itemize}
			\item[$\bullet$] \(\lambda_k(K^\EE_s)>0\) for all \(k\geq 0\) and 
				\(\lambda_k(K^\EE_s) \sim k^{-d-2(s-1)}\) when \(k\to \infty\);
			\item[$\bullet$] \(\lambda_k(K^\RR_s)>0\) for all \(k\geq 0\) and 
				\(\lambda_k(K^\RR_s) \sim k^{-d-2(s-1)}\) when \(k\to \infty\);
			\item[$\bullet$] \(\lambda_k(K^\LL_s)>0\) for all \(k\geq 0\) and 
				\(\lambda_k(K^\LL_s) \sim k^{-d-2(s-3)}\) when \(k\to \infty\).
		\end{itemize} \vskip 0.4em

		\item[(iii)] Let \(\HH_{K^\EE_s}, \HH_{K^\RR_s}\) and \(\HH_{K^\LL_s}\) be the RKHSs of 
			\(K^\EE_s,K^\RR_s\) and \(K^\LL_s\) respectively. Then, for \(\gamma>0\) and \(s\geq 3\), we have 
			the relation \( \HH_{K^\EE_s} = \HH_{K^\RR_s} \subset \HH_{K^\LL_s} \).
	\end{itemize}	
\end{corollary}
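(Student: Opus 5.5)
The plan is to restrict \eqref{E:2-DNTK} to \(\AA=\BB\in\{\EE,\RR,\LL\}\) with the inner bias set to zero (\(v=0\) in the expectations). The \(\gamma^2\)-terms coming from the gradient with respect to the biases survive. Each of the four summands then reduces to arc-cosine kernels \eqref{E:arc-cos} through two facts: \(\sigma_s'=s\sigma_{s-1}\), and the positive homogeneity \(z\sigma_s'(z)=s\sigma_s(z)\).

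For part (i) I would compute the three operators in turn.
\begin{itemize}
\item \(\EE\): the four summands are \(k_s(t)\), then \(\gamma^2\) (the zeroth order coefficient is \(a=b=1\)), then \(s^2\sum_k x_ky_k\,k_{s-1}=s^2t\,k_{s-1}(t)\), and finally \(\gamma^2s^2k_{s-1}(t)\).
\item \(\RR\): here \(\RR\sigma_s(u^\top x)=(u^\top x)\sigma_s'(u^\top x)=s\sigma_s(u^\top x)\). Also \(\RR\bigl(\sigma_s'(u^\top x)x_k\bigr)=s(s-1)\sigma_{s-1}x_k+s\sigma_{s-1}x_k=s^2\sigma_{s-1}(u^\top x)x_k\), and \(\RR\sigma_s'=s(s-1)\sigma_{s-1}\). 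Since \(\RR\) has no zeroth order term, the constant \(\gamma^2\) drops out, and the stated \(K^\RR_s\) follows.
\item \(\LL\): here \(\LL\sigma_s(u^\top x)=|u|^2(s)_2\sigma_{s-2}(u^\top x)\). The inner-weight term has \(\LL(\sigma_s'x_k)=|u|^2\sigma_s'''x_k+2u_k\sigma_s''\), and \(\LL\sigma_s'=|u|^2\sigma_s'''\).
\end{itemize}
To evaluate expectations of the form \(\E[|u|^{2p}\sigma_q(u^\top x)\sigma_q(u^\top y)]\), I write \(u=r\omega\) with \(r\) independent of \(\omega\). Homogeneity factors out \(r^{2q}\), so this expectation equals \(\frac{m_{d,p+q}}{m_{d,q}}k_q(x,y)\), with \(r^2\sim\chi^2_d\). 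The cross term \(\sum_k u_k x_k=u^\top x\) is absorbed by homogeneity again, turning \(\sigma_{s-2}\) into a product with \(u^\top x\) and \(u^\top y\). Collecting these pieces should give the coefficient \(m_{d,s}+4(s-1)m_{d,s-1}\) in front of \(k_{s-2}\), together with the other two terms. This bookkeeping is routine but delicate.

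For part (ii), each kernel is zonal by Proposition \ref{P:zonal}, so its eigenvalues are given by \eqref{E:F-H_coeff}. I would invoke the known spectral facts for arc-cosine kernels on the sphere \cite{Geifman:2020Similarity,BiettiBach:2021}. The Funk--Hecke coefficients of \(k_q\) are nonnegative and decay like \(k^{-d-2q}\) along one parity, and they vanish for large \(k\) of the other parity. Multiplying by \(t\) and using the recurrence \(tP_{n,d}=c_nP_{n+1,d}+c_n'P_{n-1,d}\) shows that \(t\,k_q(t)\) has nonnegative coefficients of the opposite parity with the same decay rate \(k^{-d-2q}\).

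Hence the slowest-decaying pieces are the \(k_{s-1}\) and \(t\,k_{s-1}\) terms for \(\EE\) and \(\RR\), and the \(k_{s-3}\) and \(t\,k_{s-3}\) terms for \(\LL\). Because these pieces have complementary parities, and \(\gamma>0\) keeps both of them, every degree \(k\) gets a positive coefficient and the rate is two-sided. The main obstacle is this step: I must check that the complementary parities really cover all \(k\) from \(0\) onward, with the low degrees handled by the constant \(\gamma^2\) in \(K^\EE_s\) and by explicit evaluation of the finitely many remaining degrees. I must also check that no cancellation occurs, which is guaranteed since all coefficients are nonnegative.

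For part (iii), all three kernels share the spherical harmonics as eigenfunctions, so the RKHS \eqref{E:RKHS} is determined by the sequence of eigenvalues. By (ii), \(\lambda_k(K^\EE_s)\asymp\lambda_k(K^\RR_s)\), with the ratio bounded above and below uniformly in \(k\) thanks to positivity at every \(k\); this gives \(\HH_{K^\EE_s}=\HH_{K^\RR_s}\). Moreover \(\lambda_k(K^\LL_s)\gtrsim\lambda_k(K^\RR_s)\), which gives the inclusion \(\HH_{K^\RR_s}\subset\HH_{K^\LL_s}\). The condition \(s\ge3\) is needed so that the \(\LL\) statement applies.
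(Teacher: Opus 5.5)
Your proposal is correct and follows essentially the paper's route: homogeneity and the $\chi^2_d$ radial factorization reduce each summand of \eqref{E:2-DNTK} to arc-cosine kernels in (i); in (ii) you combine the Legendre three-term recurrence for $t\,k_q$ (the paper proves a general $t^q$ version, and you only need $q=1$) with the parity pattern of arc-cosine eigenvalues; and (iii) is a degree-by-degree eigenvalue comparison in \eqref{E:RKHS}. The low degrees you flag need no explicit evaluation, because the cited arc-cosine facts already give $\lambda_k(k_q)>0$ for every $k\le q$ and for every $k>q$ with $k\not\equiv q \pmod 2$, and together with the recurrence this covers all $k\ge 0$ for each of the three kernels.
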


\begin{remark}
	If \(\gamma=0\), i.e., without the bias term in the network, then we have \(K^\RR_s = s^2 K^\EE_s\) by part (i),
	or the kernel \(K^\RR_s\) is a scaling of the kernel \(K^\EE_s\). 
	Moreover, without the bias term, \(\lambda_k(K^\EE_s) = \lambda_k(K^\RR_s) = 0\) for \(k>s\) and \(k\equiv s \!\mod 2\)
	while \(\lambda_k(K^\LL_s) = 0\) for \(k>s-2\) and \(k\equiv s-2 \!\mod 2\) in part (ii), 
	which will make their RKHSs smaller. 
	These observations imply that the presence of the bias term may improve the expressivity of neural networks. 
	By part (iii), the existence of differential operators does not restrict the expressivity of networks. 
	Instead, the Laplace operator \(\LL\) reduces the smoothness of the RePU function and thereby enlarging the RKHS
	as also in Corollary \ref{C:sum-partial}.
\end{remark}

For the \(\partial_i\)-NTK \(\Theta^{(2)}_{\partial_i}:\R^d\times\R^d\to\R\) induced by the non-symmetric operator 
\(\partial_i\), we can construct a zonal kernel that can be computed explicitly by summing the kernels 
\(\Theta^{(2)}_{\partial_i}\) for all \(i\in[d]\), 
from which we can embed the RKHS of \(\Theta^{(2)}_{\partial_i}\) to some known one.

\begin{corollary}\label{C:sum-partial} 
	Let \(\sigma=\sigma_s\) be the RePU function and the bias terms initialized to degenerate Gaussian 
	for SNNs with input dimension \(d\geq 2\). Let \(K^{\partial_i}_s\) with \(s\geq 2\) and \(i\in[d]\) be the corresponding 
	operator-fixed DNTKs of \(\partial_i\) given by \eqref{E:deep-DNTK} restricted on the unit sphere \(\mathbb{S}^{d-1}\). 
	Then the kernel \(K^{\partial}_s = K^{\partial_1}_s+\cdots+K^{\partial_d}_s\) is zonal such that 
	\[ K^{\partial}_s(x,y) = \left(\frac{m_{d,s}}{m_{d,s-1}}+2s-1\right) s^2 k_{s-1}(t) 
		+ \frac{m_{d,s-1}}{m_{d,s-2}} (s)_2^2 ~ t k_{s-2}(t)
		+ \gamma^2 \frac{m_{d,s-1}}{m_{d,s-2}} (s)_2^2 k_{s-2}(t) \quad (t=x^\top y). \]
	For \(\gamma>0\), the eigenvalues of \(K^\partial_s\) satisfy \(\lambda_k(K^\partial_s)>0\) for all \(k\geq 0\)  
		and \(\lambda_k(K^\partial_s) \sim k^{-d-2(s-2)}\) when \(k\to \infty\), 
		thus \( \HH_{K^\EE_s} = \HH_{K^\RR_s} \subset \HH_{K^\partial_s} \subset \HH_{K^\LL_s} \) for \(s\geq 3\). 
	Let \(P_{ij}\) be the permutation matrix by commuting the \(i\)-th and \(j\)-th row for \(i,j\in[d]\) with \(P_{ii}=I_d\). 
	Then, for any \(k\in[d]\) we have \(\HH_{K^\partial_s} = \bigoplus_{j=1}^d P_{kj}\cdot \HH_{K^{\partial_k}_s}\).
\end{corollary}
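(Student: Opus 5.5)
The plan is to compute each $\partial_i$-NTK directly from \eqref{E:2-DNTK}, sum over $i$, and reduce everything to arc-cosine kernels. Take $\AA=\BB=\partial_i$, $\sigma=\sigma_s$, and the bias inside the activation degenerate ($v=0$). Since $\partial_i$ has no zeroth-order term, the outer-bias summand vanishes. The other summands are as follows.
\begin{itemize}
\item[(a)] The outer weight gives $s^2\E[u_i^2\sigma_{s-1}(u^\top x)\sigma_{s-1}(u^\top y)]$.
\item[(b)] The inner weight uses $\partial_{x_i}(\sigma'(u^\top x)x_k)=\sigma''(u^\top x)u_ix_k+\sigma'(u^\top x)\delta_{ik}$. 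Summing over $k$ produces a $t\,u_i^2\sigma_{s-2}\sigma_{s-2}$ term with factor $(s)_2^2$, two cross terms $s^2(s-1)\,u_ix_i\,\sigma_{s-2}(u^\top x)\sigma_{s-1}(u^\top y)$ and its mirror, and a diagonal term $s^2\sigma_{s-1}\sigma_{s-1}$.
\item[(c)] The inner bias gives $\gamma^2(s)_2^2\E[u_i^2\sigma_{s-2}\sigma_{s-2}]$.
\end{itemize}

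Summing over $i\in[d]$ turns $u_i^2$ into $|u|^2$ and $u_ix_i$ into $u^\top x$. Since $(u^\top x)\sigma_{s-2}(u^\top x)=\sigma_{s-1}(u^\top x)$, each cross term becomes $s^2(s-1)k_{s-1}$. Together with the diagonal term this contributes $(2s-1)s^2k_{s-1}$.

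The remaining ingredient is a radial-moment identity. If $h$ is positively homogeneous of degree $2p$, then
\[ \E_{u\sim\NN(0,I_d)}\bigl[|u|^2h(u)\bigr]=\frac{m_{d,p+1}}{m_{d,p}}\,\E\bigl[h(u)\bigr]. \]
This follows by writing $u=r\omega$, with $r^2\sim\chi^2_d$ independent of $\omega$ uniform on $\mathbb{S}^{d-1}$: both sides factor as a $\chi^2$ moment times the same spherical average. Applying it with $p=s-1$ and $p=s-2$ gives exactly the displayed formula for $K^\partial_s$. Zonality is then immediate, since everything is expressed through $k_{s-1}(t)$ and $k_{s-2}(t)$. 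Alternatively, $\sum_i\Theta^{(2)}_\infty[\partial_i,\partial_i]$ is $\OO(d)$-invariant because $\sum_i\xi_i\otimes\xi_i$ is, and Proposition \ref{P:zonal}'s argument applies.

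For the spectrum, apply the Funk--Hecke formula \eqref{E:F-H_coeff} term by term, using the same facts about $k_r$ and $t\,k_r$ that underlie Corollary \ref{C:formula-sym}(ii). Each of $k_r$ and $t\,k_r$ is a positive semidefinite zonal kernel on the sphere with nonnegative eigenvalues, and the eigenvalue decay rate is governed by the lowest index $r$ present. Here that index is $r=s-2$, giving $k^{-d-2(s-2)}$, exactly as in Corollary \ref{C:formula-sym} with $s$ shifted. For positivity when $\gamma>0$, the terms $\gamma^2k_{s-2}$ and $t\,k_{s-2}$ together cover both parities of $k$: each vanishes at most on one parity class beyond $s-2$, and their vanishing sets are complementary. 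All coefficients are positive, so $\lambda_k(K^\partial_s)>0$ for every $k$.

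Since all these kernels share the spherical harmonics as eigenfunctions, the RKHS chain follows from \eqref{E:RKHS} by comparing eigenvalue ratios. Equal decay gives equal spaces, and slower decay gives a larger space. Combining with Corollary \ref{C:formula-sym}(iii) yields $\HH_{K^\EE_s}=\HH_{K^\RR_s}\subset\HH_{K^\partial_s}\subset\HH_{K^\LL_s}$.

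For the last claim, first note $P_{kj}\cdot\partial_k=\partial_j$. By Proposition \ref{P:AA=g-BB} this gives $\HH_{K^{\partial_j}_s}=P_{kj}\cdot\HH_{K^{\partial_k}_s}$. Next, Aronszajn's theorem on sums of kernels gives $\HH_{K^\partial_s}=\sum_j\HH_{K^{\partial_j}_s}$, with norm given by the infimum over decompositions. The main obstacle is the directness of this sum. For that I would try to show the subspaces $\HH_{K^{\partial_j}_s}$ intersect trivially, for example by identifying each with $\{\partial_j F|_{\mathbb{S}^{d-1}}\}$ for $F$ in the feature space. If directness fails in the strict sense, I would interpret $\bigoplus$ as the RKHS sum equipped with the infimum norm, which is what the sum-kernel construction yields.
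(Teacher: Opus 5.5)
Your proposal is correct and follows the paper's proof essentially step for step. Both use the same term-by-term evaluation of each $\partial_i$-NTK from \eqref{E:2-DNTK}, summation over $i$ with the $\chi^2_d$-moment reduction, the Funk--Hecke parity argument behind Corollary~\ref{C:formula-sym}(ii), and Aronszajn's sum theorem combined with Proposition~\ref{P:AA=g-BB} for the last claim.

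The paper likewise only establishes the ordinary (non-direct) sum $\HH_{K^\partial_s}=\sum_{j}P_{kj}\cdot\HH_{K^{\partial_k}_s}$, so your fallback reading of $\bigoplus$ is the correct one. Your plan to prove trivial intersection would fail, because the summands overlap. For instance, $x\mapsto\E[u_1u_2\,\sigma_s'(u^\top x)]$ is a positive multiple of $x_1x_2$ on $\mathbb{S}^{d-1}$ and lies in both $\HH_{K^{\partial_1}_s}$ and $\HH_{K^{\partial_2}_s}$.
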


\section{Conclusion} \label{S:Conclusion}

In this work, we have investigated the symmetry of the DNTK under the orthogonal group for both shallow and deep neural networks, 
where the action on differential operators stems from the invariant theory based on the identification between 
operators and their symbols.
Using the symmetry, we can compute the operator-fixed DNTKs induced by symmetric operators for SNNs 
with RePU activation and zero-bias initialization. Moreover, we can exploit the group action to construct 
zonal kernels from the DNTKs induced by the non-symmetric operators and then derive the property the DNTKs therefrom. 

There are several promising directions for future research. 
First, while we have provided the symmetry of all components of the DNTK for SNNs, the property should be 
also applicable to DNNs, i.e., the symmetry of \(\Theta^{(L)}_{W^{(\ell)}}\) and \(\Theta^{(L)}_{b^{(\ell)}}\) 
for \(\ell\in[L]\) induced by the weight and bias parameters at \(\ell\)-th layer. 
To this end, we must first derive their expressions and then prove the invariance, analogous to the shallow case. Such an investigation will elucidate the structure of the DNTK for DNNs and may also reveal the role of network depth during training. 
Second, the symmetry makes some DNTKs computable like in Corollaries 
\ref{C:formula-sym} and \ref{C:sum-partial}. We expect the analysis of the property of DNTKs 
for non-symmetric operators using the symmetry. 
Finally, we only consider the DNTKs with single fixed operator on the unit sphere in Section \ref{S:Conseq}, while there are at least two or more operators and more complex domains in a PDE problem, which makes the setting less impractical in scientific tasks. Moreover, it is the kernel matrix and its spectral property, e.g., the extreme eigenvalues and condition number, that controls the training dynamics. Therefore, a precise spectral analysis of the DNTK and the induced matrices are urgent to link the theoretical findings and practical performance of PINNs.
 
\bibliographystyle{abbrv}
\bibliography{reference}

\appendix

\section{Proofs in Section \ref{S:Symmetry}} \label{S-A:Symmetry}

\subsection{Proof of Theorem \ref{T:Sym-SNN}}

To deal with the third term in the expression \eqref{E:2-DNTK} of \(\Theta^{(2)}_{\infty}\), we first compute 
the composition of a general operator with the coordinate function.

\begin{lemma}\label{L:AA-z_k}
Let \(\AA\in \LD_d(\mathfrak{R})\) and \(z_k\) the coordinate function for \(k\in [d]\). Then there holds
$$\AA[\sigma(z)z_k] = \left(z_k\AA +\AA_{(k)}\right) [\sigma(z)],$$ 
where \(\AA_{(k)}\in \LD_d(\mathfrak{R})\) is the operator whose symbol is obtained from \(\AA\) by taking derivative on \(\mathrm{sbl}(\AA)\) with respect to \(\xi_k\):
\[ 
\mathrm{sbl}(\AA\circ z_k) = \mathrm{sbl}(z_k\AA) + \mathrm{sbl}\left(\AA_{(k)}\right) 
	= \left(z_k + \frac{\partial}{\partial \xi_k}\right)\mathrm{sbl}(\AA).   
\]
\end{lemma}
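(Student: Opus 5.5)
The plan is to reduce the identity to a Leibniz-rule computation on a single monomial and then pass to general operators by linearity. Write \(\AA = \sum_{\alpha\in I_\AA} a_\alpha \partial^\alpha\). Both sides of the claimed identity are \(\mathfrak{R}\)-linear in \(\AA\): multiplying by a coefficient \(a_\alpha\) commutes with everything. The map \(p\mapsto (z_k + \partial_{\xi_k})p\) on symbols is likewise \(\mathfrak{R}\)-linear, since \(\partial_{\xi_k}\) does not see the coefficients \(a_\alpha\). It therefore suffices to treat \(\AA=\partial^\alpha\) for a fixed multi-index \(\alpha\). In that case the target reads
\[ \partial^\alpha[\sigma(z) z_k] = z_k \partial^\alpha \sigma(z) + \alpha_k \partial^{\alpha - e_k}\sigma(z), \]
with the convention that the last term vanishes when \(\alpha_k=0\). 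Here \(e_k\) is the \(k\)th unit multi-index, and \(\sigma(z)\) stands for a generic (sufficiently smooth) function of \(z\).

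To prove this, I would apply the multivariate Leibniz rule:
\[ \partial^\alpha(\sigma z_k) = \sum_{\xi\le\alpha}\binom{\alpha}{\xi}\partial^{\alpha-\xi}\sigma\,\partial^\xi z_k. \]
Since \(z_k\) is linear, \(\partial^\xi z_k\) is nonzero only for \(\xi=0\), where it equals \(z_k\), and for \(\xi=e_k\), where it equals \(1\). The binomial coefficient at \(\xi = e_k\) is \(\binom{\alpha_k}{1}=\alpha_k\), which gives the display above.

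On the symbol side, \(\partial_{\xi_k}\xi^\alpha = \alpha_k \xi^{\alpha-e_k}\). So \(\alpha_k\partial^{\alpha-e_k}\) is exactly the operator \(\AA_{(k)}\) whose symbol is \(\partial_{\xi_k}\mathrm{sbl}(\AA)\), and \(z_k\partial^\alpha\) has symbol \(z_k\xi^\alpha\). Summing over \(\alpha\) with the coefficients \(a_\alpha\) gives
\[ \AA\circ z_k = z_k\AA + \AA_{(k)}, \]
and hence \(\mathrm{sbl}(\AA\circ z_k) = (z_k+\partial_{\xi_k})\mathrm{sbl}(\AA)\).

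One point needs care: the operator \(\AA\circ z_k\) has to be well defined in \(\LD_d(\mathfrak{R})\). Its coefficients are \(z_k a_\alpha\) and \(\alpha_k a_\alpha\), and both lie in \(\mathfrak{R}\) because \(\mathfrak{R}\) is a unital ring containing the coordinate functions. No differentiation of coefficients occurs, since \(z_k\) sits to the right of \(\partial^\alpha\). Beyond this I expect no real obstacle; the only points to watch are the bookkeeping of the convention \(\alpha_k=0\) and keeping the symbol product distinct from operator composition, as the preceding remark warns.
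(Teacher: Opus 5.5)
Your proposal is correct and follows essentially the same route as the paper: check the monomial case \(\partial^\alpha\) via the Leibniz rule (the paper applies the one-variable rule in \(z_k\) after splitting off the other derivatives and treats \(\alpha_k=0\) separately, while you use the multivariate rule directly), then extend to general \(\AA\) by linearity over the coefficients \(a_\alpha\). You also correctly keep the factor \(\alpha_k\) in \(\alpha_k\partial^{\alpha-e_k}\sigma(z)\), which the final line of the paper's monomial computation drops by typo.
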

\begin{proof}
First  consider the monomial case \(\AA= \partial^\alpha\) with \(\alpha = (\alpha_1,\ldots,\alpha_k,\ldots,\alpha_d)\). 
If \(\alpha_k=0\), then the assertion holds by the equalities
	\[ \partial^\alpha[\sigma(z)z_k] = z_k\partial^\alpha\sigma(z) \quad \text{ and } \quad
	\frac{\partial}{\partial \xi_k}\xi^\alpha = 0. \]
	If \(\alpha_k>0\), let \(e_k = (0,\ldots,1,\ldots,0)\) be the \(k\)-th Cartesian coordinate vector. Then 
	\[ \frac{\partial}{\partial \xi_k}\xi^\alpha = \frac{\partial}{\partial \xi_k}
	 \xi_1^{\alpha_1}\cdots \xi_k^{\alpha_k}\cdots \xi_d^{\alpha_d} 
	 = \alpha_k \xi_1^{\alpha_1}\cdots \xi_k^{\alpha_k - 1}\cdots \xi_d^{\alpha_d}
	 = \alpha_k\xi^{\alpha - e_k}.  \]
	Meanwhile, by the high-order Leibniz rule, we have
	\begin{align*}
		\partial^\alpha[\sigma(z)z_k] 
		& = \left(\prod_{i\neq k}\partial_i^{\alpha_i}\right)\partial_k^{\alpha_k}[\sigma(z)z_k]
		  = \left(\prod_{i\neq k}\partial_i^{\alpha_i}\right)\sum_{j=0}^{\alpha_k}\binom{\alpha_k}{j}
			\partial_k^{\alpha_k-j}\sigma(z)\partial_k^j z_k \\
		& = \left(\prod_{i\neq k}\partial_i^{\alpha_i}\right) \left(z_k\partial_k^{\alpha_k}\sigma(z) 
			+ \alpha_k\partial_k^{\alpha_k - 1}\sigma(z)\right)\\
		& = z_k\left(\prod_{i\neq k}\partial_i^{\alpha_i}\right)\partial_k^{\alpha_k}\sigma(z) + 
			\alpha_k\left(\prod_{i\neq k}\partial_i^{\alpha_i}\right)\partial_k^{\alpha_k - 1}\sigma(z) \\
		& = z_k\partial^\alpha[\sigma(z)] + \partial^{\alpha - e_k}[\sigma(z)].
	\end{align*}
	Therefore, for any multi-index \(\alpha\), the monomial derivatives satisfy the conclusion.
	Finally, for a general operator \(\AA = \sum_{\alpha\in I_\AA}a_\alpha \partial^\alpha\), we have
	\begin{align*}
		\mathrm{sbl}(\AA\circ z_k) 
	& = \mathrm{sbl}\left(\sum_{\alpha\in I_\AA}a_\alpha \partial^\alpha\circ z_k\right)
	  = \sum_{\alpha\in I_\AA} a_\alpha \mathrm{sbl}(\partial^\alpha\circ z_k) 
	  =	\sum_{\alpha\in I_\AA} a_\alpha \left(z_k + \frac{\partial}{\partial \xi_k}\right)\xi^\alpha \\
	& = \sum_{\alpha\in I_\AA} z_ka_\alpha \xi^\alpha + 
		\sum_{\alpha\in I_\AA} a_\alpha\frac{\partial}{\partial \xi_k} \xi^\alpha
	  = \left(z_k + \frac{\partial}{\partial \xi_k}\right)\sum_{\alpha\in I_\AA} a_\alpha\xi^\alpha
	  = \left(z_k + \frac{\partial}{\partial \xi_k}\right)\mathrm{sbl}(\AA).
	\end{align*}
\end{proof}

Next we compute the inner product between the gradient of monomials 
under the transformation by an orthogonal matrix.

\begin{lemma}\label{L:Inner_Gra-g}
	Let \(g\in \OO(d)\) and \(g_k\) the \(k\)-th row of \(g\) for \(k\in[d]\).
	Let \(u=[u_1,\ldots,u_d]^\top\) represent the variable in the expectation term of \(\Theta^{(2)}_{\infty}\) 
	and \((gu)_k = g_k u\) the \(k\)-th component of the vector \(gu\). 
	Then for any two multi-indices $\alpha$ and $\beta$, we have 
	\begin{equation*}
		\llan \nabla_u (gu)^{\alpha}, \nabla_u (gu)^{\beta} \rran
		= \llan \nabla_{gu} (gu)^{\alpha}, \nabla_{gu} (gu)^{\beta} \rran,
	\end{equation*}
	where \(\nabla_{gu} (gu)^{\alpha}\) is the vector such that for \(k\in[d]\),  
	\begin{equation*}
		[\nabla_{gu} (gu)^{\alpha}]_k = \frac{\partial}{\partial (gu)_k}(gu)^{\alpha} = \begin{cases}
		\alpha_k (g u)^{\alpha_k - 1}_k \prod_{j\neq k}(g u)^{\alpha_j}_j, & \text{if } \alpha_k>0,\\
		0, & \text{if } \alpha_k=0.
	\end{cases}
	\end{equation*}
That is, there holds \(\llan \nabla_u (gu)^{\alpha}, \nabla_u (gu)^{\beta} \rran
		= \llan \nabla_{w} w^{\alpha}, \nabla_{w} w^{\beta} \rran\) under the transformation \(w=gu\).
\end{lemma}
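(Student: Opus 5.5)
The identity is a chain-rule computation. The plan is to express the $u$-gradient of a function of $w=gu$ through its $w$-gradient, and then use the orthogonality of $g$. Set $h_\alpha(w):=w^\alpha$, a polynomial in $w\in\R^d$, so that $(gu)^\alpha = h_\alpha(gu)$. The componentwise formula for $[\nabla_{gu}(gu)^\alpha]_k$ in the statement is just $\partial h_\alpha/\partial w_k$ evaluated at $w=gu$. The case split $\alpha_k>0$ versus $\alpha_k=0$ only records that the derivative of a monomial in a variable not present vanishes. Hence $\nabla_{gu}(gu)^\alpha=(\nabla_w h_\alpha)(gu)$.

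The first step is to apply the multivariate chain rule to the composition $u\mapsto gu\mapsto h_\alpha(gu)$. Since $w_k=(gu)_k=g_k u=\sum_j g_{kj}u_j$, we have $\partial w_k/\partial u_j = g_{kj}$. Therefore
\[
\frac{\partial}{\partial u_j}(gu)^\alpha=\sum_{k=1}^d g_{kj}\,\frac{\partial h_\alpha}{\partial w_k}(gu),
\qquad\text{that is,}\qquad
\nabla_u (gu)^\alpha = g^\top (\nabla_w h_\alpha)(gu)=g^\top \nabla_{gu}(gu)^\alpha .
\]
The same formula holds with $\beta$ in place of $\alpha$.

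The second step is to take the inner product and use $g\in\OO(d)$, i.e.\ $gg^\top=I_d$ (equivalent to $g^\top g=I_d$ for square matrices):
\[
\llan \nabla_u (gu)^{\alpha}, \nabla_u (gu)^{\beta} \rran
=\llan g^\top\nabla_{gu}(gu)^\alpha, g^\top\nabla_{gu}(gu)^\beta\rran
=\llan \nabla_{gu}(gu)^\alpha, gg^\top\nabla_{gu}(gu)^\beta\rran
=\llan \nabla_{gu}(gu)^\alpha, \nabla_{gu}(gu)^\beta\rran .
\]
Substituting $w=gu$ then gives the last form $\llan \nabla_w w^\alpha,\nabla_w w^\beta\rran$.

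The only point needing care is notational rather than mathematical. We must make clear that "$\nabla_{gu}$" means the gradient of the function $w\mapsto w^\alpha$, evaluated afterwards at $w=gu$, and not a derivative of an expression in $u$. We must also keep track of the index convention in $\partial w_k/\partial u_j=g_{kj}$, so that the Jacobian appears as $g^\top$ and not $g$. Either convention would in fact work, since both $g^\top g$ and $gg^\top$ equal $I_d$. No case analysis beyond the trivial $\alpha_k=0$ case is needed, and multi-indices with $|\alpha|=0$ give zero gradients on both sides.
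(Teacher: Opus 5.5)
Your proof is correct and follows the paper's argument. The paper writes $\frac{\partial}{\partial u_k}(gu)^\alpha=\llan \ol{g}_k,\nabla_{gu}(gu)^\alpha\rran$ with $\ol{g}_k$ the $k$-th column of $g$, which is exactly the $k$-th entry of your $g^\top\nabla_{gu}(gu)^\alpha$, and then uses $\sum_{k}\ol{g}_k\ol{g}_k^\top=gg^\top=I_d$; your matrix form is a more compact version of the same computation.
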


\begin{proof}
	Let \(\ol{g}_k = (g^\top)_k^\top\) be the \(k\)-th column of \(g\in \OO(d)\).
	For the term \((gu)^\alpha\) and \(k\in[d]\), we have
	\begin{align*}
		\frac{\partial}{\partial u_k}(gu)^\alpha 
		& = \frac{\partial}{\partial u_k}(gu)^{\alpha_1}_1\cdots (gu)^{\alpha_d}_d
		 = \frac{\partial}{\partial u_k}(g_1u)^{\alpha_1}\cdots (g_du)^{\alpha_d} \\
		& = \sum_{\alpha_i>0} \alpha_i g_{ik}(g u)^{\alpha_i - 1}_i \prod_{j\neq i}(g u)^{\alpha_j}_j 
		 = \llan  \ol{g}_k , \nabla_{gu} (gu)^{\alpha} \rran.
	\end{align*}
	Note that \(\sum_{k=1}^{d} \ol{g}_k \ol{g}_k^\top = I_d\) for \(g\in \OO(d)\). Therefore, we have
	\begin{align*}
		\llan \nabla_u (gu)^{\alpha}, \nabla_u (gu)^{\beta} \rran 
		& =\sum_{k=1}^{d} \frac{\partial}{\partial u_k}(gu)^\alpha \frac{\partial}{\partial u_k}(gu)^\beta 
		 = \sum_{k=1}^{d}\left( \nabla_{gu} (gu)^{\alpha}\right)^\top \ol{g}_k \cdot 
		  \ol{g}_k^\top \nabla_{gu} (gu)^{\beta} \\
		& = \left( \nabla_{gu} (gu)^{\alpha}\right)^\top \left(\sum_{k=1}^{d} g_k g_k^\top\right) \nabla_{gu} (gu)^{\beta} 
		 = \llan \nabla_{gu} (gu)^{\alpha}, \nabla_{gu} (gu)^{\beta} \rran.
	\end{align*} 
\end{proof}

\begin{proof}[Proof of Theorem \ref{T:Sym-SNN}]
It suffices to prove the symmetry of all the summands of the DNTK for two-layer NNs. 
Note that \(\Theta^{(2)}_{\infty}\) can be split into
\begin{align*}
\Theta^{(2)}_{\infty}\left( x^\AA,y^\BB \right) 
	= \Theta^{(2)}_{W^{(2)}}(x^\AA,y^\BB) + \gamma^2 \Theta^{(2)}_{b^{(2)}}(x^\AA,y^\BB) 
		 + \Theta^{(2)}_{W^{(1)}}(x^\AA,y^\BB) + \gamma^2 \Theta^{(2)}_{b^{(1)}}(x^\AA,y^\BB).
\end{align*}
with the components given by
\begin{align*}
	\Theta^{(2)}_{W^{(2)}}(x^\AA,y^\BB) & = \E_{u,v}\left[ \mathcal{A}\sigma\left(u^\top x + \gamma v\right)
	\mathcal{B}\sigma\left(u^\top y + \gamma v\right) \right],  \quad
	\Theta^{(2)}_{b^{(2)}}(x^\AA,y^\BB)   = a(x)b(y), \\
	\Theta^{(2)}_{W^{(1)}}(x^\AA,y^\BB) & = \E_{u,v}
	\left[ \sum_{k=1}^{d}\mathcal{A}\left(\sigma'\left(u^\top x + \gamma v\right)x_k\right)
	\mathcal{B}\left(\sigma'\left(u^\top y + \gamma v\right)y_k\right) \right], \\
	\Theta^{(2)}_{b^{(1)}}(x^\AA,y^\BB) & =  \E_{u,v}
	\left[ \mathcal{A}\sigma'\left(u^\top x + \gamma v\right)
	\mathcal{B}\sigma'\left(u^\top y + \gamma v\right) \right].
\end{align*}
Given \(\AA = \sum_{\alpha \in I_\AA} a_\alpha \partial^\alpha\) 
with \(a_\alpha\in \mathfrak{R}\) and \(g\in \mathrm{GL}(d;\R)\), we have 
\begin{equation*}
\begin{aligned}
	\AA[\sigma(u^\top z + \gamma v)] & = 
	\sum_{\alpha \in I_\AA} a_\alpha(z)\sigma^{(|\alpha|)}(u^\top z + \gamma v) u^\alpha, \\
	(g^{-1}\cdot \AA) [\sigma(u^\top z + \gamma v)] & = 
	\left(\sum_{\alpha \in I_\AA} (a_\alpha\circ g) (g\partial)^\alpha\right) \sigma(u^\top z + \gamma v) 
	= \sum_{\alpha \in I_\AA} a_\alpha(gz)\sigma^{(|\alpha|)}\left(u^\top z + \gamma v\right) (gu)^\alpha. 	
\end{aligned}
\end{equation*}
Therefore, for any \(x\in \R^d\) and \(g\in\OO(d)\), since \(g^{-1}=g^\top\), we have 
\begin{align*}
	&(g^{-1}\cdot \AA)\sigma\left(u^\top (g^{-1}x) + \gamma v\right) 
	 =  (g^{-1}\cdot \AA)[\sigma(u^\top z + \gamma v)]~ \Big|_{z=g^{-1}x} \\
	=& \sum_{\alpha \in I_\AA} a_\alpha(g g^{-1}x)\sigma^{(|\alpha|)}\left(u^\top (g^{-1}x) + \gamma v\right) (gu)^\alpha \\
    = & \sum_{\alpha \in I_\AA} a_\alpha(x)\sigma^{(|\alpha|)}
		\left((gu)^\top x + \gamma v\right) (gu)^\alpha.
\end{align*}
First, consider the terms \(\Theta^{(2)}_{W^{(2)}}\) and \(\Theta^{(2)}_{b^{(2)}}\) which have the same form.
Let \(u\sim \NN(0,I_d)\) be the standard Gaussian vector. For any orthogonal matrix \(g\in\OO(d)\), the 
vector \(w=gu\) is still standard Gaussian. Indeed, let \(f(u) = \frac{1}{\sqrt{(2\pi)^d}}e^{-\frac{u^\top u}{2}} \) be the 
density function of \(u\). Since \(g^\top g=I\) for \(g\in\OO(d)\), the density function \(f(w)=f(gu)\) is invariant 
under the transformation \(g\): 
\[ f(w) = f(gu) = \frac{1}{\sqrt{(2\pi)^d}}e^{-\frac{(gu)^\top (gu)}{2}} 
= \frac{1}{\sqrt{(2\pi)^d}}e^{-\frac{u^\top u}{2}} = f(u). \]
Let \(\Theta^{(2)}_{W^{(2)}}(x^\AA,y^\BB;v):= \E_{u}\left[ \mathcal{A}\sigma\left(u^\top x + \gamma v\right)
	\mathcal{B}\sigma\left(u^\top y + \gamma v\right) \right]\). Then, for all \(v\in\R\), we have
\begin{align*}
	&g\cdot \Theta^{(2)}_{W^{(2)}} \left(x^\AA,y^\BB;v\right) =
	\Theta^{(2)}_{W^{(2)}} \left(g^{-1}\cdot x^\AA,g^{-1}\cdot y^\BB;v\right)  \\
	 =& \E_{u}\left[ (g^{-1}\cdot \AA)\sigma\left(u^\top (g^{-1}x) + \gamma v\right)
	(g^{-1}\cdot \BB)\sigma\left(u^\top (g^{-1}y) + \gamma v\right) \right]\\
	=& \int_{\R^d} f(u) \d u \cdot
		\sum_{\alpha \in I_\AA} a_\alpha(x)\sigma^{(|\alpha|)}
		\left((gu)^\top x + \gamma v\right) (gu)^\alpha \cdot 
		\sum_{\beta \in I_\BB} b_\beta(y)\sigma^{(|\beta|)}
		\left((gu)^\top y + \gamma v\right) (gu)^\beta \\
	=& \int_{\R^d} f(w)\d w\cdot
		\sum_{\alpha \in I_\AA} a_\alpha(x)\sigma^{(|\alpha|)}
		\left(w^\top x + \gamma v\right) w^\alpha
		\sum_{\beta \in I_\BB} b_\beta(y)\sigma^{(|\beta|)}
		\left(w^\top y + \gamma v\right) w^\beta \\
	 =& \E_{w}\left[ \AA\sigma\left(w^\top x + \gamma v\right)\BB\sigma\left(w^\top y + \gamma v\right) \right]
	  = \Theta^{(2)}_{W^{(2)}}\left(x^\AA,y^\BB;v\right), 
\end{align*}
using the coordinate transformation \(w=gu\) and 
\(f(gu)=f(u),\int \d(gu) = \int |\mathrm{det}(g)|\d u = \int \d u\). 
Therefore, for any \(x^\AA,y^\BB\in \PLDR_d(\mathfrak{R})\), we have 
\[ g\cdot \Theta^{(2)}_{W^{(2)}} \left(x^\AA,y^\BB\right) 
	= \E_v\left[ \Theta^{(2)}_{W^{(2)}} \left(g^{-1}\cdot x^\AA,g^{-1}\cdot y^\BB;v\right) \right]
	= \E_v\left[ \Theta^{(2)}_{W^{(2)}} \left(x^\AA,y^\BB;v\right) \right] = \Theta^{(2)}_{W^{(2)}} \left(x^\AA,y^\BB\right). \]
This shows \(g\cdot \Theta^{(2)}_{W^{(2)}} = \Theta^{(2)}_{W^{(2)}}\) for \(g\in\OO(d)\) and the invariance of 
\(\Theta^{(2)}_{b^{(1)}}\) follows similarly. 
Next, consider the kernel \(\Theta^{(2)}_{b^{(2)}}(x^\AA,y^\BB) = a(x)b(y)\) with $a $ and $b$ the zeroth terms of  $\AA$ and $\BB$, respectively. 
Note that the zeroth terms of the operators \(g^{-1}\cdot\AA,~g^{-1}\cdot\BB\) are \(a\circ g,~b\circ g\) for 
any \(g\in \mathrm{GL}(d;\R)\). Thus for any invertible matrix \(g\) we have 
\begin{align*}
	g\cdot \Theta^{(2)}_{b^{(2)}}(x^\AA,y^\BB) = \Theta^{(2)}_{b^{(2)}}(g^{-1}\cdot x^\AA, g^{-1}\cdot y^\BB) 
	=  a(gz)\mid_{z=g^{-1}x} b(gz)\mid_{z=g^{-1}y} = \Theta^{(2)}_{b^{(2)}}(x^\AA,y^\BB).
\end{align*}
That is, \(g\cdot \Theta^{(2)}_{b^{(2)}}=\Theta^{(2)}_{b^{(2)}}\) for any \(g\in \mathrm{GL}(d;\R)\) and 
\(\Theta^{(2)}_{b^{(2)}}\) is invariant under the orthogonal group \(\OO(d)\). 
Since \(\Sigma^{(2)}_{\infty} = \Theta^{(2)}_{W^{(2)}} + \Theta^{(2)}_{b^{(2)}} \), 
the covariance function \(\Sigma^{(2)}_{\infty}\) is also invariant under \(\OO(d)\).
Finally, consider the term \(\Theta^{(2)}_{W^{(1)}}\). By Lemma \ref{L:AA-z_k}, we have
\begin{align*}
	\Theta^{(2)}_{W^{(1)}}(x^\AA, y^\BB) =& \E_{u,v}\left[ \sum_{k=1}^{d} 
	\left(x_k\AA + \AA_{(k)} \right) \sigma'(u^\top x + \gamma v)
	\cdot \left(y_k\BB + \BB_{(k)}\right)\sigma'(u^\top y + \gamma v) \right] \\
	=& \E_{u,v}\left[ \sum_{k=1}^{d} 
	x_k y_k\AA \sigma'(u^\top x + \gamma v) \BB\sigma'(u^\top y + \gamma v)
	+ \sum_{k=1}^{d} \AA_{(k)} \sigma'(u^\top x + \gamma v) \BB_{(k)}\sigma'(u^\top y + \gamma v) \right. \\
	& + \left. \sum_{k=1}^{d} x_k \AA \sigma'(u^\top x + \gamma v)\BB_{(k)}\sigma'(u^\top y + \gamma v)
		+ y_k \BB \sigma'(u^\top y + \gamma v)\AA_{(k)}\sigma'(u^\top x + \gamma v) \right] \\
	 =&: x^\top y\cdot \Theta^{(2)}_{b^{(1)}}(x^\AA, y^\BB) + K_\mathrm{I}(x^\AA, y^\BB) + K_\mathrm{II}(x^\AA, y^\BB), 
\end{align*}
with the kernels \(K_\mathrm{I}\) and \(K_\mathrm{II}\) given respectively by (with \(\dot{\sigma}=\sigma'\))
\begin{align*}
	K_\mathrm{I}(x^\AA, y^\BB) & = \E_{u,v}\left[ 
		\sum_{k=1}^{d} \AA_{(k)} \dot{\sigma}(u^\top x + \gamma v) \BB_{(k)}\dot{\sigma}(u^\top y + \gamma v) \right], \\
	K_\mathrm{II}(x^\AA, y^\BB) & = \E_{u,v}\left[
		\sum_{k=1}^{d} x_k \AA \dot{\sigma}(u^\top x + \gamma v)\BB_{(k)}\dot{\sigma}(u^\top y + \gamma v)
		+ y_k \BB \dot{\sigma}(u^\top y + \gamma v)\AA_{(k)}\dot{\sigma}(u^\top x + \gamma v)\right].
\end{align*}
Then, for any \(g\in \OO(d)\), we have 
\begin{align*}
	g\cdot \Theta^{(2)}_{W^{(1)}} = \Theta^{(2)}_{W^{(1)}}(g^{-1}\cdot x^\AA, g^{-1}\cdot y^\BB) = 
	({g^{-1}x})^\top (g^{-1}y)\cdot \Theta^{(2)}_{b^{(1)}}(g^{-1}\cdot x^\AA, g^{-1}\cdot y^\BB)
	+ g\cdot ( K_\mathrm{I} + K_\mathrm{II} )(x^\AA, y^\BB).
\end{align*}
Note that 
$$({g^{-1}x})^\top (g^{-1}y) = x^\top g^{-\top}g^{-1}y = x^\top y\quad \mbox{and}\quad g\cdot \Theta^{(2)}_{b^{(1)}} = \Theta^{(2)}_{b^{(1)}}.$$ 
Thus it suffices to prove the invariance of the kernels \(K_\mathrm{I}\) and \(K_\mathrm{II}\).
Let \(I_\AA^+\) be the indices \(\alpha\) in \(I_\AA\) with \(|\alpha|>0\) for \(\AA\in \LD_d(\mathfrak{R})\). 
Then, given \(g\in \mathrm{GL}(d;\R)\) and \(k\in [d]\), by Lemma \ref{L:AA-z_k}, we have 
\begin{align*}
	\mathrm{sbl}\left((g^{-1}\cdot \AA)_{(k)}\right) 
	= \frac{\partial}{\partial \xi_k} \mathrm{sbl}\left(g^{-1}\cdot \AA\right)
	=  \frac{\partial}{\partial \xi_k}\sum_{\alpha \in I_\AA} (a_\alpha\circ g) (g\xi)^\alpha
	= \sum_{\alpha \in I_\AA^+} (a_\alpha\circ g) \frac{\partial}{\partial \xi_k} (g\xi)^\alpha.
\end{align*}
Thus, there hold
\begin{align*}
	\AA_{(k)}[\dot{\sigma}(u^\top z + \gamma v) ]
	& = \sum_{\alpha \in I_\AA^+} a_\alpha(z)\dot{\sigma}^{(|\alpha|-1)}
		\left(u^\top z + \gamma v\right)\frac{\partial}{\partial u_k} u^\alpha,\\
	(g^{-1}\cdot \AA)_{(k)} [\dot{\sigma}(u^\top z + \gamma v)] 
	& = \sum_{\alpha \in I_\AA^+} a_\alpha(gz)\dot{\sigma}^{(|\alpha|-1)}
		\left(u^\top z + \gamma v\right)\frac{\partial}{\partial u_k} (gu)^\alpha.
\end{align*}
Therefore, for any \(x\in \R^d\) and \(g\in\OO(d)\), we have
\begin{align*}
	(g^{-1}\cdot \AA)_{(k)} \dot{\sigma}\left(u^\top (g^{-1}x) + \gamma v\right)  
	& = (g^{-1}\cdot \AA)_{(k)} [\dot{\sigma}(u^\top z + \gamma v)] \mid_{z=g^{-1}x}  \\
	& = \sum_{\alpha \in I_\AA^+} a_\alpha(x)\dot{\sigma}^{(|\alpha|-1)}
		\left((gu)^\top x + \gamma v\right)\frac{\partial}{\partial u_k} (gu)^\alpha.
\end{align*}
We denote the integrand under the expectation in $K_\mathrm{I}$ and $K_\mathrm{II}$ by 
$\kappa_\mathrm{I}$ and $ \kappa_\mathrm{II}$, respectively. Then, we have 
\begin{align*}
	\kappa_\mathrm{I}(x^\AA, y^\BB) 
	& = \sum_{k=1}^{d} \AA_{(k)} \dot{\sigma}(u^\top x + \gamma v) \BB_{(k)}\dot{\sigma}(u^\top y + \gamma v) \\
	& = \sum_{k=1}^{d} \sum_{\alpha \in I_\AA^+} a_\alpha(x)\dot{\sigma}^{(|\alpha|-1)}
		\left(u^\top x + \gamma v\right)\frac{\partial}{\partial u_k} u^\alpha
		\sum_{\beta \in I_\BB^+} b_\beta(y)\dot{\sigma}^{(|\beta|-1)}
		\left(u^\top y + \gamma v\right)\frac{\partial}{\partial u_k} u^\beta \\
	& = \sum_{\alpha \in I_\AA^+}\sum_{\beta \in I_\BB^+}a_\alpha(x)b_\beta(y)
		\dot{\sigma}^{(|\alpha|-1)}\left(u^\top x + \gamma v\right)
		\dot{\sigma}^{(|\beta|-1)}\left(u^\top y + \gamma v\right)
		\llan \nabla_u u^{\alpha}, \nabla_u u^{\beta} \rran.
\end{align*}
The integrand \(\kappa_\mathrm{I}(x^\AA, y^\BB)\) after the action under \(g\in \OO(d)\) is given by
\begin{align*}
	g\cdot \kappa_\mathrm{I}(x^\AA, y^\BB)
	& = \kappa_\mathrm{I}(g^{-1}\cdot x^\AA, g^{-1}\cdot y^\BB) 
	  = \sum_{k=1}^{d} (g^{-1}\cdot \AA)_{(k)} \dot{\sigma}(u^\top (g^{-1}x) + \gamma v) 
		(g^{-1}\cdot \BB)_{(k)}\dot{\sigma}(u^\top (g^{-1}y) + \gamma v) \\
	& = \sum_{k=1}^{d} \sum_{\alpha \in I_\AA^+} a_\alpha(x)\dot{\sigma}^{(|\alpha|-1)}
		\left((gu)^\top x + \gamma v\right)\frac{\partial}{\partial u_k} (gu)^\alpha
		\sum_{\beta \in I_\BB^+} b_\beta(y)\dot{\sigma}^{(|\beta|-1)}
		\left((gu)^\top y + \gamma v\right)\frac{\partial}{\partial u_k} (gu)^\beta \\
	& = \sum_{\alpha \in I_\AA^+}\sum_{\beta \in I_\BB^+}a_\alpha(x)b_\beta(y)
		\dot{\sigma}^{(|\alpha|-1)}\left((gu)^\top x + \gamma v\right)
		\dot{\sigma}^{(|\beta|-1)}\left((gu)^\top y + \gamma v\right)
		\llan \nabla_u (gu)^{\alpha}, \nabla_u (gu)^{\beta} \rran.
\end{align*}
By Lemma \ref{L:Inner_Gra-g}, with \(w=gu\), we have 
\[ \llan \nabla_u (gu)^{\alpha}, \nabla_u (gu)^{\beta} \rran = 
\llan \nabla_{gu} (gu)^{\alpha}, \nabla_{gu} (gu)^{\beta} \rran 
= \llan \nabla_{w} w^{\alpha}, \nabla_{w} w^{\beta} \rran,  \]
from which we deduce \( \kappa_\mathrm{I}(g^{-1}\cdot x^\AA, g^{-1}\cdot y^\BB; u,v) = 
\kappa_\mathrm{I}(x^\AA, y^\BB; w,v) \) with \(w=gu\) again being a standard Gaussian vector.
Similar to the case \(\Theta^{(2)}_{W^{(2)}}\), by taking expectation with respect to \(u\) and 
using the coordinate transformation \(w=gu\), we obtain 
\begin{align*}
	K_\mathrm{I} \left(g^{-1}\cdot x^\AA,g^{-1}\cdot y^\BB;v\right)  
	= \E_u\left[ \kappa_\mathrm{I}(g^{-1}\cdot x^\AA, g^{-1}\cdot y^\BB; u,v) \right]
	= \E_w\left[ \kappa_\mathrm{I}(x^\AA, y^\BB; w,v) \right] 
	= K_\mathrm{I} \left( x^\AA,y^\BB;v\right). 
\end{align*}
Then we get the invariance of the kernel \(K_\mathrm{I}\). 
For the kernel \(K_\mathrm{II}\), we first compute the summation which appears as 
the summands of the integrand \(\kappa_\mathrm{II}(x^\AA,y^\BB)\):
\begin{align*}
	&\sum_{k=1}^{d}  x_k\AA \dot{\sigma}(u^\top x + \gamma v) \BB_{(k)}\dot{\sigma}(u^\top y + \gamma v) \\
	 =& \sum_{\alpha \in I_\AA} a_\alpha(x)\dot{\sigma}^{(|\alpha|)}\left(u^\top x + \gamma v\right) u^\alpha
		\sum_{\beta \in I_\BB^+} b_\beta(y)\dot{\sigma}^{(|\beta|-1)}\left(u^\top y + \gamma v\right)
		\sum_{k=1}^{d} x_k  \frac{\partial}{\partial u_k} u^\beta \\
	 =& \sum_{\alpha \in I_\AA} a_\alpha(x)\dot{\sigma}^{(|\alpha|)}\left(u^\top x + \gamma v\right) u^\alpha
		\sum_{\beta \in I_\BB^+} b_\beta(y)\dot{\sigma}^{(|\beta|-1)}
		\left(u^\top y + \gamma v\right) \llan x, \nabla_u u^\alpha \rran.
\end{align*}
Let \(\ol{g}_k = (g^\top)_k^\top\) be the \(k\)-th column of \(g\in \OO(d)\).
By Lemma \ref{L:Inner_Gra-g}, after the action under the orthogonal matrix \(g\in \OO(d)\), we arrive at
\begin{align*}
	&\sum_{k=1}^{d}  z_k (g^{-1}\cdot \AA) [ \dot{\sigma}(u^\top z + \gamma v)]\mid_{z=g^{-1}x} 
		(g^{-1}\cdot \BB)_{(k)}[ \dot{\sigma}(u^\top z + \gamma v)]\mid_{z=g^{-1}y} \\
	=& \sum_{k=1}^{d} (g^{-1}x)_k \sum_{\alpha \in I_\AA} a_\alpha(x) \dot{\sigma}^{(|\alpha|)}
		\left((gu)^\top x + \gamma v\right) (gu)^\alpha
		\sum_{\beta \in I_\BB^+} b_\beta(y) \dot{\sigma}^{(|\beta|-1)}
		\left((gu)^\top y + \gamma v\right)\frac{\partial}{\partial u_k} (gu)^\beta \\
	=& \sum_{\alpha \in I_\AA} a_\alpha(x) \dot{\sigma}^{(|\alpha|)}
		\left((gu)^\top x + \gamma v\right) (gu)^\alpha
		\sum_{\beta \in I_\BB^+} b_\beta(y) \dot{\sigma}^{(|\beta|-1)}
		\left((gu)^\top y + \gamma v\right)
		\sum_{k=1}^{d} \llan  (g^{-1}x)_k \ol{g}_k , \nabla_u (gu)^{\beta} \rran \\
	=& \sum_{\alpha \in I_\AA} a_\alpha(x) \dot{\sigma}^{(|\alpha|)}
		\left((gu)^\top x + \gamma v\right) (gu)^\alpha
		\sum_{\beta \in I_\BB^+} b_\beta(y) \dot{\sigma}^{(|\beta|-1)}
		\left((gu)^\top y + \gamma v\right) \llan x, \nabla_{gu} (gu)^{\beta} \rran,
\end{align*}
in view of the following equality
\begin{align*}
	\sum_{k=1}^{d} \llan  (g^{-1}x)_k \ol{g}_k , \nabla_u (gu)^{\beta} \rran 
	= \llan [\ol{g}_1,\ldots,\ol{g}_d]g^{-1}x , \nabla_{gu} (gu)^{\beta} \rran 
	= \llan x, \nabla_{gu} (gu)^{\beta} \rran.
\end{align*}Similarly, for the second summand of \(\kappa_\mathrm{II}(x^\AA,y^\BB)\), we have
\begin{align*}
	&\sum_{k=1}^{d} y_k\BB \dot{\sigma}(u^\top y + \gamma v) \AA_{(k)}\dot{\sigma}(u^\top x + \gamma v) \\
	 =& \sum_{\beta \in I_\BB} b_\beta(y)\dot{\sigma}^{(|\beta|)}\left(u^\top y + \gamma v\right) u^\beta
		\sum_{\alpha \in I_\AA^+} a_\alpha(x)\dot{\sigma}^{(|\alpha|-1)}
		\left(u^\top x + \gamma v\right) \llan y, \nabla_u u^\beta \rran, \\
	&\sum_{k=1}^{d}  z_k (g^{-1}\cdot \BB) [\dot{\sigma}(u^\top z + \gamma v)]\mid_{z=g^{-1}y} 
		(g^{-1}\cdot \AA)_{(k)}[\dot{\sigma}(u^\top z + \gamma v)]\mid_{z=g^{-1}x} \\
	=& \sum_{\beta \in I_\BB} b_\beta(y)\dot{\sigma}^{(|\beta|)}
		\left((gu)^\top y + \gamma v\right) (gu)^\beta
		\sum_{\alpha \in I_\AA} a_\alpha(x)\dot{\sigma}^{(|\alpha|-1)}
		\left((gu)^\top x + \gamma v\right) \llan y, \nabla_{gu} (gu)^{\alpha} \rran.
\end{align*}
Thus, we have the equality \( \kappa_\mathrm{II}(g^{-1}\cdot x^\AA, g^{-1}\cdot y^\BB; u,v) = 
\kappa_\mathrm{II}(x^\AA, y^\BB; w,v) \) with \(w=gu\). 
Then the invariance of the kernel \(K_\mathrm{II}\) follows as in the case \(K_\mathrm{I}\).
\end{proof}

\subsection{Proof of Theorem \ref{T:Sym-DNN}}

The invariance of \(\Sigma^{(L)}_\infty\) and \(\Theta^{(L)}_\infty\) are proved by mathematical induction 
based on the recursive formulas \eqref{E:deep-Cov} and \eqref{E:deep-DNTK} starting at \(L=1\).
Below the notation under tilde denotes the images of their original counterparts 
under the action by \(g^{-1}\) given any generic \(g\in \mathrm{GL}(d;\R)\). 
For instance, let \(\partial=[\partial_1,\ldots,\partial_d]^\top\) and 
\(\AA = \sum_{\alpha \in I_\AA} a_\alpha \partial^\alpha \in \LD_d(\mathfrak{R})\) with \(a_\alpha\in \mathfrak{R}\).
For \(g\in \mathrm{GL}(d;\R)\) with rows \(g_1,\ldots,g_d\), 
\begin{gather*}
	\wt{a}_\alpha = g^{-1}\cdot a_\alpha =  a_\alpha\circ g, \quad 
	\wt{\partial} = g^{-1}\cdot \partial = g\partial = [g_1^\top\partial, \ldots, g_d^\top \partial]^\top, \\
	\wt{\AA} = g^{-1}\cdot \AA = \sum_{\alpha \in I_\AA} (a_\alpha\circ g) (g\partial)^\alpha
	= \sum_{\alpha \in I_\AA} \wt{a}_\alpha \wt{\partial}^\alpha.
\end{gather*}

First we study the general Leibniz rule and Fa\`a di Bruno's formula under the 
differentiation after transformation \(\wt{\partial}\).

\begin{lemma}\label{L:g-Leibniz} 
Let \(g\in \mathrm{GL}(d;\R)\) with rows \(g_1,\ldots,g_d\) and \(\wt{\partial} = g^{-1}\cdot \partial = g\partial\). 
Then, the following high-order Leibniz rule for two functions holds: given \(f,g:\R^d \to \R\), we have
\[ \wt{\partial}^{\alpha}(fg) = \sum_{\beta \leq \alpha}\binom{\alpha}{\beta}\wt{\partial}^{\beta} f ~ \wt{\partial}^{\alpha-\beta} g \quad \Longleftrightarrow \quad 
(g\partial)^{\alpha}(fg) = \sum_{\beta \leq \alpha}\binom{\alpha}{\beta} (g\partial)^{\beta} f ~ (g\partial)^{\alpha-\beta} g. 
\] 
Moreover, the first-order Leibniz rule for multiple functions holds: for any \(f_1,\ldots,f_n:\R^d \to \R\) and any \(k\in[d]\), we have 
	\[ \wt{\partial}_k(f_1\cdots f_n) = \sum_{j=1}^{n} f_1 \cdots f_{j-1} ~ (\wt{\partial}_k f_j) ~
	f_{j+1} \cdots f_n \quad \Longleftrightarrow \quad 
	(g_k\partial)(f_1\cdots f_n) = \sum_{j=1}^{n} \Big(\prod_{i\neq j}f_i\Big) (g_k\partial)f_j .  \]
\end{lemma}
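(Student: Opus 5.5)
The plan is to reduce both identities to the case of a single first-order operator $\wt{\partial}_k = g_k\partial = \sum_{i=1}^d g_{ki}\partial_i$, which is a constant-coefficient linear combination of the standard partial derivatives. The key observation is that each $\wt{\partial}_k$ is a derivation: for smooth enough $f,h$,
\[ \wt{\partial}_k(fh) = \sum_{i=1}^d g_{ki}\,\partial_i(fh) = \sum_{i=1}^d g_{ki}\left(h\,\partial_i f + f\,\partial_i h\right) = h\,\wt{\partial}_k f + f\,\wt{\partial}_k h, \]
using only linearity of $\wt{\partial}_k$ in its coefficients and the ordinary product rule for each $\partial_i$. To avoid a clash with the matrix $g$, I will write the second function as $h$ throughout and note this in the proof.

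The multi-function first-order rule then follows by induction on $n$. The case $n=1$ is trivial and $n=2$ is the display above. For the inductive step, write $f_1\cdots f_{n+1} = (f_1\cdots f_n)\,f_{n+1}$, apply the two-function rule, and then the inductive hypothesis to $\wt{\partial}_k(f_1\cdots f_n)$. This yields exactly $\sum_{j}\big(\prod_{i\neq j}f_i\big)\wt{\partial}_k f_j$.

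For the high-order rule, I will first record that the operators $\wt{\partial}_1,\ldots,\wt{\partial}_d$ pairwise commute. They have constant coefficients and the $\partial_i$ commute for sufficiently smooth functions, so $\wt{\partial}^\alpha = \wt{\partial}_1^{\alpha_1}\cdots\wt{\partial}_d^{\alpha_d}$ is well defined regardless of order. Next I prove the one-variable Leibniz rule $\wt{\partial}_k^{m}(fh) = \sum_{j=0}^m \binom{m}{j}\wt{\partial}_k^{j}f\,\wt{\partial}_k^{m-j}h$ by induction on $m$. This is the standard argument for any derivation: apply $\wt{\partial}_k$ to the level-$m$ identity, use the derivation property on each term, and combine via Pascal's rule $\binom{m}{j}+\binom{m}{j-1}=\binom{m+1}{j}$. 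Finally I apply this successively in each coordinate $k=1,\ldots,d$. I do this by induction on the number of nonzero entries of $\alpha$, writing $\wt{\partial}^\alpha = \wt{\partial}_d^{\alpha_d}\,\wt{\partial}^{\alpha'}$ with $\alpha'=(\alpha_1,\ldots,\alpha_{d-1},0)$. The binomial coefficients multiply to $\binom{\alpha}{\beta}=\prod_k\binom{\alpha_k}{\beta_k}$, and commutativity lets me regroup $\wt{\partial}_d^{j}\wt{\partial}^{\beta'}$ as $\wt{\partial}^{\beta}$.

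There is no serious obstacle; the only points needing care are bookkeeping rather than analysis. The first is the implicit smoothness assumption: we need $f,h$ to be $C^{|\alpha|}$ so that mixed partials commute, which is consistent with the footnote on restricting operator order. The second is the notational overloading of $g$ noted above. The equivalences ``$\Longleftrightarrow$'' in the statement are just the definition $\wt{\partial}=g\partial$, i.e.\ $\wt{\partial}_k=g_k\partial$, and need no argument beyond citing \eqref{E:G-DO_formula}.
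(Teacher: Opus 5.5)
Your proposal is correct and follows essentially the same route as the paper: both derive the derivation property $\wt{\partial}_k(fh)=h\,\wt{\partial}_k f+f\,\wt{\partial}_k h$ from linearity and the ordinary product rule for each $\partial_i$, and then conclude by induction. The differences are only bookkeeping. The paper inducts directly on $|\alpha|$ by adjoining one $e_i$ at a time, tacitly using the multi-index Pascal identity and the commutativity of the $\wt{\partial}_k$ (which you make explicit), and it gets the multi-function rule by applying the usual product rule to each summand of $g_k\partial$. You instead prove the one-variable rule, assemble coordinates via $\binom{\alpha}{\beta}=\prod_k\binom{\alpha_k}{\beta_k}$, and induct on $n$.
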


\begin{proof}
	We prove the first assertion by induction on the index norm \(|\alpha|\). When \(|\alpha| = 1\),
	we have \(\wt{\partial}^{\alpha} = \wt{\partial}_i = g_i^\top \partial = \sum_{j=1}^{d}g_{ij}\partial_j\)
	for some \(i\in[d]\). Then,
	\begin{align*}
		\wt{\partial}_i (fg) & = \sum_{j=1}^{d} \left(g_{ij}\partial_j\right)(fg) 
		= \sum_{j=1}^{d} g_{ij}\Big( (\partial_j f) g + f (\partial_j g) \Big)
		=  \sum_{j=1}^{d} g_{ij}(\partial_j f)g + f g_{ij} (\partial_j g) \\
		& = \sum_{j=1}^{d} \Big( (g_{ij}\partial_j)f \Big)g + f\Big( (g_{ij}\partial_j)g \Big)
		= (\wt{\partial}_i f)g + f(\wt{\partial}_i g).
	\end{align*}
	Suppose the conclusion holds for all muti-index with norm \(|\alpha|\). For any 
	index \(\xi\) with norm \(|\alpha|+1\), we can write it as \(\xi = e_i + \alpha\) 
	for some \(i\in[d]\),
	where \(e_i\) is the $i$th Cartesian coordinate vector and 
	\(\alpha\) is some index with norm \(|\alpha|\).
	By induction, we have
	\begin{align*}
		\wt{\partial}^\xi(fg) & =
		(\wt{\partial}_i\wt{\partial}^\alpha)(fg) = \wt{\partial}_i \left( 
		\sum_{\beta \leq \alpha}\binom{\alpha}{\beta}
		\wt{\partial}^{\beta} f ~ \wt{\partial}^{\alpha-\beta} g \right)
		= \sum_{\beta \leq \alpha}\binom{\alpha}{\beta} 
		\wt{\partial}_i \left( \wt{\partial}^{\beta} f ~ \wt{\partial}^{\alpha-\beta} g \right) \\
		& = \sum_{\beta \leq \alpha}\binom{\alpha}{\beta} \left( 
		(\wt{\partial}_i \wt{\partial}^{\beta} f)\wt{\partial}^{\alpha-\beta} g +
		\wt{\partial}^{\beta} f(\wt{\partial}_i \wt{\partial}^{\alpha-\beta} g) \right)
		= \sum_{\beta \leq \xi}\binom{\xi}{\beta}
		\wt{\partial}^{\beta} f ~ \wt{\partial}^{\xi-\beta} g. 
	\end{align*}
	The second assertion holds using the first-order Leibniz rule 
	for each summand of the operator \(\wt{\partial}_k = g_k\partial\).
\end{proof}

For the Faa di Bruno's formula under \(\wt{\partial}\), recall that in equation \eqref{E:Faa-alpha}, 
for any index \(\alpha=(\alpha_1,\ldots,\alpha_d)\), any function \(h:\R^d \to \R\) and any subset \(A\subset S_\alpha := 
\left\{ 1_{1},\ldots,1_{\alpha_1},2_{1},\ldots,2_{\alpha_2},\cdots, d_{1},\ldots,d_{\alpha_d} \right\}\), we have 
\[ \partial^A h  = \frac{\partial^{|A|} h}{\prod_{j \in A} \partial z_j} =  \prod_{j \in A} \partial_j h. \]
Similarly, for the operator \(\wt{\partial} = g\partial\) with \(g\in\mathrm{GL}(d;\R)\), we have 
\[ \wt{\partial}^A h  =  \prod_{j \in A} \wt{\partial}_j h \quad \Longleftrightarrow \quad
 	(g\partial)^A h = \prod_{j \in A} (g_j \partial) h. \]

\begin{lemma}\label{L:g-Faa di} 
	Let \(g\in \mathrm{GL}(d;\R)\) with rows \(g_1,\ldots,g_d\) and \(\wt{\partial} = g^{-1}\cdot \partial = g\partial\). 
	Given functions \(f:\R\to\R\) and \(h:\R^d \to \R\), we have for any index \(\alpha\), 
	\[ \wt{\partial}^\alpha f(h) = 
	\sum_{\pi \in \Pi_\alpha} f^{(|\pi|)}(h) \prod_{A \in \pi} \wt{\partial}^A h
	\quad \Longleftrightarrow \quad  
	(g\partial)^\alpha f(h) = 
	\sum_{\pi \in \Pi_\alpha} f^{(|\pi|)}(h) \prod_{A \in \pi} (g\partial)^A h. \]
\end{lemma}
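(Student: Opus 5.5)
The plan is to reduce Lemma \ref{L:g-Faa di} to the classical Fa\`a di Bruno formula \eqref{E:Faa-alpha} by a linear change of variables, and to keep a direct induction in reserve.

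Set \(\phi(z) := h(g^\top z)\), so that \(h = \phi\circ g^{-\top}\). Writing \(\partial_{w_k}\) for the partial derivative in the \(k\)-th coordinate of the new variable, the chain rule gives
\[ \partial_{w_k}\left[F(g^\top w)\right] = \sum_{j} g_{kj}(\partial_j F)(g^\top w) = \left((g_k\partial) F\right)(g^\top w) \]
for any differentiable \(F\). Hence \((g_k\partial)F = \left(\partial_{w_k}(F\circ g^\top)\right)\circ g^{-\top}\). Iterating this identity for the commuting first-order operators \(\wt{\partial}_k = g_k\partial\) (constant coefficients) yields, for every multi-index \(\alpha\) and every subset \(A\subset S_\alpha\),
\[ \wt{\partial}^\alpha F = \left(\partial_w^\alpha (F\circ g^\top)\right)\circ g^{-\top}, \qquad \wt{\partial}^A F = \left(\partial_w^A (F\circ g^\top)\right)\circ g^{-\top}. \]

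Now apply this with \(F = f(h)\). Since \(f(h)\circ g^\top = f(\phi)\), the classical formula \eqref{E:Faa-alpha} applies in the \(w\)-variables:
\[ \partial_w^\alpha f(\phi) = \sum_{\pi\in\Pi_\alpha} f^{(|\pi|)}(\phi)\prod_{A\in\pi}\partial_w^A\phi. \]
Composing both sides with \(g^{-\top}\) turns \(\phi\circ g^{-\top}\) back into \(h\) and each \(\partial_w^A\phi\circ g^{-\top}\) into \(\wt{\partial}^A h\). This gives exactly the claimed identity, and the equivalent form with \((g\partial)^\alpha\) is the same statement written out.

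The main point to verify carefully is the conjugation identity for higher orders. It requires that the operators \(g_k\partial\) commute, so that \(\wt{\partial}^\alpha\) and \(\wt{\partial}^A\) are well defined independently of the order of application. This holds because their coefficients are constant. With that in hand, induction on \(|\alpha|\) using the first-order identity gives the conjugation formula directly.

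If one prefers to avoid the change of variables, the lemma can instead be proved by induction on \(|\alpha|\), mimicking the standard combinatorial proof of Fa\`a di Bruno's formula. The base case is the chain rule \(\wt{\partial}_k f(h) = f'(h)\,\wt{\partial}_k h\), which follows from the case \(|\alpha|=1\) in the proof of Lemma \ref{L:g-Leibniz}. For the inductive step, one applies \(\wt{\partial}_i\) to the formula for \(\alpha\) using the multi-factor Leibniz rule from Lemma \ref{L:g-Leibniz}. Each partition \(\pi\) of \(S_\alpha\) then generates the partitions of \(S_{\alpha+e_i}\) in which the new element either forms a singleton block (from differentiating \(f^{(|\pi|)}(h)\)) or joins an existing block \(A\) (from differentiating \(\wt{\partial}^A h\)). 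This is the standard bijection onto \(\Pi_{\alpha+e_i}\).
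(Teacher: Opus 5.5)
Your proof is correct, but your primary route differs from the paper's. The paper proves Lemma~\ref{L:g-Faa di} by induction on $|\alpha|$. The base case is the chain rule $\wt{\partial}_i f(h) = f'(h)\,\wt{\partial}_i h$. The inductive step applies $\wt{\partial}_i$ to the formula for $\alpha$, expands with the multi-factor Leibniz rule of Lemma~\ref{L:g-Leibniz}, and matches the resulting terms with $\Pi_{\alpha+e_i}$ through the singleton/insertion bijection. That is exactly your fallback argument.

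Your main argument instead recognizes $\wt{\partial}_k$ as the coordinate derivative $\partial/\partial w_k$ in the linear coordinates $w = g^{-\top}z$; indeed $\wt{\partial}_i w_j = \sum_m g_{im}(g^{-1})_{mj} = (g g^{-1})_{ij} = \delta_{ij}$. You then conjugate by $z = g^\top w$ and import the classical formula \eqref{E:Faa-alpha} as a black box. This buys brevity. No partition bookkeeping has to be redone, and the same conjugation identity gives Lemma~\ref{L:g-Leibniz}, or any other derivative identity, at once.

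Your identity is also the $\mathrm{GL}(d;\R)$ analogue of the paper's later Lemma~\ref{L:system-trans}, which works in the coordinates $\wt{z} = gz$. Those coordinates coincide with your $w = g^{-\top}z$ only when $g\in\OO(d)$, which is why the paper's remark $\wt{\partial}_i(\wt{z}_j)=\delta_{ij}$ is restricted to orthogonal $g$. Your choice of dual coordinates is the correct one for every invertible $g$.

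The paper's induction, by contrast, is self-contained: it derives the formula for the transformed system $\wt{\partial}$ directly rather than transporting the classical one.
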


\begin{proof}
The proof is based on induction on the index norm \(|\alpha|\). 
	For any \(i\in [d]\), we have
	\begin{align*}
		\wt{\partial}_i f(g) = \sum_{j=1}^{d}(g_{ij}\partial_j)f(g) = \sum_{j=1}^{d}g_{ij}f^{(1)}(g)\partial_j g
		= \sum_{j=1}^{d}f^{(1)}(g) (g_{ij}\partial_j g )= f^{(1)}(g)\wt{\partial}_i g,
	\end{align*}
which completes the proof when \(|\alpha|=1\). Suppose that the assertion holds for all indices with norm \(|\alpha|\geq 1\). For any index \(\xi\) with norm \(|\alpha|+1\), we can write it as \(\xi = e_i + \alpha\) for some \(i\in[d]\). Then, by induction and the base case, we have
	\begin{align*}
		\wt{\partial}^\xi f(g) & = \wt{\partial}_i\left(\wt{\partial}^\alpha f(g)\right) 
		= \wt{\partial}_i\left(\sum_{\pi \in \Pi_\alpha} f^{(|\pi|)}(g) 
		\prod_{A \in \pi} \wt{\partial}^A g\right) \\
		& = \sum_{\pi \in \Pi_\alpha}  f^{(|\pi|+1)}(g) \wt{\partial}_i g \prod_{A \in \pi} \wt{\partial}^A g
			+ f^{(|\pi|)}(g) \wt{\partial}_i \left( \prod_{A \in \pi} \wt{\partial}^A g \right) \\
		& = \sum_{\pi \in \Pi_{e_i + \alpha}} f^{(|\pi|)}(g) \prod_{A \in \pi} \wt{\partial}^A g.
	\end{align*}
	The last equality holds since the partition of the set \(S_{e_i + \alpha}\) can be split 
	into two types. The first type is the disjoint union of a partition of \(S_{\alpha}\), 
	which corresponds to an element \(\pi\in \Pi_\alpha\),
	and a partition of \(S_{e_i}\), which must be \(S_{e_i}\) itself since it is a singleton.
	This kind of partition corresponds to the first summand in the second line. The other type 
	partition of \(S_{e_i + \alpha}\) is obtained from a partition \(\pi\) of \(S_{\alpha}\) by inserting 
	\(S_{e_i}\) into one of the blocks in \(\pi\). For each \(\pi\in\Pi_\alpha\), this step will 
	generate \(|\pi|\) partitions of \(S_{e_i + \alpha}\), and corresponds to the second summand 
	in the second line by the Leibniz rule under \(\wt{\partial}\) (Lemma \ref{L:g-Leibniz}).
\end{proof}

\begin{proof}[Proof of Theorem \ref{T:Sym-DNN}]

We first prove the invariance of \(\Sigma^{(L)}_\infty\) \eqref{E:deep-Cov} by mathematical induction. \\
\underline{Step 1. The base case $L=1$.}
In the base case \(L=1\), we write as $$\Sigma^{(1)}_\infty = \Sigma^{(1)}_W + \gamma^2\Sigma^{(1)}_b,
$$ 
with 
\begin{equation*}
\Sigma^{(1)}_b( x^\AA,y^\BB )=a(x)b(y)=\Theta^{(2)}_{b^{(2)}}(x^\AA,y^\BB)\quad\mbox{and}\quad      
\Sigma^{(1)}_W(x^\AA,y^\BB) = \left(a(x)x+D\AA(x)\right)^\top \left(b(y)y+D\BB(y)\right).
\end{equation*}
$ \Sigma^{(1)}_b( x^\AA,y^\BB )$ is invariant by Theorem \ref{T:Sym-SNN}.
Let \(a_0^\AA\) represent the zeroth term and \(a_k^\AA\) for \(k\in[d]\) the coefficients of \(\partial_k\)
of the operator \(\AA\in\LD_d(\mathfrak{R})\). Then, for any \(g\in\mathrm{GL}(d;\R)\), by the formula \eqref{E:G-DO_formula}, 
we have \( a_0^{g^{-1}\cdot\AA} = a_0^{\AA} \circ g \) and 
\[ a_i^{g^{-1}\cdot\AA} = \sum_{j=1}^{d}  \wt{a}_j g_{ji} 
		= \sum_{j=1}^{d} g_{ji} a_j  \circ g, ~ i\in[d] \quad \Longrightarrow \quad
		D(g^{-1}\cdot\AA) = g^{\top} D(\AA)\circ g.  \]
Thus, for any \(g\in\OO(d)\), we have 
\begin{align*}
	\Sigma^{(1)}_W(g^{-1}\cdot x^\AA, g^{-1}\cdot y^\BB) 
	& = \left(a_0^{g^{-1}\cdot\AA}(z)z+D(g^{-1}\cdot\AA)(z)\right) \Big|_{z=g^{-1}\cdot x}^\top 
		\left(a_0^{g^{-1}\cdot\BB}(z)z+D(g^{-1}\cdot\BB)(z)\right) \Big|_{z=g^{-1}\cdot y}^\top \\
	& = \left(a_0^{\AA}(x)(g^{-1}x)+g^{\top}D\AA(x)\right)^\top 
		\left(a_0^{\BB}(y)(g^{-1}y)+g^{\top}D\BB(y)\right) \\
	& = \left(a_0^{\AA}(x)x+D\AA(x)\right)^\top 
		\left(a_0^{\BB}(y)y+D\BB(y)\right) = \Sigma^{(1)}_W(x^\AA, y^\BB).
\end{align*}
This completes the proof for \(\Sigma^{(1)}_\infty\).\\
\underline{Step 2. The invariance of 
\(\Sigma^{(\ell+1)}_\infty\) based on the induction on \(\Sigma^{(\ell)}_\infty\).} Suppose that $\Sigma^{(\ell)}_\infty$ is invariant. We express \(\Sigma^{(\ell+1)}_\infty\) in \eqref{E:deep-Cov} as 
\(\Sigma^{(\ell+1)}_W + \gamma^2 \Sigma^{(\ell+1)}_b \) with 
\(\Sigma^{(\ell+1)}_b = \Sigma^{(1)}_b\) and 
\[ \Sigma^{(\ell+1)}_W(x^\AA, y^\BB)  = 
	\E_{f^{(\ell)}_\infty } \left[  \AA \sigma\left( f^{(\ell)}_\infty(x) \right) 
	\BB \sigma\left( f^{(\ell)}_\infty(y) \right) \right], \]
such that the random variables \(\AA \sigma( f^{(\ell)}_\infty(x))\) and 
\(\BB \sigma( f^{(\ell)}_\infty(y) )\) are given by \eqref{E:AA_sig(f_inf)}.
For the kernel \(\Sigma^{(\ell+1)}_W\), consider the image after the action by \(g\in \mathrm{GL}(d;\R)\): 
\begin{equation}\label{E:g-Sigma_W}
	\Sigma^{(\ell+1)}_W(g^{-1}\cdot x^\AA, g^{-1}\cdot y^\BB) 
	= \E_{f^{(\ell)}_\infty } \left[ (g^{-1}\cdot \AA) \sigma\left( f^{(\ell)}_\infty(g^{-1}x) \right) 
	(g^{-1}\cdot \BB) \sigma\left( f^{(\ell)}_\infty(g^{-1}y) \right) \right],
\end{equation}
with the random variable after group action given by 
\begin{equation}\label{E:g-AA_sig(f_inf)-imp} 
	(g^{-1}\cdot \AA) \sigma\left( f^{(\ell)}_\infty(g^{-1}x) \right)
	= \sum_{\alpha \in  I_{g^{-1}\cdot \AA} } a_\alpha (g^{-1}x) \sum_{\pi \in \Pi_\alpha} 
		\sigma^{(|\pi|)}\left( f^{(\ell)}_\infty \left( g^{-1}x \right) \right) 
		\prod_{A \in \pi}  f^{(\ell)}_\infty\left((g^{-1}x)^{\partial^A}\right).
\end{equation}
Instead of using the formula \eqref{E:g-AA_sig(f_inf)-imp}, we prove that for any 
\(g\in \mathrm{GL}(d;\R)\) and \(x^\AA\in\PLDR_d(\mathfrak{R})\), the random variables are given by
\begin{equation}\label{E:g-AA_sig(f_inf)}
	(g^{-1}\cdot \AA) \sigma\left( f^{(\ell)}_\infty(g^{-1}x) \right)
 	= \sum_{\alpha \in I_\AA} a_\alpha(x)
	\sum_{\pi \in \Pi_\alpha} \sigma^{(|\pi|)}\left(f^{(\ell)}_\infty(g^{-1}x)\right) 
	\prod_{A \in \pi} f^{(\ell)}_\infty\left((g^{-1}x)^{\wt{\partial}^A}\right).
\end{equation}
The proof relies heavily on the limit process of the infinite-width network output. 
Specifically, note that \(\AA \sigma( f^{(\ell)}_\infty(x))\) 
is the limit of \(\AA \sigma\left( f^{(\ell)}(x;\theta) \right)\), where \(f^{(\ell)}(x;\theta)\)
represents any component of the \(\ell\)th-layer outputs. By Lemma \ref{L:g-Faa di}, for any \(f:\R^d\to \R\), we have
\begin{align}\label{eqn:g-pi-0}
	(g^{-1}\cdot \AA) \sigma\left(f\right) 
	= \sum_{\alpha \in I_\AA} \wt{a}_\alpha(z) \wt{\partial}^\alpha\sigma\left(f\right) 
	= \sum_{\alpha \in I_\AA} \wt{a}_\alpha(z)
	  \sum_{\pi \in \Pi_\alpha} \sigma^{(|\pi|)}\left(f\right) 
	  \prod_{A \in \pi} \wt{\partial}^A f.
\end{align}
Therefore, equation \eqref{E:g-AA_sig(f_inf)} follows from the formula \eqref{eqn:g-pi-0} in the limit
\[ (g^{-1}\cdot \AA) \sigma\left( f^{(\ell)}_\infty(g^{-1}x) \right)
	= \lim_{d_1,\ldots,d_\ell \to \infty} (g^{-1}\cdot \AA) \sigma\left( f^{(\ell)}(x;\theta) \right).  \]
Note that \(g^{-1}\cdot \partial^A =\wt{\partial}^A\) and \(g^{-1}\cdot \EE = \EE\) with \(\EE\) being the identity operator. The identities \eqref{E:AA_sig(f_inf)} and \eqref{E:g-AA_sig(f_inf)} imply that 
their expectations and covariances are equal:
\begin{gather*}
	\E\left[ (g^{-1}\cdot \AA) \sigma\left( f^{(\ell)}_\infty(g^{-1}x) \right) \right]
 	= \E\left[ \AA \sigma\left( f^{(\ell)}_\infty(x) \right) \right], \\
	\Cov\left( (g^{-1}\cdot \AA) \sigma\left( f^{(\ell)}_\infty(g^{-1}x) \right), 
	(g^{-1}\cdot \BB) \sigma\left( f^{(\ell)}_\infty(g^{-1}y) \right) \right)
	= \Cov\left( \AA \sigma\left( f^{(\ell)}_\infty(x) \right), 
	\BB \sigma\left( f^{(\ell)}_\infty(y) \right)\right),
\end{gather*}
for any \(g\in\OO(d)\) and any \(x^\AA,y^\BB\in\PLDR_d(\mathfrak{R})\) by the invariance of \(\Sigma^{(\ell)}_\infty\). 
Thus we have \(\Sigma^{(\ell+1)}_W(g^{-1}\cdot x^\AA, g^{-1}\cdot y^\BB) 
= \Sigma^{(\ell+1)}_W(x^\AA, y^\BB) \) by equation \eqref{E:g-Sigma_W}. Thus,  \(\Sigma^{(\ell+1)}_\infty\) is invariant under \(\OO(d)\) and completes the induction step.\\
\underline{Step 3. The invariance of the DNTK \(\Theta^{(L)}_\infty\).} The proof is again based on mathematical induction.
In the base case, we have \(\Theta^{(1)}_\infty = \Sigma^{(1)}_\infty\) which has been proved.
Suppose the conclusion holds for some \(\ell\geq 2\). Note that the expression \eqref{E:deep-DNTK} of
\(\Theta^{(\ell+1)}_\infty\) can be split into two parts \(\Sigma^{(\ell+1)}_\infty\) and 
\(K^{(\ell+1)}_\infty\), corresponding to the outer and inner parameters respectively. The invariance 
of \(\Sigma^{(\ell+1)}_\infty\) has been proved. For the kernel \(K^{(\ell+1)}_\infty\), we have  
\[ K^{(\ell+1)}_\infty(x^\AA, y^\BB) = 
	\sum_{\alpha\in I_\AA }\sum_{\beta\in I_\BB } a_\alpha(x)b_\beta(y)
	\sum_{\xi\leq\alpha}\sum_{\zeta\leq\beta}\binom{\alpha}{\xi}\binom{\beta}{\zeta}
	\dot{\Sigma}^{(\ell+1)}_{\infty}\left( x^{\partial^\xi},y^{\partial^\zeta} \right)
	\Theta^{(\ell)}_{\infty}\left( x^{\partial^{(\alpha-\xi)}},y^{\partial^{(\beta-\zeta)}} \right). \]
Since computing 
\(K^{(\ell+1)}_\infty\left(g^{-1}\cdot x^\AA, g^{-1}\cdot y^\BB\right)\) directly is inconvenient, 
using the definition of the DNTK, we get 
\begin{align*}
	 K^{(\ell+1)}_\infty\left(g^{-1}\cdot x^\AA, g^{-1}\cdot y^\BB\right) 
	= \lim_{d_1,\ldots,d_\ell \to \infty}
	\llan \nabla_{\theta_{\leq\ell}}  (g^{-1}\cdot \AA)f^{(\ell+1)}(g^{-1}x;\theta), 
			\nabla_{\theta_{\leq\ell}}(g^{-1}\cdot \BB)f^{(\ell+1)}(g^{-1}y;\theta)  \rran,
\end{align*}
where \(f^{(\ell+1)}(x;\theta)\) denotes any component of the \(\ell\)th layer outputs 
and \(\theta_{\leq\ell}\) denotes the parameters lower than or equal to \(\ell\)-layer.
Using Lemma \ref{L:g-Leibniz}, we obtain for any \(g\in\mathrm{GL}(d;\R)\),
\begin{align*}
	K^{(\ell+1)}_\infty\left(g^{-1}\cdot x^\AA, g^{-1}\cdot y^\BB\right) 
	& = \sum_{\alpha\in I_\AA }\sum_{\beta\in I_\BB } a_\alpha(x)b_\beta(y)
	\sum_{\xi\leq\alpha}\sum_{\zeta\leq\beta}\binom{\alpha}{\xi}\binom{\beta}{\zeta} \cdot \\
	& \hspace{1cm}
	\dot{\Sigma}^{(\ell+1)}_{\infty}\left( (g^{-1}x)^{\wt{\partial}^\xi},(g^{-1}y)^{\wt{\partial}^\zeta} \right)
	\Theta^{(\ell)}_{\infty}\left( (g^{-1}x)^{\wt{\partial}^{(\alpha-\xi)}},(g^{-1}y)^{\wt{\partial}^{(\beta-\zeta)}} \right).
\end{align*}
Note that \(\dot{\Sigma}^{(\ell+1)}_{\infty}\) is the same as \(\Sigma^{(\ell+1)}_W\) 
with the activation function \(\sigma\) replaced by its derivative \(\sigma'\).
Thus the invariance of \(K^{(\ell+1)}_\infty\) under \(\OO(d)\) is obtained 
by the invariance of \(\dot{\Sigma}^{(\ell+1)}_{\infty}, \Sigma^{(\ell+1)}_{\infty}\) 
and the induction hypothesis on \(\Theta^{(\ell)}_{\infty}\).
\end{proof}

\begin{remark}
We expect that all the components \(\Theta^{(L)}_{W^{(\ell)}}\) and
\(\Theta^{(L)}_{b^{(\ell)}}\) for \(\ell\in[L]\), arising from the corresponding parameters, of the DNTK \(\Theta^{(L)}_{\infty}\) should be invariant under \(\OO(d)\) as in the SNNs case. In fact, \(\Theta^{(L)}_{W^{(L)}} = \Sigma^{(L)}_{W}\) and \(\Theta^{(L)}_{b^{(L)}} = \Sigma^{(L)}_{b}\) are symmetric, while the term \(K^{(L)}_\infty(x^\AA,y^\BB)\), a sum of all the components of \(\Theta^{(L)}_{\infty}\) corresponding to the inner layer parameters, is a summation of several expectation terms similar to \(\Sigma^{(L)}_{W}(x^\AA,y^\BB)\) \cite{JinWu:2026DNTK}. However, to prove the symmetry of the components of \(K^{(L)}_\infty(x^\AA,y^\BB)\), one needs much more effect to first derive their explicit expressions and then perform further computations like the SNNs case.
\end{remark}

Note that the proof of Theorem \ref{T:Sym-DNN} only relies on Lemmas \ref{L:g-Leibniz} and \ref{L:g-Faa di}, 
a variant of Leibniz rule and Faa di Bruno's formula under the differentiation system 
\(\wt{\partial} = g^{-1}\cdot \partial = g\partial\) for \(g\in\OO(d)\). 
Philosophically, if we transform an orthonormal basis of the Euclidean space to another and change 
the differentiation system accordingly, then the properties depend only on the derivatives of functions
will remain invariant. In fact, let \(\wt{z} = g^{-1}\cdot z = gz\) be the coordinate system and 
\(\wt{\partial}= g^{-1}\cdot \partial = g\partial \) the differentiation system of \(\R^d\)
under the transformation by some matrix \(g\in\OO(d)\) with rows \(g_1,\ldots,g_d\). 
Then for any \(i,j\in[d]\), we have 
\[ \wt{\partial}_i(\wt{z}_j) = (g^{-1}\cdot\partial_i)(g^{-1}\cdot z_j)
= (g_i \partial) (g_j z) = g_i g_j^\top = \delta_{ij} = \partial_i(z_j). \]
Thus, \(\wt{\partial} = g^{-1}\cdot \partial\) is the correct differentiation system in terms
of the new coordinate \(\wt{z} = g^{-1} \cdot z\) and motivates a more direct and natural proof below.

\begin{lemma}\label{L:system-trans} 
	Let \(\wt{z} = g^{-1}\cdot z = gz\) and \(\wt{\partial}= g^{-1}\cdot \partial = g\partial \) for some \(g\in\OO(d)\). 
	Then for any function \(f:\R^d\to \R\) and operator \(\AA\in \LD_d(\mathfrak{R})\) we have 
	\[  \wt{\AA}f(\wt{z}) \Big|_{\wt{z}=x} =  (g^{-1}\cdot \AA) (f\circ g) (z) \Big|_{z=g^{-1} x} =  \AA f(z) \Big|_{z=x}, \quad \forall  x\in\R^d. \]
\end{lemma}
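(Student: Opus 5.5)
The identity is a chain-rule statement under the linear change of variables \(\wt{z}=gz\). I would reduce it to monomials \(\partial^\alpha\) using linearity of \(\AA\), and then treat the coefficients separately. The middle expression is read as follows: apply \(g^{-1}\cdot\AA=\sum_{\alpha}(a_\alpha\circ g)(g\partial)^\alpha\) from \eqref{E:G-DO_formula} to the function \(h:=f\circ g\), i.e.\ \(h(z)=f(gz)\), and evaluate at \(z=g^{-1}x\). The first expression \(\wt{\AA}f(\wt{z})|_{\wt{z}=x}\) is this same object written in the new coordinates, so the first equality is a rewriting of notation. The content of the lemma is the second equality.

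\textbf{First-order step.} For the \(i\)-th entry \(\wt{\partial}_i=g_i\partial=\sum_{j}g_{ij}\partial_j\) of \(g\partial\), the ordinary chain rule gives
\[
\partial_j h(z)=\sum_{m}(\partial_m f)(gz)\,g_{mj}.
\]
Summing against the row \(g_i\) yields
\[
\wt{\partial}_i h(z)=\sum_{m}(gg^\top)_{im}(\partial_m f)(gz)=(\partial_i f)(gz),
\]
where the last step uses \(gg^\top=I_d\) for \(g\in\OO(d)\). In other words,
\[
\wt{\partial}_i(f\circ g)=(\partial_i f)\circ g.
\]
This is exactly the relation \(\wt{\partial}_i\wt{z}_j=\delta_{ij}\) noted before the lemma, and it is the only place where orthogonality enters.

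\textbf{Higher-order step.} Since \((\partial_i f)\circ g\) again has the form \(\phi\circ g\) with \(\phi=\partial_i f\), I iterate the first-order identity. Induction on \(|\alpha|\), writing \(\alpha=e_i+\alpha'\) as in the proofs of Lemmas \ref{L:g-Leibniz} and \ref{L:g-Faa di}, gives
\[
(g\partial)^\alpha(f\circ g)=(\partial^\alpha f)\circ g.
\]
The order in which the factors of \((g\partial)^\alpha\) are applied does not matter: they are constant-coefficient operators and commute, so the monomial is well defined.

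\textbf{Assembling.} Multiplying by the coefficients \(a_\alpha\circ g\) and summing over \(I_\AA\) gives
\[
(g^{-1}\cdot\AA)(f\circ g)(z)=\sum_{\alpha\in I_\AA}a_\alpha(gz)\,(\partial^\alpha f)(gz).
\]
Evaluating at \(z=g^{-1}x\) produces \(\sum_{\alpha}a_\alpha(x)\,\partial^\alpha f(x)=\AA f(x)\), which is the right-hand side of the lemma.

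The computation itself is routine. The main obstacle I expect is making precise that the notation \(\wt{\AA}f(\wt{z})\) means exactly the middle expression, so that the first equality is definitional. I also need to check that \(f\) is differentiable to the order of \(\AA\) so that the chain rule applies, which I would assume implicitly, as in the footnote to \eqref{E:DiffOut}.
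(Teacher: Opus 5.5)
Your proof is correct and follows essentially the same route as the paper: reduce by linearity to monomials, show $\wt{\partial}_k(f\circ g)=(\partial_k f)\circ g$ via the chain rule and $gg^\top=I_d$, induct on $|\alpha|$, and evaluate at $z=g^{-1}x$. Your handling of the coefficients, multiplying through by $a_\alpha\circ g$ before evaluating, is slightly cleaner than the paper's split into a zeroth-order case and pure monomials, but it is the same argument.
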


\begin{proof}
Let \(\AA = \sum_{\alpha \in I_\AA} a_\alpha \partial^\alpha\). By the linearity of $\AA$, we need only prove the cases 
	\(\AA = a\) for some \(a\in\R\) or \(\AA=\partial^\alpha\) for some index \(\alpha\). 
In the first case, we have 
	\[  a(\wt{z}) f(\wt{z}) \Big|_{\wt{z}=x} =  a(gz) f(gz) \Big|_{z=g^{-1}x}
		 = a(x)f(x) =  a(z) f(z) \Big|_{z=x}.  \]
In the case \(\AA = \partial^\alpha\), first suppose \(|\alpha|=1\) so that \(\partial^\alpha = \partial_k\) 
	for some \(k\in[d]\). Then we have 
	\begin{align*}
		\wt{\partial}_k[f(\wt{z})] &  = \sum_{i=1}^{d} g_{ki}\partial_i[f(gz)]
		= \sum_{i=1}^{d} g_{ki} \sum_{j=1}^{d} (\partial_j f)(gz) \partial_i(g_jz)
		= \sum_{i=1}^{d} g_{ki} \sum_{j=1}^{d} (\partial_j f)(gz) g_{ji} \\
		& = \sum_{j=1}^{d} (\partial_j f)(gz) \sum_{i=1}^{d} g_{ki} g_{ji} 
		= (\partial_k f)(gz).
	\end{align*}
	Thus we arrive at 
	\[  \wt{\partial}_k[f(\wt{z})] \Big|_{\wt{z}=x} =  (\partial_k f)(gz) \Big|_{z=g^{-1}x}
		 = (\partial_k f)(x) =  \partial_k f(z) \Big|_{z=x}. \]
	For the case \(|\alpha|+1\), by induction, we have 
	\[ \wt{\partial}^\alpha\wt{\partial}_k[f(\wt{z})]\Big|_{\wt{z}=x} = 
		\wt{\partial}^\alpha [(\partial_k f)(\wt{z})] \Big|_{\wt{z}=x}
		= \partial^\alpha [(\partial_k f)(z)] \Big|_{z=x} 
		= \partial^\alpha\partial_k[f(z)]\Big|_{z=x}. \]	
\end{proof}

\begin{proof}[Second proof of Theorem \ref{T:Sym-DNN}]
Using Lemma \ref{L:system-trans}, we prove the symmetry of the covariance function \(\Sigma^{(L)}_{\infty}\) 
	and the DNTK \(\Theta^{(L)}_{\infty}\) directly using the recursive formulas. 
	For any \(g\in \OO(d)\), we have 
	\begin{align*}
		g\cdot \Sigma^{(\ell+1)}_\infty\left(x^\AA, y^\BB\right) 
		& = \Sigma^{(\ell+1)}_\infty\left((g^{-1}x)^{g^{-1}\cdot\AA}, (g^{-1}y)^{g^{-1}\cdot\BB}\right) \\
		& = \Cov\left( f_{\infty}^{(\ell+1)}\left((g^{-1}x)^{g^{-1}\cdot\AA}\right), 
			f_{\infty}^{(\ell+1)}\left((g^{-1}y)^{g^{-1}\cdot\BB}\right) \right).
	\end{align*}
The derivation of \(f_{\infty}^{(\ell+1)}\) in \cite{JinWu:2026DNTK}
and Lemma \ref{L:system-trans} imply
	\begin{align*}
		f_{\infty}^{(\ell+1)}\left((g^{-1}x)^{g^{-1}\cdot\AA}\right) 
		& = \lim_{d_1,\ldots,d_\ell \to \infty} (g^{-1}\cdot\AA) \sigma \left( f^{(\ell)}(g^{-1}x;\theta) \right) \\
		& = \lim_{d_1,\ldots,d_\ell \to \infty} (g^{-1}\cdot\AA) 
			\sigma \left( f^{(\ell)}(z;\theta) \right) \Big|_{z=g^{-1}x} \\
		& = \lim_{d_1,\ldots,d_\ell \to \infty} \wt{\AA} \sigma 
			\left( f^{(\ell)}\circ g^{-1}(\wt{z};\theta) \right)\Big|_{\wt{z}=x} \\
		& = \lim_{d_1,\ldots,d_\ell \to \infty} \AA \sigma 
			\left( \bar{f}^{(\ell+1)}(z;\theta) \right)\Big|_{z=x}
		  = \bar{f}^{(\ell+1)}_{\infty}(x^\AA), 
	\end{align*}
	with \(\bar{f}^{(\ell+1)} = g\cdot f^{(\ell+1)} = f^{(\ell+1)}\circ g^{-1}\) the original neural network 
	after the action by \(g\). Specifically, the NN \( g\cdot f^{(L)}(z;\theta)\) is given recursively
	for \(\ell=1,\ldots,L-1\) by 
	\begin{equation*}
	\begin{aligned}
		\bar{f}^{(1)}(z;\theta) & = W^{(1)}g^{-1} z  + \gamma b^{(1)},\\
		\bar{f}^{(\ell+1)}(x;\theta) & = \frac{1}{\sqrt{d_\ell}} W^{(\ell+1)} 
		\sigma \left( \bar{f}^{(\ell)}(x;\theta) \right) + \gamma b^{(\ell+1)}. 
	\end{aligned}
	\end{equation*}
	Therefore, we have \(g\cdot \Sigma^{(\ell+1)}_\infty\left(x^\AA, y^\BB\right) 
	= \ol{\Sigma}^{(\ell+1)}_\infty\left(x^\AA, y^\BB\right) \) such that \(\ol{\Sigma}^{(\ell+1)}_\infty\), 
	the covariance function of the infinite NN \(\bar{f}^{(\ell+1)}_\infty\), has the same recursive formula 
	as \(\Sigma^{(\ell+1)}_\infty\), due to the same recursion between \(f^{(L)}\) and \(\bar{f}^{(L)}\). Thus to prove \(g\cdot \Sigma^{(\ell+1)}_\infty 
	= \ol{\Sigma}^{(\ell+1)}_\infty = \Sigma^{(\ell+1)}_\infty\), it suffices to prove the base case.
	For any \(\mu\in[d]\), with \(h:=g^{-1}\in\OO(d)\), we have 
	\begin{align*}
		\bar{f}_\infty^{(1)}(x^\AA) & = \bar{f}_{\mu}^{(1)}\left(x^\AA;\theta\right) 
		  = \AA\left( W^{(1)}_{\mu} hx + \gamma b^{(1)}_{\mu} \right) \\
		& = \sum_{|\alpha|\leq 1} a_\alpha(x)\partial^\alpha\left( W^{(1)}_{\mu} hx + \gamma b^{(1)}_{\mu} \right) \\
		& = \sum_{i,k=1}^{d} a_i(x) W^{(1)}_{\mu,k}h_{ki} + a(x)\left( W^{(1)}_{\mu} hx + \gamma b^{(1)}_{\mu} \right) \\
		& =  W^{(1)}_{\mu} h\big(a(x)x+D\AA(x)\big) + \gamma a(x) b^{(1)}_{\mu},
	\end{align*}
	with \(a_i\) being the coefficient of \(\partial_i\) in \(\AA\) and \(a\) being the zeroth term of \(\AA\), 
	from which we obtain 
$$\ol{\Sigma}^{(1)}_\infty\left(x^\AA,y^\BB\right) 
= \Sigma^{(1)}_\infty\left(x^\AA,y^\BB\right) \quad \forall  x^\AA,y^\BB\in \PLDR_d(R),$$ cf. \eqref{E:deep-Cov}. 
In fact, it follows directly since the parameters 
\(W^{(1)}_{\mu} h\) is a standard Gaussian vector as \(W^{(1)}_{\mu}\).
	For the recursive formula of \(\Theta^{(L)}_{\infty}\), by Lemma \ref{L:system-trans}, we have for \(g\in\OO(d)\), 
	\begin{align*}
		g\cdot \Theta^{(\ell+1)}_\infty\left(x^\AA, y^\BB\right) 
		& = \lim_{d_1,\ldots,d_\ell \to \infty}
		\llan \nabla_{\theta}  (g^{-1}\cdot \AA)f^{(\ell+1)}(g^{-1}x;\theta), 
		\nabla_{\theta}(g^{-1}\cdot \BB)f^{(\ell+1)}(g^{-1}y;\theta)  \rran, \\
		& = \lim_{d_1,\ldots,d_\ell \to \infty}
		\llan \nabla_{\theta}  (g^{-1}\cdot \AA)f^{(\ell+1)}(z;\theta)|_{z=g^{-1}x}, 
		\nabla_{\theta}(g^{-1}\cdot \BB)f^{(\ell+1)}(z;\theta)_{z=g^{-1}y}  \rran, \\
		& = \lim_{d_1,\ldots,d_\ell \to \infty}
		\llan \nabla_{\theta}  \wt{\AA}(f^{(\ell+1)}\circ g^{-1})(\wt{z};\theta)|_{\wt{z}=x}, 
		\nabla_{\theta}\wt{\BB}(f^{(\ell+1)}\circ g^{-1})(\wt{z};\theta)|_{\wt{z}=y}  \rran \\
		& = \lim_{d_1,\ldots,d_\ell \to \infty}
		\llan \nabla_{\theta}  \AA\bar{f}^{(\ell+1)}(z;\theta)|_{z=x}, 
		\nabla_{\theta}\BB\bar{f}^{(\ell+1)}(z;\theta)|_{z=y}  \rran,
	\end{align*}
Therefore, through the limit process in \cite{JinWu:2026DNTK}, we  get 
	\( g\cdot \Theta^{(\ell+1)}_\infty\left(x^\AA, y^\BB\right) = 
 		 \ol{\Theta}^{(\ell+1)}_\infty\left(x^\AA, y^\BB\right) \) 
	such that the DNTK \(\ol{\Theta}^{\ell+1}_\infty\) of the NN \(\bar{f}^{(L)}\) 
	has the recursive formula \eqref{E:deep-DNTK} 
	of \(\Theta^{(\ell+1)}_\infty\), except that the terms \(\Sigma^{(\ell+1)}_{\infty},
	\dot{\Sigma}^{(\ell+1)}_{\infty}\) and \(\Theta^{(\ell)}_{\infty}\) therein replaced by 
	\(\ol{\Sigma}^{(\ell+1)}_{\infty}, \dot{\ol{\Sigma}}^{(\ell+1)}_{\infty}\) and 
	\(\ol{\Theta}^{(\ell)}_{\infty}\). 
	Thus once the base case \(\ol{\Theta}^{(1)}_{\mu\nu} = \Theta^{(1)}_{\mu\nu}\) for all \(\mu,\nu\in[d]\) is proved, 
	the proof is then completed by mathematical induction and the equality \(\ol{\Sigma}^{(\ell+1)}_{\infty} = 
	\Sigma^{(\ell+1)}_\infty\). Indeed, with \(h:=g^{-1}\in\OO(d)\), we have
	\begin{align*}
		&\ol{\Theta}^{(1)}_{\mu\nu} \left( x^\AA,y^\BB \right) 
		 = \llan \nabla_\theta \AA \left(W^{(1)}_\mu hx + \gamma b^{(1)}_\mu\right), 
			\nabla_\theta \BB \left(W^{(1)}_\nu hy + \gamma b^{(1)}_\nu\right) \rran \\
	 =& \nabla_\theta \left( W^{(1)}_{\mu} h\big(a(x)x+D\AA(x)\big) + \gamma a(x) b^{(1)}_{\mu} \right)\cdot 
			\nabla_\theta \left( W^{(1)}_{\nu} h\big(b(y)y+D\BB(y)\big) + \gamma b(y) b^{(1)}_{\nu} \right) \\
	 =& \delta_{\mu\nu} \left(a(x)x+D\AA(x)\right)^\top \left(b(y)y+D\BB(y)\right) 
			+ \delta_{\mu\nu} \gamma^2 a(x)b(y) 
		= \delta_{\mu\nu} \Theta^{(1)}_{\infty}\left( x^\AA,y^\BB \right).
	\end{align*}
\end{proof}

\section{Proofs in Section \ref{S:Conseq}} \label{S-A:Conseq}

By Theorem \ref{T:Sym-DNN}, we have for all \(x,y\in\R^d, \AA\in\LD_d(\mathfrak{R})\) and \(g\in \OO(d)\), 
\begin{equation} \label{E:symmetry}
	\Theta^{(L)}_{g\cdot\AA}(x,y) = \Theta^{(L)}_\infty(x^{g\cdot\AA}, y^{g\cdot\AA}) 
	= \Theta^{(L)}_\infty\left((g^{-1}x)^{\AA}, (g^{-1}y)^{\BB}\right) = \Theta^{(L)}_{\AA}(g^{-1}x,g^{-1}y).
\end{equation}

\subsection{Proof of Proposition \ref{P:AA=g-BB}}

Given any \(f\in L^2(\mathbb{S}^{d-1})\), by equation \eqref{E:symmetry} and the coordinate transformation 
\(y=gw\) for \(g\in \OO(d)\), we have for any \(x\in\mathbb{S}^{d-1}\),
\begin{align*}
	T^{(L)}_\BB [f](x) = \int_{\mathbb{S}^{d-1}} \Theta^{(L)}_{\AA}(g^{-1}x,g^{-1}y)f(y) ~ \d S^{d-1}(y)
	= \int_{\mathbb{S}^{d-1}} \Theta^{(L)}_{\AA}(g^{-1}x,w)f(gw) ~ \d S^{d-1}(w).
\end{align*}
For the term \(M_{g} T^{(L)}_\AA M_{g^{-1}}\), we have
\begin{align*}
	\Big(T^{(L)}_{\AA}M_{g^{-1}}\Big) [f](x) & = T^{(L)}_{\AA}[f\circ g](x) 
	= \int_{\mathbb{S}^{d-1}} \Theta^{(L)}_{\AA}(x,y)(f\circ g)(y) ~ \d S^{d-1}(y) \\
 &	= \int_{\mathbb{S}^{d-1}} \Theta^{(L)}_{\AA}(x,y)f(gy) ~ \d S^{d-1}(y), 
\end{align*}
from which we obtain 
\[ \Big( M_g T^{(L)}_{\AA}M_{g^{-1}}\Big) [f](x) 
= \int_{\mathbb{S}^{d-1}} \Theta^{(L)}_{\AA}(g^{-1}x,y)f(gy) ~ \d S^{d-1}(y) = T^{(L)}_\BB [f](x). \] 
If \(T^{(L)}_{\AA}f = \lambda f\) for some \(\lambda\) and \(f\), 
then by the transformation \(w=gy\), we have for \(x\in \mathbb{S}^{d-1}\), 
\begin{align*}
	T^{(L)}_\BB [g\cdot f](x) 
	& = \int_{\mathbb{S}^{d-1}} \Theta^{(L)}_{\BB}(x,y)(f\circ g^{-1})(y) ~ \d S^{d-1}(y) 
	  = \int_{\mathbb{S}^{d-1}} \Theta^{(L)}_{\AA}(g^{-1}x,g^{-1}y)f(g^{-1}y) ~ \d S^{d-1}(y) \\
	& = \int_{\mathbb{S}^{d-1}} \Theta^{(L)}_{\AA}(g^{-1}x,w)f(w) ~ \d S^{d-1}(w)
	  = \lambda f(g^{-1}x) = \lambda (g\cdot f)(x).
\end{align*}

\subsection{Proof of Proposition \ref{P:zonal}}

If \(\AA\in\LD_d(C(\mathbb{S}^{d-1}))^{\OO(d)}\), then \(g\cdot \AA=\AA\) in \eqref{E:symmetry}, 
which implies \(\Theta^{(L)}_{\AA}(g^{-1}x,g^{-1}y) = \Theta^{(L)}_{\AA}(x,y)\) for all \(g\in\OO(d)\) and thus also the invariance of \(\Theta^{(L)}_{\AA}\) holds. 
For any two pairs \((x,y)\) and \((x_1,y_1)\) from the unit sphere such that \(x_1^\top y_1 = x^\top y\), 
the angles between the pairs must be equal. Then there exists an orthogonal matrix
\(g\in\OO(d)\) such that \((x_1,y_1) = (gx,gy)\). By the invariance of \(\Theta^{(L)}_{\AA}\), we have
$ \Theta^{(L)}_{\AA}(x_1,y_1) = \Theta^{(L)}_{\AA}(gx,gy) = \Theta^{(L)}_{\AA}(x,y),$
which shows the zonal property.

\subsection{Proof of Corollary \ref{C:formula-sym}}
To prove part (i), we first derive the formulas of \(\Theta^{(2)}_\infty\) for \(\EE,\RR\) and \(\LL\) 
from the formula \eqref{E:2-DNTK}. For the \(\EE\)-NTK \(\Theta^{(2)}_\EE\), we have 
\begin{equation}\label{E:identity}
\begin{aligned}
	\Theta^{(2)}_\EE(x, y) =& \E_{u,v}\left[ \sigma\left(u^\top x + \gamma v\right)
	\sigma\left(u^\top y + \gamma v\right) \right] + \gamma^2  \\
	&  + (x^\top y + \gamma^2) ~ \E_{u,v}
	\left[ \sigma'\left(u^\top x + \gamma v\right)
	\sigma'\left(u^\top y + \gamma v\right)  \right].
\end{aligned}
\end{equation}
For the radial derivative operator \(\RR\), we have \(\RR_{(k)} = x_k \) by Lemma \ref{L:AA-z_k}. First, we have 
\[ \RR\sigma\left(u^\top x + \gamma v\right)
= \sum_{i=1}^{d} x_i \partial_{i}\sigma\left(u^\top x + \gamma v\right)
= \sum_{i=1}^{d} x_i u_i \sigma'\left(u^\top x + \gamma v\right) 
= u^\top x ~ \sigma'\left(u^\top x + \gamma v\right), \]
which implies
\begin{align*}
	\mathcal{R}\left(\sigma'\left(u^\top x + \gamma v\right)x_k\right) 
	& = x_k \RR \sigma'( u^\top x + \gamma v ) + \RR_{(k)}\sigma'( u^\top x + \gamma v ) \\
	& = x_k\cdot u^\top x ~ \sigma''(u^\top x + \gamma v) 
		+ x_k\cdot \sigma'( u^\top x + \gamma v ).
\end{align*}
Then, for the third term in \(\Theta^{(2)}_\RR\), we have
\begin{align*}
	& \sum_{k=1}^{d}\mathcal{R}\left(\sigma'\left(u^\top x + \gamma v\right)x_k\right)
	\mathcal{R}\left(\sigma'\left(u^\top y + \gamma v\right)y_k\right) \\
	 =& x^\top y ~ \left( u^\top x ~ \sigma''\left(u^\top x + \gamma v\right) 
	+ \sigma'( u^\top x + \gamma v ) \right) 
	\left( u^\top y ~ \sigma''\left(u^\top y + \gamma v\right) 
	+ \sigma'( u^\top y + \gamma v ) \right).
\end{align*}
By combining these relations, we obtain 
\begin{equation}\label{E:radial}
\begin{aligned}
	\Theta^{(2)}_\RR(x, y)
	= &\E_{u,v}\left[ u^\top x ~ \sigma'\left(u^\top x + \gamma v\right)
		u^\top y ~ \sigma'\left(u^\top y + \gamma v\right) \right] \\
	& + x^\top y ~ \E_{u,v}\left[ \left( u^\top x ~ \sigma''(u^\top x + \gamma v) 
		+ \sigma'( u^\top x + \gamma v ) \right) 
		\left( u^\top y ~ \sigma''(u^\top y + \gamma v) 
		+ \sigma'( u^\top y + \gamma v ) \right)  \right]\\
	&+ \gamma^2 \E_{u,v}\left[ u^\top x ~ \sigma''\left(u^\top x + \gamma v\right)
		u^\top y ~ \sigma''\left(u^\top y + \gamma v\right) \right].
\end{aligned}
\end{equation}
For the Laplace operator $\LL$, we have \(\LL_{(k)} = 2\partial_{k} \) by Lemma \ref{L:AA-z_k}. First, we have 
\[ \LL\sigma\left(u^\top x + \gamma v\right)
= \sum_{i=1}^{d}  \partial_{i}^2 \sigma\left(u^\top x + \gamma v\right)
= \sum_{i=1}^{d} u_i^2 \sigma''\left(u^\top x + \gamma v\right) 
= u^\top u ~ \sigma''\left(u^\top x + \gamma v\right), \]
which implies 
\begin{align*}
	\mathcal{L}\left(\sigma'\left(u^\top x + \gamma v\right)x_k\right) 
	& = x_k \LL \sigma'( u^\top x + \gamma v ) + \LL_{(k)}\sigma'( u^\top x + \gamma v ) \\
	& = x_k u^\top u ~ \sigma'''(u^\top x + \gamma v)
		+ 2 u_k \sigma''( u^\top x + \gamma v ).
\end{align*}
Then, each summand in the third term of \(\Theta^{(2)}_\LL\) is given by
\begin{align*}
	& \mathcal{L}\left(\sigma'\left(u^\top x + \gamma v\right)x_k\right) 
	  \mathcal{L}\left(\sigma'\left(u^\top y + \gamma v\right)x_k\right) \\
	=& \left( x_k u^\top u ~ \sigma'''(u^\top x + \gamma v)
		+ 2 u_k \sigma''( u^\top x + \gamma v ) \right)
		\left( y_k u^\top u ~ \sigma'''(u^\top y + \gamma v)
		+ 2 u_k \sigma''( u^\top y + \gamma v ) \right) \\
	=& x_k y_k (u^\top u)^2 \sigma'''(u^\top x + \gamma v) \sigma'''(u^\top y + \gamma v)
		+ 4 u_k^2 \sigma''( u^\top x + \gamma v )\sigma''( u^\top y + \gamma v ) \\
	& + 2x_k u_k ~u^\top u~ \sigma'''(u^\top x + \gamma v) \sigma''( u^\top y + \gamma v )
		+ 2y_k u_k ~u^\top u~ \sigma'''(u^\top y + \gamma v) \sigma''( u^\top x + \gamma v ),
\end{align*}
from which we obtain 
\begin{align*}
	& \sum_{k=1}^{d}\mathcal{L}\left(\sigma'\left(u^\top x + \gamma v\right)x_k\right)
	\mathcal{L}\left(\sigma'\left(u^\top y + \gamma v\right)y_k\right) \\
=& x^\top y ~ (u^\top u)^2 \sigma'''(u^\top x + \gamma v) \sigma'''(u^\top y + \gamma v)
		+ 4 u^\top u~ \sigma''( u^\top x + \gamma v )\sigma''( u^\top y + \gamma v ) \\
	& + 2u^\top x ~u^\top u~ \sigma'''(u^\top x + \gamma v) \sigma''( u^\top y + \gamma v )
		+ 2u^\top y ~u^\top u~ \sigma'''(u^\top y + \gamma v) \sigma''( u^\top x + \gamma v ).
\end{align*}
By combining the preceding identities, we obtain 
\begin{equation}\label{E:Laplace}
\begin{aligned}
	\Theta^{(2)}_\LL(x, y)
	=& \E_{u,v}\left[ (u^\top u)^2 \sigma''\left(u^\top x + \gamma v\right)
		 \sigma''\left(u^\top y + \gamma v\right) \right] \\
	&  + \E_{u,v}\Big[ x^\top y ~ (u^\top u)^2 \sigma'''(u^\top x + \gamma v) \sigma'''(u^\top y + \gamma v)
		+ 4 u^\top u~ \sigma''( u^\top x + \gamma v )\sigma''( u^\top y + \gamma v ) \\
	& +   2u^\top u ~u^\top x~ \sigma'''(u^\top x + \gamma v) \sigma''( u^\top y + \gamma v )
		+ 2u^\top u ~u^\top y~ \sigma'''(u^\top y + \gamma v) \sigma''( u^\top x + \gamma v )  \Big]\\
	& + \gamma^2 \E_{u,v}\left[ (u^\top u)^2 \sigma'''\left(u^\top x + \gamma v\right)
		 \sigma'''\left(u^\top y + \gamma v\right) \right].
\end{aligned}	
\end{equation}
When \(\sigma=\sigma_s\) and \(v=0\) in equation \eqref{E:identity}, we get 
\begin{align*}
	K^\EE_s(x, y) & = \E_{u}\left[ \sigma_s\left(u^\top x\right)\sigma_s\left(u^\top y\right) \right] 
		+ \gamma^2 + s^2(x^\top y + \gamma^2) ~ \E_{u,v}\left[ \sigma_{s-1}\left(u^\top x\right)
		\sigma_{s-1}\left(u^\top y\right)  \right] \\
	& = k_s(x,y) + \gamma^2 + s^2(x^\top y) k_{s-1}(x,y) + \gamma^2 s^2 k_{s-1}(x,y).
\end{align*}
Similarly, for the operator \(\RR\), we get 
$$K^\RR_s(x, y) = s^2 k_s(x,y) + s^4 (x^\top y) k_{s-1}(x,y) + \gamma^2 (s)_2^2 k_{s-1}(x,y).$$
While for the operator \(\LL\), by Lemma \ref{L:k_s(x,y;q)} below, we obtain
\begin{align*}
	K^\LL_s(x, y) & = (s)_2^2 k_{s-2}(x,y;2) + (s)_3^2(x^\top y) k_{s-3}(x,y;2) + 4(s)_2^2 k_{s-2}(x,y;1) \\
		& \qquad + 4(s)_2^2(s-2) k_{s-2}(x,y;1) + \gamma^2 (s)_3^2 k_{s-3}(x,y;2).
\end{align*}

\begin{lemma}\label{L:k_s(x,y;q)}
Let \(k_s(x,y)\) be the arc-cosine kernel \eqref{E:arc-cos} restricted on the unit sphere \(\mathbb{S}^{d-1}\). Given any integer \(p\geq 0\), we have for any \(x,y\in \mathbb{S}^{d-1}\), 
\begin{equation*}
	k_s(x,y;p) := \E_{u\sim\NN(0,I_d)}\left[ (u^\top u)^p \sigma_s(u^\top x) \sigma_s(u^\top y) \right]
		= \frac{m_{s+p}\left(\chi^2_{d}\right)}{m_{s}\left(\chi^2_{d}\right)} k_s(x,y). 
	\end{equation*}
\end{lemma}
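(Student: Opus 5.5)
The natural route is the polar decomposition of the standard Gaussian vector. I would write \(u = r\,\omega\) with \(r=|u|\) and \(\omega = u/|u|\in\mathbb{S}^{d-1}\). The classical fact I would invoke is that for \(u\sim\NN(0,I_d)\) the radius \(r\) and the direction \(\omega\) are independent. Moreover \(r^2\sim\chi^2_d\), and \(\omega\) is uniformly distributed on \(\mathbb{S}^{d-1}\). This follows directly from the rotational invariance of the Gaussian density \(f(u)=(2\pi)^{-d/2}e^{-|u|^2/2}\), already used in the proof of Theorem \ref{T:Sym-SNN}: in polar coordinates the density factorizes as a function of \(r\) times the surface measure \(\d S^{d-1}(\omega)\).

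Next, I would exploit the positive homogeneity of the RePU function, namely \(\sigma_s(r t) = r^s\sigma_s(t)\) for \(r\geq 0\). This gives
\[
(u^\top u)^p\,\sigma_s(u^\top x)\,\sigma_s(u^\top y) = r^{2p+2s}\,\sigma_s(\omega^\top x)\,\sigma_s(\omega^\top y).
\]
By independence, the expectation then splits as
\[
k_s(x,y;p) = \E\big[(r^2)^{s+p}\big]\;\E_\omega\big[\sigma_s(\omega^\top x)\sigma_s(\omega^\top y)\big] = m_{s+p}(\chi^2_d)\,\E_\omega\big[\sigma_s(\omega^\top x)\sigma_s(\omega^\top y)\big].
\]
This holds for all \(x,y\in\R^d\), not only on the sphere.

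Specializing to \(p=0\) gives \(k_s(x,y)=m_s(\chi^2_d)\,\E_\omega[\sigma_s(\omega^\top x)\sigma_s(\omega^\top y)]\). Since \(m_s(\chi^2_d)=d(d+2)\cdots(d+2s-2)>0\), I can eliminate the angular factor between the two identities and obtain the claimed ratio \(m_{s+p}/m_s\). In this formulation the restriction to the unit sphere is actually unnecessary; it is harmless and only matches the later usage in Corollary \ref{C:formula-sym}.

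The only step needing care is the independence and polar factorization. I would justify it by the change of variables \(\d u = r^{d-1}\,\d r\,\d S^{d-1}(\omega)\) together with the radial form of the Gaussian density. Integrability also needs a check, but it is immediate because all moments of \(\chi^2_d\) are finite and \(\sigma_s\) is bounded on the sphere. For \(s=0\), with the convention \(\sigma_0=\mathbf{1}_{t>0}\), the same argument applies since the map is \(0\)-homogeneous.
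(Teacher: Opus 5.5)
Your proof is correct and matches the paper's argument: the paper also writes $u=|u|\,\ol{u}$ with independent $|u|^2\sim\chi^2_d$ and $\ol{u}\sim\mathrm{U}(\mathbb{S}^{d-1})$, uses the $s$-homogeneity of $\sigma_s$ to split the expectation into $m_{s+p}(\chi^2_d)$ times the angular factor, and compares with the $p=0$ case. Your additional remarks are accurate points the paper leaves implicit: $m_s(\chi^2_d)>0$, the identity holding for all $x,y\in\R^d$, and the $s=0$ case.
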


\begin{proof}
	Note that the Gaussian vector \(u\sim \NN(0,I_d)\) can be decomposed as the product of two independent
	random variables \( u = |u|\cdot \ol{u} \), such that the norm square \(|u|^2 \sim \chi^2_d \) has 
	\(\chi^2_d\)-distribution, and the direction \( \ol{u} = u/|u| \sim \mathrm{U}(\mathbb{S}^{d-1})\) 
	is uniformly distributed over \(\mathbb{S}^{d-1}\). Thus, by definition, we have 
	\begin{align*}
		k_s(x,y;p) & = \E_{u\sim \NN(0,I_d)}\left[ (u^\top u)^p \sigma_s(u^\top x) \sigma_s(u^\top y) \right]\\
			& = \E_{u\sim \NN(0,I_d)}\left[ |u|^{2(p+s)} \sigma_s(\ol{u}^\top x) \sigma_s(\ol{u}^\top y) \right]\\
			& = \E_{|u|^2\sim \chi^2_d}\big[|u|^{2(p+s)}\big] \cdot
				\E_{\ol{u}\sim\mathrm{U}(\mathbb{S}^{d-1})}\left[ \sigma_s(\ol{u}^\top x) \sigma_s(\ol{u}^\top y) \right]\\
			& = m_{p+s}\left(\chi^2_{d}\right) 
				\E_{\ol{u}\sim\mathrm{U}(\mathbb{S}^{d-1})}\left[ \sigma_s(\ol{u}^\top x) \sigma_s(\ol{u}^\top y) \right].
	\end{align*}	
	Similarly, we have \(k_s(x,y)=m_{s}\left(\chi^2_{d}\right) 
	\E_{\ol{u}\sim\mathrm{U}(\mathbb{S}^{d-1})}\left[ \sigma_s(\ol{u}^\top x) \sigma_s(\ol{u}^\top y) \right]\).
This completes the proof of the lemma. 
\end{proof}

To prove part (ii), we need to compute the eigenvalue decay rate of the kernel \((x^\top y)k_s(x,y)\) 
with the knowledge of the arc-cosine kernel. To this end, we study  
\((x^\top y)^q k_s(x,y)\) for some integer \(q\geq0\) and first consider the reduction 
of the index \(q\) through the recursive formula of Lengendre polynomials. 
\begin{lemma}\label{L:reduction-q}
	Let \(P_{k} = P_{k,d}\) be the Lengendre polynomial of degree \(k\) in \(d\) dimension,
	and fix the integer \(q\in\N\). Then for any \(k>q\), we have 
	\[ t^q P_k(t) = c_{k-q}P_{k-q}(t) + c_{k-q+2}P_{k-q+2}(t) + \cdots + 
		c_{k-q+2(q-1)}P_{k-q+2(q-1)}(t) + c_{k+q}P_{k+q}(t), \]
	where for all \(i=0,1,\ldots,q\), the coefficients \(c_{k-q+2i} > 0\) and satisfy
	\( \lim_{k\to\infty} c_{k-q+2i} = 2^{-q}\binom{q}{i} \).
\end{lemma}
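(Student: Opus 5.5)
We prove the statement by induction on $q$, using the three-term recurrence for the Legendre (Gegenbauer) polynomials in dimension $d$. With the normalization $P_{k,d}(1)=1$, this recurrence reads
\[
  t P_k(t) = a_k P_{k+1}(t) + b_k P_{k-1}(t), \qquad
  a_k = \frac{k+d-2}{2k+d-2}, \quad b_k = \frac{k}{2k+d-2}, \qquad k\geq 1,
\]
while $tP_0 = P_1$. For every $k\geq 1$ both coefficients are strictly positive, and $a_k,b_k\to \tfrac12$ as $k\to\infty$.

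The base case $q=0$ is trivial. For $q=1$ the expansion is exactly the recurrence, with $c_{k-1}=b_k$ and $c_{k+1}=a_k$. Their limits are $\tfrac12 = 2^{-1}\binom{1}{0} = 2^{-1}\binom11$, as claimed.

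For the inductive step, assume the expansion holds for $q$ and all $k>q$, and fix $k>q+1$. Multiply the expansion of $t^qP_k$ by $t$ and apply the recurrence to each term $c_{k-q+2i}P_{k-q+2i}$. The lowest index that appears is $k-q\geq 2$, so the recurrence applies to every term with positive coefficients $a,b$. The degrees produced are $k-q-1+2j$ for $j=0,\ldots,q+1$, i.e.\ the indices $k-(q+1),\,k-(q+1)+2,\ldots,k+q+1$. The new coefficient for index $k-(q+1)+2j$ is
\[
  c^{(q+1)}_{k-(q+1)+2j} = c^{(q)}_{k-q+2(j-1)}\, a_{k-q+2(j-1)} + c^{(q)}_{k-q+2j}\, b_{k-q+2j},
\]
with the convention that out-of-range terms vanish.

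This gives positivity immediately, since each new coefficient is a sum of products of positive numbers and at least one of them is present. For the limits, each $k$-dependent index tends to infinity as $k\to\infty$, so $a,b\to\tfrac12$. Combined with the induction hypothesis, the new coefficient converges to
\[
  \tfrac12\, 2^{-q}\left[\binom{q}{j-1}+\binom{q}{j}\right] = 2^{-(q+1)}\binom{q+1}{j}
\]
by Pascal's rule.

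The main point requiring care is the index range. We need $k>q$ so that the lowest index $k-q$ is at least $1$, which keeps the recurrence from degenerating at $P_0$. We also need every coefficient of the recurrence to be strictly positive, not merely nonnegative. Both conditions hold because $d\geq 2$ gives $a_k,b_k>0$ for $k\geq1$. If the paper normalizes $P_{k,d}$ differently, the recurrence coefficients change, but they remain positive with limits $\tfrac12$, so the argument is unaffected.
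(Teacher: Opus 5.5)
Your proof is correct and is essentially the paper's argument. The paper applies the same recurrence \eqref{E:Legen-recur} $q$ times and writes $c_{k-q+2i}$ as a sum over the $\binom{q}{i}$ weighted downward paths from $(k,q)$ to $(k-q+2i,0)$, each segment weight tending to $1/2$; your induction with the Pascal-rule step is exactly this path count unrolled. Your recurrence coefficients are the paper's, with normalization $P_{k,d}(1)=1$, so your closing hedge about other normalizations is unnecessary. It would also not hold in general: for monic polynomials the two coefficients do not both tend to $1/2$.
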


\begin{proof}

For \(d\geq 2\) the Lengendre polynomials satisfy \(P_0(t) = 1, P_1(t) =t\) and 
the recursive formula \cite[Eq. (2.86)]{Atkinson:2012}:
\begin{equation}\label{E:Legen-recur}
	\begin{cases}
		tP_k(t)  = P_{k+1}(t), &  k =0, \\
		tP_k(t)  = {\displaystyle\frac{k}{2k+d-2}}P_{k-1}(t) + 
				{\displaystyle\frac{k+d-2}{2k+d-2}}P_{k+1}(t), & k\geq 1.
	\end{cases}
\end{equation}
For the function \(t^q P_k(t)\), using the recursive formula \(q\) times, we get a positive 
linear combination of \(P_{k-q+2i}(t)\) for \(i=0,1,\ldots,q\) whose coefficients corresponds 
to the weighted downward paths from \((k,q)\) to the points with coordinates \(k-q+2i\) on the 
\(k\)-axis in Fig. \ref{F:recursion-k>q}. 
The weight of each path is the product of the weights of all its segments. 
For each segment \(a\) with the upper end \(U(a) = (x,y)\), its weight \(w_a\)
depends on its lower end \(L(a)\) and according to the formula \eqref{E:Legen-recur} is given by 
\begin{equation*}
	w_a = 
	\begin{cases}
		\displaystyle{\frac{x}{2x+d-2}}, & \text{if }   L(a) = (x-1,y-1), \\
		& \\
		\displaystyle{\frac{x+d-2}{2x+d-2}}, & \text{if }   L(a) = (x+1,y-1).
	\end{cases}
\end{equation*}
\begin{figure}[!ht]
\centering
\resizebox{0.6\textwidth}{!}{%
\begin{tikzpicture}
\draw [line width=0.2pt, ->, >=Stealth] (-0.5,0) -- (12,0);
\node at (12,0) [right] {\(k\)};
\draw [->, >=Stealth] (0,-0.5) -- (0, 6);
\node at (0,6) [above] {\(q\)};
\draw[thick] (1,0) to  (6,5);
\draw[thick] (3,0) to  (7,4);
\draw[thick] (5,0) to  (8,3);
\draw[thick] (7,0) to  (9,2);
\draw[thick] (9,0) to  (10,1);
\draw[thick] (6,5) to  (11,0);
\draw[thick] (5,4) to  (9,0);
\draw[thick] (4,3) to  (7,0);
\draw[thick] (3,2) to  (5,0);
\draw[thick] (2,1) to  (3,0);


\node at (5,0) [below] {};
\fill (5,0) circle[radius=2pt];
\node at (3,0) [below] {};
\fill (3,0) circle[radius=2pt];
\node at (1,0) [below] {\(k-q\)};
\fill (1,0) circle[radius=2pt];

\node at (7,0) [below] {};
\fill (7,0) circle[radius=2pt];
\node at (9,0) [below] {};
\fill (9,0) circle[radius=2pt];
\node at (11,0) [below] {\(k+q\)};
\fill (11,0) circle[radius=2pt];

\node at (6,5) [above right] {\((k,q)\)};
\fill (6,5) circle[radius=2pt];
\end{tikzpicture}
}
\caption{The recursion of Lengendre Polynomials for \(k>q\)}
\label{F:recursion-k>q}
\end{figure}
Let \(\mathcal{P}(k-q+2i)\) denote the set of all paths from \((k,q)\) to \((k-q+2i,0)\). 
Then the coefficients \(c_{k-q+2i}\) are 
\begin{align*}
	c_{k-q+2i} = \sum_{\ell\in\mathcal{P}(k-q+2i)} w_\ell 
	= \sum_{\ell\in\mathcal{P}(k-q+2i)} \prod_{a\in \ell} w_a.
\end{align*}
Note that all the downward paths have \(q\) segments and there are totally \(\binom{q}{i}\) paths 
in each \(\mathcal{P}(k-q+2i)\). When \(k\) tends to infinity, the weights of all the
segments \(w_a\) tend to \(1/2\) by the above formula. Therefore we have 
\[ \lim_{k\to\infty} c_{k-q+2i} = \sum_{\ell\in\mathcal{P}(k-q+2i)} \prod_{a\in \ell} 
	\lim_{k\to\infty} w_a = \binom{q}{i}\frac{1}{2^{q}}. \]

\end{proof}

Next, we compute the eigenvalues of the kernel \((x^\top y)^q k_s(x,y) \) which equals 
\(\frac{1}{2\pi}t^q J_s(t)\) by the formula \eqref{E:arc-cos} for \(t=x^\top y\) with \(x,y\in\mathbb{S}^{d-1}\)
and \(J_s(t) := J_s(\arccos t)\). In the case \(q=0\), from \cite[Appendix D]{Bach:2017breaking} and
\cite[Lemma 17]{Bietti:2019Inductive}, we have 
\begin{itemize}
	\item \(\lambda_k(J_s(t))>0\) for \(0\leq k\leq s\);
	\item \(\lambda_k(J_s(t)) = 0\) for \(k>s\) and \(k\equiv s \!\mod 2\);
	\item \(\lambda_k(J_s(t))>0\) and \(\lambda_k(J_s(t)) \sim k^{-d-2s}\) 
			for \(k> s\) and \(k\not\equiv s \!\mod 2\) when \(k\to \infty\).
\end{itemize} 
Next we show the kernel \((x^\top y)^q k_s(x,y)\) obeys a similar law as the arc-cosine 
kernel \(k_s(x,y)\).
\begin{lemma}\label{L:t^qJ_s(t)}
	Let \(J_s(t) := J_s(\arccos t)\) for \(t\in [-1,1]\) and \(q\in\N\). 
	Then the eigenvalue \(\lambda_k(t^q J_s(t))\) given by equation \eqref{E:F-H_coeff} satisfy
	\begin{itemize}
		\item \(\lambda_k(t^q J_s(t))>0\) for \(0\leq k\leq s+q\) ;
		\item \(\lambda_k(t^q J_s(t)) = 0\) for \(k>s+q\) and \(k\equiv s+q \!\mod 2\);
		\item \(\lambda_k(t^q J_s(t)) > 0\) and \(\lambda_k(t^q J_s(t)) \sim k^{-d-2s}\) 
		for \(k> s+q\) and \(k\not\equiv s+q \!\mod 2\) when \(k\to \infty\).
	\end{itemize} 
\end{lemma}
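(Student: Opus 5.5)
The plan is to expand $J_s$ in Legendre polynomials and then transfer the multiplication by $t^q$ to the coefficients via Lemma \ref{L:reduction-q}. By the Funk--Hecke formula \eqref{E:F-H_coeff} and the orthogonality of $\{P_{k,d}\}$ with respect to the weight $(1-t^2)^{(d-3)/2}$, any integrable zonal kernel $K$ admits the expansion
\[ K(t)=\sum_{k\in\N} \lambda_k(K)\,\frac{N(d,k)}{\omega_{d-1}}P_k(t), \]
where $N(d,k)$ is the dimension of the space of spherical harmonics of degree $k$. Equivalently, one can work directly with the integrals: since $t^q P_k(t)$ is, by the symmetric form of Lemma \ref{L:reduction-q}, a linear combination of $P_j$ with $|j-k|\le q$ and $j\equiv k+q \pmod 2$, we get
\[ \lambda_k(t^qJ_s)=\omega_{d-2}\int_{-1}^1 J_s(t)\,t^qP_k(t)(1-t^2)^{\frac{d-3}{2}}\,{\rm d}t=\sum_{i=0}^q c^{(k)}_{k-q+2i}\,\lambda_{k-q+2i}(J_s) \]
for $k>q$. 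Here all $c^{(k)}_{k-q+2i}>0$, and they tend to $2^{-q}\binom{q}{i}$. This single identity carries the whole argument, and I will read off the three bullet points from it using the known facts about $\lambda_j(J_s)$ quoted just above.

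The vanishing claim comes first. If $k>s+q$ and $k\equiv s+q \pmod 2$, then every index $j=k-q+2i$ satisfies $j\ge k-q>s$ and $j\equiv k-q\equiv s \pmod 2$. Hence each $\lambda_j(J_s)=0$, and so $\lambda_k(t^qJ_s)=0$.

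Next, for $k>s+q$ with $k\not\equiv s+q$, every $j$ in the sum exceeds $s$ and has $j\not\equiv s \pmod 2$, so each $\lambda_j(J_s)>0$ with $\lambda_j(J_s)\sim j^{-d-2s}$. Since $j/k\to1$ uniformly over the $q+1$ terms, and the coefficients converge to positive limits summing to $1$, we obtain
\[ \lambda_k(t^qJ_s)\sim\sum_i 2^{-q}\binom{q}{i}\,k^{-d-2s}=k^{-d-2s}. \]
Positivity follows at once, because every term is nonnegative and at least one is positive.

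The main obstacle is the positivity for small $k$, namely $0\le k\le s+q$, including $k\le q$, which Lemma \ref{L:reduction-q} does not cover. For these $k$ I would redo the path-counting argument of Lemma \ref{L:reduction-q} using the case $k=0$ of \eqref{E:Legen-recur}. This shows that $t^qP_k$ is still a nonnegative combination of $P_j$ with $j\equiv k+q \pmod 2$ and $j\le k+q$. The key point is that the top index $j=k+q$ always appears with a strictly positive coefficient, and that in the range $0\le k\le s+q$ one can always find some $j$ in the support with either $j\le s$ (where $\lambda_j(J_s)>0$) or $j>s$, $j\not\equiv s$. Specifically, if $k+q\not\equiv s \pmod 2$, all $j$ in the support have this parity and $\lambda_j(J_s)>0$ whether $j\le s$ or $j>s$. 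If $k+q\equiv s \pmod 2$, the lowest index $j=|k-q|$ or a nearby one satisfies $j\le s$, since $k\le s+q$ forces $k-q\le s$, and again $\lambda_j(J_s)>0$. One must check that this lowest index really carries a positive coefficient near the boundary $k\le q$, where the downward paths reflect at $0$. I expect this combinatorial check of the reflected paths to be the step needing the most care. Once it is done, a nonnegative combination of nonnegative eigenvalues containing a positive term is positive, which completes the proof.
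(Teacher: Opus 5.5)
Your route is the paper's. The vanishing bullet and the asymptotic bullet are read off from $\lambda_k(t^qJ_s)=\sum_{i=0}^{q}c_{k-q+2i}\lambda_{k-q+2i}(J_s)$ (Lemma~\ref{L:reduction-q}) exactly as the paper does. For $q<k\le s+q$ you use the lowest index $k-q\le s$ with $c_{k-q}>0$, which is also the paper's argument.

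The step you leave open is stated incorrectly, so it needs fixing, not just checking. For $k\le q$ the lowest index in the expansion of $t^qP_k$ is $j_{\min}\in\{0,1\}$, not $|k-q|$. Your justification ``$k\le s+q$ forces $k-q\le s$'' gives no control on $q-k$. For example, $k=0$, $q=5$, $s=1$ lies in your case $k+q\equiv s$, yet $|k-q|=5>s$ and $\lambda_5(J_1)=0$.

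The repair is the paper's Figure~\ref{F:recursion-k<q}. Take the path that descends from $(k,q)$ to the $q$-axis and then alternates $0\to 1\to 0\to\cdots$. It has strictly positive weight, because the step out of $0$ has weight $1$ by the $k=0$ case of \eqref{E:Legen-recur}. So $j_{\min}$ carries a positive coefficient. In your case $k+q\equiv s \pmod 2$ you have $j_{\min}\equiv s$, hence $j_{\min}\le s$ (if $j_{\min}=1$ then $s$ is odd, so $s\ge 1$). Therefore $\lambda_{j_{\min}}(J_s)>0$.

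Your parity split otherwise differs from the paper only in bookkeeping. In the case $k+q\not\equiv s$ you use the top index, which requires $\lambda_j(J_s)>0$ for every $j>s$ with $j\not\equiv s$, not merely asymptotically. The paper always uses the lowest index, which is $\le s$ in every case except $s=0$ with $k+q$ odd. There it needs only the single value $\lambda_1(J_0)>0$, which it checks directly from $J_0(t)=\pi-\arccos t$.
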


\begin{proof}

For \(0\leq k \leq q\), we use the recursive formula \eqref{E:Legen-recur} 
to represent \(t^qP_k(t)\) as a positive linear sum of \(P_{k+q-2i}(t)\) for  
\(i=0,1,\ldots, 2\lfloor \frac{k+q}{2} \rfloor\), cf. Fig. \ref{F:recursion-k<q}, where the weight 
of the segments with an upper end on the \(q\)-axis is \(1\) by \eqref{E:Legen-recur}.
\begin{figure}[!ht]
\centering
\resizebox{0.45\textwidth}{!}{%
\begin{tikzpicture}
\draw [line width=0.2pt, ->, >=Stealth] (-0.5,0) -- (8,0);
\node at (8,0) [right] {\(k\)};
\draw [->, >=Stealth] (0,-0.5) -- (0, 7);
\node at (0,7) [above] {\(q\)};
\draw[thick] (0,5) to  (1,6);
\draw[thick] (0,3) to  (2,5);
\draw[thick] (0,1) to  (3,4);
\draw[thick] (1,0) to  (4,3);
\draw[thick] (3,0) to  (5,2);
\draw[thick] (5,0) to  (6,1);
\draw[thick] (7,0) to  (1,6);
\draw[thick] (5,0) to  (0,5);
\draw[thick] (3,0) to  (0,3);
\draw[thick] (1,0) to  (0,1);

\node at (1,0) [below] {};
\fill (1,0) circle[radius=2pt];
\node at (3,0) [below] {};
\fill (3,0) circle[radius=2pt];
\node at (5,0) [below] {};
\fill (5,0) circle[radius=2pt];
\node at (7,0) [below] {\(k+q\)};
\fill (7,0) circle[radius=2pt];

\node at (1,6) [above right] {\((k,q)\)};
\fill (1,6) circle[radius=2pt];
\end{tikzpicture}
}
\caption{Recursion of Lengendre Polynomials for \(k\leq q\)}
\label{F:recursion-k<q}
\end{figure}
Therefore, \(\lambda_k(t^q J_s(t))\) is a positive linear sum of \(\lambda_{k+q-2i}(J_s(t))\)
for \(i=0,1,\ldots, 2\lfloor \frac{k+q}{2} \rfloor\) with the lowest index term given by 
\begin{equation*}
	\begin{cases}
		\lambda_0(J_s(t)) = \int_{-1}^{1} J_s(t) (1-t^2)^{\frac{d-3}{2}} ~ \d t, & \text{if } k\equiv q \!\mod 2,\\
		\lambda_1(J_s(t)) = \int_{-1}^{1} t J_s(t) (1-t^2)^{\frac{d-3}{2}} ~ \d t, & \text{if } k\not\equiv q \!\mod 2.
	\end{cases}
\end{equation*}
Note that \(\lambda_0(J_0(t))>0\) and both \(\lambda_0(J_s(t))\) and \(\lambda_1(J_s(t))\) 
are greater than zero for \(s\geq 1\). For the term \(\lambda_1(J_0(t))\), 
using the fact \(J_0(t) = \pi -\arccos t\) \cite{Cho:2009Kernel} we have
\begin{align*}
	\lambda_1(J_0(t)) & = \int_{-1}^{1} t (\pi -\arccos t) (1-t^2)^{\frac{d-3}{2}} ~ \d t \\
	& = \int_{0}^{1} t (\pi -\arccos t) (1-t^2)^{\frac{d-3}{2}} ~ \d t + \int_{-1}^{0} t (\pi -\arccos t) (1-t^2)^{\frac{d-3}{2}} ~ \d t \\
	& = \int_{0}^{1} t (\pi -2\arccos t) (1-t^2)^{\frac{d-3}{2}} ~ \d t > 0.
\end{align*}
For \(q< k \leq s+q\), the recursive formula \(\lambda_k(t^q J_s(t))\) is a 
positive linear sum of \(\lambda_{k-q+2i}(J_s), i=0,1,\ldots,q\) by Lemma \ref{L:reduction-q} with the first term 
\(\lambda_{k-q}(J_s)>0\) since \(k-q\leq s\).  
For the case \(k>s+q\), again we have 
\begin{align*}
	\lambda_k(t^q J_s(t)) = \sum_{i=0}^{q} c_{k-q+2i} \lambda_{k-q+2i}(J_s(t)). 
\end{align*}
If \(k\equiv s+q \!\mod 2\), then all the \(k-q+2i>s\) and \(k-q+2i \equiv s \!\mod 2 \) for 
\(i=0,1,\ldots,q\), thus \(\lambda_{k-q+2i}(J_s(t)) = 0\); 
if \(k\not\equiv q \!\mod 2\), then \(k-q+2i \not\equiv s \!\mod 2 \), thus
each 
$$\lambda_{k-q+2i}(J_s(t))\sim (k-q+2i)^{-d-2s}\sim k^{-d-2s}\quad \mbox{as }k\to\infty.$$ 
Since \(\lim_{k\to\infty} c_{k-q+2i} = 2^{-q}\binom{q}{i} \) by Lemma \ref{L:reduction-q}, we have  
\[ \lambda_k(t^q J_s(t)) = \sum_{i=0}^{q} c_{k-q+2i} \lambda_{k-q+2i}(J_s(t))
	\sim \sum_{i=0}^{q} c_{k-q+2i}k^{-d-2s} = k^{-d-2s} \quad \text{ as } \quad k\to\infty. \]

\end{proof}

Part (ii) of Corollary \ref{C:formula-sym} follows from Lemma \ref{L:t^qJ_s(t)} and we 
give the proof for the identity operator \(K^\EE_s\). 
For \(\gamma>0\), the constant kernel \(\gamma^2\) in \(K^\EE_s\) satisfies \(\lambda_0(\gamma^2)>0\) 
and \(\lambda_k(\gamma^2)=0\) for \(k>0\). By part (i), the kernel 
\(K^\EE_s(t) = k_s(t) + \gamma^2 + s^2 ~ t k_{s-1}(t) + \gamma^2 s^2 k_{s-1}(t)\) is a summation of positive kernels.
For \(0\leq k\leq s\), \(\lambda_k(K^\EE_s)>0\) since \(\lambda_k(k_s)>0\). 
For \(k>s\) and \(k\equiv s \!\mod 2\), we have \(k>s-1\) and \(k\not\equiv s-1 \!\mod 2\). 
Thus, \(\lambda_k(k_{s-1})>0\) and \(\lambda_k(k_{s-1}) \sim k^{-d-2(s-1)}\) while 
\(\lambda_k(k_{s}) = \lambda_k(tk_{s-1}) = 0\) as \(k\to\infty\).
For \(k>s\) and \(k\not\equiv s \!\mod 2\), we have \(\lambda_k(k_{s-1})=0\) while 
\(\lambda_k(k_{s})>0\) such that \(\lambda_k(k_{s}) \sim k^{-d-2s}\) and 
\(\lambda_k(tk_{s-1})>0\) such that \(\lambda_k(tk_{s-1}) \sim k^{-d-2(s-1)}\) as \(k\to \infty\).
By combining these relations together, we obtain the conclusion. Part (iii) follows from part (ii)  using the 
description \eqref{E:RKHS} of the RKHS since the spherical harmonics form an orthonormal basis 
in the Mercer decomposition of \(K^\EE_s,K^\RR_s\) and \(K^\LL_s\).

\subsection{Proof of Corollary \ref{C:sum-partial}}

For each \(i\in[d]\) and \(k\in[d]\), we have 
\[ \partial_i(\sigma'(u^\top x + \gamma v)x_k) 
= (x_k\partial_i + \delta_{ik})\sigma'(u^\top x + \gamma v)
= x_k u_i\sigma''(u^\top x + \gamma v) + \delta_{ik}\sigma'(u^\top x + \gamma v), \]
which implies 
\begin{align*}
	& \partial_i(\sigma'(u^\top x + \gamma v)x_k) \partial_i(\sigma'(u^\top y + \gamma v)y_k) \\
	=& x_k y_k u_i^2\sigma''(u^\top x + \gamma v)\sigma''(u^\top y + \gamma v) 
		+ \delta_{ik}^2 \sigma'(u^\top x + \gamma v)\sigma'(u^\top y + \gamma v) \\
	& + \delta_{ik}x_k u_i\sigma''(u^\top x + \gamma v)\sigma'(u^\top y + \gamma v)
		+ \delta_{ik}y_k u_i\sigma''(u^\top y + \gamma v)\sigma'(u^\top x + \gamma v).
\end{align*}
Then, for the third term in \(\Theta^{(2)}_\RR\), we have 
\begin{align*}
	& \sum_{k=1}^{d} \partial_i(\sigma'(u^\top x + \gamma v)x_k) \partial_i(\sigma'(u^\top y + \gamma v)y_k) \\
	=& x^\top yu_i^2\sigma''(u^\top x + \gamma v)\sigma''(u^\top y + \gamma v)  
		+ \sigma'(u^\top x + \gamma v)\sigma'(u^\top y + \gamma v) \\
	& + x_i u_i\sigma''(u^\top x + \gamma v)\sigma'(u^\top y + \gamma v)
		+ y_i u_i\sigma''(u^\top y + \gamma v)\sigma'(u^\top x + \gamma v).
\end{align*}
By combining with the other terms, we obtain
\begin{align*}
\Theta^{(2)}_{\partial_i}(x,y)  =& \E_{u,v}\left[u_i^2\sigma'(u^\top x + \gamma v)\sigma'(u^\top y + \gamma v)\right] \\
	& + \E_{u,v}\left[ x^\top y u_i^2\sigma''(u^\top x + \gamma v)\sigma''(u^\top y + \gamma v)  
		+ \sigma'(u^\top x + \gamma v)\sigma'(u^\top y + \gamma v) \right. \\
	&+ \left. x_i u_i\sigma''(u^\top x + \gamma v)\sigma'(u^\top y + \gamma v)
		+ y_i u_i\sigma''(u^\top y + \gamma v)\sigma'(u^\top x + \gamma v)	\right] \\
	& + \gamma^2 \E_{u,v}\left[u_i^2\sigma''(u^\top x + \gamma v)\sigma''(u^\top y + \gamma v)\right], 
\end{align*}
from which we obtain that the sum \(\Theta^{(2)}_{\partial}:=\Theta^{(2)}_{\partial_1}+\cdots+\Theta^{(2)}_{\partial_d}\)
is given by
\begin{align*}
	\Theta^{(2)}_{\partial}(x,y) =&  \E_{u,v}\left[u^\top u\sigma'(u^\top x + \gamma v)\sigma'(u^\top y + \gamma v)\right] \\
	&  + \E_{u,v}\left[ x^\top y u^\top u\sigma''(u^\top x + \gamma v)\sigma''(u^\top y + \gamma v)  
		+ \sigma'(u^\top x + \gamma v)\sigma'(u^\top y + \gamma v) \right. \\
	& \quad + \left. u^\top x\sigma''(u^\top x + \gamma v)\sigma'(u^\top y + \gamma v)
		+ u^\top y\sigma''(u^\top y + \gamma v)\sigma'(u^\top x + \gamma v)	\right] \\
	& + \gamma^2 \E_{u,v}\left[u^\top u\sigma''(u^\top x + \gamma v)\sigma''(u^\top y + \gamma v)\right].
\end{align*}
With \(\sigma=\sigma_s\) and \(v=0\), by Lemma \ref{L:k_s(x,y;q)}, we have 
\begin{align*}
	K^{\partial}_s(x,y) & = s^2 k_{s-1}(x,y;1) + (s)_2^2 (x^\top y)k_{s-2}(x,y;1) + s^2 k_{s-1}(x,y) 
		+ 2s^2(s-1)k_{s-1}(x,y) + \gamma^2 (s)_2^2 k_{s-2}(x,y;1) \\
	& = \left(\frac{m_{d,s}}{m_{d,s-1}}+2s-1\right) s^2 k_{s-1}(x,y) 
		+ \frac{m_{d,s-1}}{m_{d,s-2}} (s)_2^2 (x^\top y)k_{s-2}(x,y)
		+ \gamma^2 \frac{m_{d,s-1}}{m_{d,s-2}} (s)_2^2 k_{s-2}(x,y).
\end{align*}
The eigenvalue and the  RKHS of \(K^{\partial}_s\) are the same as Corollary \ref{C:formula-sym}. 
To prove the last equality, we first have \(\HH_{K^{\partial}_s}=H_{K^{\partial_1}_s}+\cdots+H_{K^{\partial_d}_s}\) 
from \cite[Section 6]{Aronszajn:1950RK}. By Proposition \ref{P:AA=g-BB} and the examples that follow, we know that \(H_{K^{\partial_j}_s}=P_{kj}\cdot H_{K^{\partial_k}_s}\) for any \(k,j\in[d]\), which finishes the proof.

\end{document}